\pdfoutput=1
\documentclass{article} %

\usepackage{iclr2027_conference,times}

\usepackage{amsmath,amsfonts,bm}
 \usepackage[most]{tcolorbox}

\def\eqref#1{equation~\ref{#1}}

\def\1{\bm{1}}

\DeclareMathAlphabet{\mathsfit}{\encodingdefault}{\sfdefault}{m}{sl}
\SetMathAlphabet{\mathsfit}{bold}{\encodingdefault}{\sfdefault}{bx}{n}

\DeclareMathOperator*{\argmax}{arg\,max}
\DeclareMathOperator*{\argmin}{arg\,min}

\usepackage{hyperref}

\hypersetup{
  pdftitle={On the SPOT: Enhancing Speculative Decoding for Diffusion Language Models via Commit-Order Prediction},
  pdfauthor={Eliezer Shlomi, Eliron Rahimi, Avi Mendelson, Chaim Baskin},
  pdfsubject={Speculative decoding for diffusion language models: a 0.62M-parameter drafter predicts the commit order, and coupled-noise verification preserves the target distribution at any temperature},
  pdfkeywords={speculative decoding, diffusion language models, block diffusion, commit order, drafter, draft tree, coupled-noise verification, lossless acceleration, SDAR, LLaDA, Dream},
  pdflang={en-US}
}

\usepackage{url}
\usepackage{amsmath,amssymb,amsthm}
\usepackage{booktabs}
\usepackage{multirow}
\usepackage{graphicx}
\usepackage{xcolor}
\usepackage{colortbl}
\usepackage{pifont}
\graphicspath{{figures/}}
\usepackage{algorithm}
\usepackage{algpseudocode}
\usepackage{caption}
\usepackage{xcolor}

\definecolor{algcomment}{RGB}{72,100,135}

\usepackage{enumitem}
\newtheorem{proposition}{Proposition}

\usepackage[most]{tcolorbox}
\usepackage{xcolor}
\usepackage{enumitem}

\definecolor{takeawayblue}{RGB}{39,92,160}
\definecolor{takeawaybg}{RGB}{247,250,255}

\newtcolorbox{keytakeaway}{
    enhanced,
    breakable,
    colback=takeawaybg,
    colframe=takeawayblue,
    boxrule=0pt,
    leftrule=3pt,
    arc=2mm,
    left=3mm,
    right=3mm,
    top=2mm,
    bottom=2mm,
    before skip=7pt,
    after skip=7pt,
    title={Key Takeaway},
    coltitle=takeawayblue,
    fonttitle=\bfseries,
    attach title to upper={\quad},
}

\newtcolorbox{keytakeaways}{
    enhanced,
    breakable,
    colback=takeawaybg,
    colframe=takeawayblue,
    boxrule=0pt,
    leftrule=3pt,
    arc=2mm,
    left=3mm,
    right=3mm,
    top=2mm,
    bottom=2mm,
    before skip=7pt,
    after skip=7pt,
    title={Key Takeaways},
    coltitle=takeawayblue,
    fonttitle=\bfseries,
    attach title to upper={\quad},
}

\newcommand{\sys}{SPOT}

\newcommand{\sysB}{\textbf{SPOT}}
\newcommand{\sysfullB}{\emph{\textbf{S}peculating \textbf{P}ositions
\textbf{O}rder \textbf{T}rajectory}}
\newcommand{\sysSets}{SPOT-Sets}

\newcommand{\SZ}{}

\newcommand{\THRN}[1]{\csname THRN@#1\endcsname}  %
\newcommand{\SEEDN}[1]{\csname SEEDN@#1\endcsname}  %
\expandafter\def\csname THRN@main_sdar8b_both\endcsname{$n{=}350$ per task (HumanEval $164$)}
\expandafter\def\csname THRN@main_sdar4b\endcsname{$n{=}350$ per task (HumanEval $164$)}
\expandafter\def\csname THRN@main_sdar4b_t1\endcsname{$n{=}350$ per task (HumanEval $164$)}
\expandafter\def\csname THRN@main_sdar17\endcsname{$n{=}350$ per task (HumanEval $164$)}
\expandafter\def\csname THRN@main_sdar17_t1\endcsname{$n{=}350$ per task (HumanEval $164$)}
\expandafter\def\csname THRN@main_mini\endcsname{$n{=}100$ per task}
\expandafter\def\csname THRN@main_mini_t1\endcsname{$n{=}100$ per task}
\expandafter\def\csname THRN@transfer\endcsname{$n{=}350$ per task (HumanEval $164$)}
\expandafter\def\csname THRN@sets_a_sdar8b\endcsname{$n{=}350$ per task (HumanEval $164$)}
\expandafter\def\csname THRN@final_sets_ladder\endcsname{$n{=}350$ per task (HumanEval $164$)}
\expandafter\def\csname THRN@final_sets_maxcommit\endcsname{$n{=}350$ per task (HumanEval $164$)}
\expandafter\def\csname THRN@final_sets_transfer_full\endcsname{$n{=}350$ per task (HumanEval $164$)}
\expandafter\def\csname THRN@final_sets_transfer_full_d\endcsname{$n{=}350$ per task (HumanEval $164$)}
\expandafter\def\csname SEEDN@main_sdar8b_both\endcsname{Mean of seeds $0$--$2$}
\expandafter\def\csname SEEDN@tsweep\endcsname{Mean of seeds $0$--$2$ (each seed samples different text; sd of a throughput ratio over seeds at most $0.14$)}
\expandafter\def\csname SEEDN@sets_a_sdar8b\endcsname{Mean of seeds $0$--$2$}

\usetikzlibrary{arrows.meta}

\definecolor{nfecol}{HTML}{2F4F7F}
\definecolor{tpscol}{HTML}{2E7D4F}
\definecolor{oursrow}{RGB}{237,243,250}
\newcommand{\nfe}[1]{\textcolor{nfecol}{#1}}
\newcommand{\tps}[1]{\textcolor{tpscol}{#1}}
\newcommand{\cmark}{\ding{51}}
\newcommand{\xmark}{\ding{55}}

\iclrfinalcopy
\title{On the SPOT: Enhancing Speculative Decoding\\ for Diffusion Language Models\\ via Commit-Order Prediction}
\author{\hspace*{-\tabcolsep}Eliezer Shlomi$^{1,2*}$, Eliron Rahimi$^{1,2*}$, Avi Mendelson$^{1}$, Chaim Baskin$^{2}$\\[3pt]
\hspace*{-\tabcolsep}\normalfont\small $^{1}$Technion -- Israel Institute of Technology\\
\hspace*{-\tabcolsep}\normalfont\small $^{2}$INSIGHT Lab, School of Electrical and Computer Engineering, Ben-Gurion University of the Negev}

\begin{document}
\maketitle

\lhead{Preprint}
\begingroup\renewcommand{\thefootnote}{\fnsymbol{footnote}}
\footnotetext[1]{Equal contribution. Correspondence: \texttt{\{eliezer, elironrahimi\}@campus.technion.ac.il}.}
\endgroup

\begin{abstract}
Diffusion language models (DLMs) promise parallel text generation, but in practice face an unresolved speed-quality trade-off. Exact decoding fills in only one masked position per model pass, while faster multi-token decoding schemes are lossy. Speculative decoding is the standard lossless acceleration for autoregressive models. However, it does not transfer directly to DLMs, which must choose both what to generate and where. We show that this order is the missing piece:
the model's own predictions from the previous pass already give the right token at
over $90\%$ of positions, but not which position comes next. We introduce \sysfullB~(\sysB{}), a $0.62$M-parameter model that reads only the target's output distributions, predicts this order and proposes a tree of candidate continuations, which the target verifies in a single pass by state equality. Coupled-noise match, which shares Gumbel noise between drafter and target, keeps this verification exact at any temperature. This approach also inspired us to introduce \sysSets{} for dynamic threshold decoding, where a pass commits several positions: \sysSets{} drafts whole commit sets together with their tokens. Our comprehensive experiments show that \sys{} cuts target passes by up to $6.8\times$ and reaches up to
$3.4\times$ the throughput of standard decoding on SDAR-8B, matching or beating the calibrated and
training-free drafting baselines in every task and budget at $T{=}0$ and $T{=}1$.
In addition, we show that \sys{}'s learned drafter transfers across models, demonstrating successful transfer across six different DLMs; with a light calibration, the \sysSets{} drafter transfers as well. On top of dynamic threshold decoding, \sysSets{} improves the throughput of SDAR-8B by $59\%$ on average over the threshold decoder alone.

\par\smallskip
{\centering\hypersetup{pdfborder={0 0 0}}\href{https://eliezer318.github.io/SPOT-dLLM/}{\raisebox{-0.14em}{\includegraphics[height=0.95em]{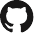}}\,\textcolor[RGB]{54,125,189}{Project Page}}\par}

\end{abstract}
\section{Introduction}
\label{sec:intro}

Diffusion language models (DLMs)~\citep{nie_large_2025,arriola_block_2025}
generate text by iteratively refining masked token sequences rather than predicting tokens left-to-right.
Block-diffusion models make this particularly practical by combining bidirectional denoising
within each block with causal generation across blocks~\citep{arriola_block_2025,cheng_sdar_2026}.
The diffusion paradigm enables new capabilities such as bidirectional attention, iterative refinement, enhanced safety robustness,
and the potential to substantially accelerate language generation via parallel decoding
~\citep{ye_dream_2025,rahimi_step-wise_2026, wu_fast-dllm_2026}.
Yet this parallelism comes with a fundamental speed--quality trade-off.
High-quality diffusion decoding typically uses conservative unmasking and
therefore requires many denoising steps, while faster decoding schemes reveal
multiple positions per step using confidence-based, top-$k$, or dynamic
policies, potentially changing the decoding trajectory and degrading generation
quality~\citep{xu_multi-token_2026,pan_blockspec_2025,sun_psd_2026}. Our main setting is exact
decoding with one commit per forward; we also address dynamic threshold
decoding~\citep{wu_fast-dllm_2026}, which commits every position whose confidence clears a bar.

\begin{figure}[t]
\centering
\includegraphics[width=\linewidth]{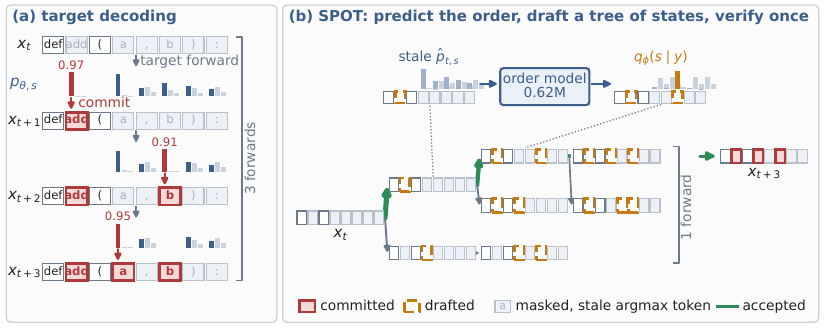}
\caption{\sys{} overview (red: committed; amber dashed: drafted; grey: masked, with its
stale argmax token). \textbf{(a)} The target commits its most confident position per
forward: the tokens are largely known in advance, the \emph{order} is not.
\textbf{(b)} \sys{} reads a hypothetical prefix's stale distributions, predicts the next
position with a $0.62$M order model, grows a tree of states by prefix mass (edge width),
and verifies all of them in one batched forward, committing three positions.}
\label{fig:method}
\end{figure}

For autoregressive language models (AR-LMs), speculative decoding provides an elegant lossless
solution.  A cheap drafter proposes several future tokens, and the target
verifies them in parallel while preserving exactly the distribution of
draft-free decoding~\citep{leviathan_fast_2023,chen_accelerating_2023}.
Recent work has begun bringing this idea to DLMs through self-drafting,
draft graphs, learned residual predictors, and auxiliary draft models
~\citep{gao_self_2025,agrawal_structuring_2026,pan_blockspec_2025,
cui_simsd_2026,xu_multi-token_2026}.  However, speculation changes
fundamentally once attention inside a block is bidirectional. In an autoregressive model, the next position is fixed, so the drafter only
needs to predict \emph{what token comes next}.  In a DLM, committing one
position changes the representations and output distributions of all remaining
masked positions in the block.  The target must therefore decide both
\emph{what token to write} and \emph{where to write it next}.  Its future is
not a fixed sequence of token positions, but a trajectory through partially
filled block states.  Several recent DLM speculative-decoding methods exploit
this state-based view by proposing hypothetical future states and checking
whether the target follows the same transitions
~\citep{gao_self_2025,agrawal_structuring_2026,pan_blockspec_2025,
xu_multi-token_2026}.  This makes verification possible, but leaves two central
questions unresolved:
\textbf{(Q1)} what information should a drafter predict to speculate several
transitions ahead?
and \textbf{(Q2)} how can a tree of speculative states be verified exactly
when the target samples rather than decodes greedily?

Our key finding is that the main difficulty is not token prediction, but
\emph{commit-order prediction}. Over the speculative horizons relevant for
acceleration, the target's stale distributions already provide strong token
hypotheses: their top prediction matches the eventual committed token at
$92$--$100\%$ of measured positions. Predicting which position commits next is
considerably harder. This suggests reusing the target's stale token predictions
while focusing the drafter on commit order, retaining additional high-probability
token hypotheses when confidence is low. Based on this observation, we introduce \sysfullB~(\sysB{}), a lightweight
$0.62$M-parameter drafter that predicts future commit order from the target's
stale output distributions (Figure~\ref{fig:method}). \sys{} is
prefix-conditional: after each hypothetical commitment, it predicts the next
position again and organizes these predictions into a tree of candidate block
states, prioritizing high-probability branches. The target then evaluates the
current and drafted states in a single batched forward and accepts the longest
prefix consistent with its trajectory. The second challenge is verifying speculative trees exactly when the target samples.
Under greedy decoding, a drafted transition is verified by comparing it with the
target's next state. Under sampling, independent draws may disagree even when
they come from the same distribution, causing state-equality verification to
reject valid proposals. We address this with \emph{coupled-noise verification}:
the drafter and target share per-position Gumbel randomness, making the target
transition deterministic conditioned on the shared noise. State equality can
therefore verify an entire speculative tree, while marginalizing over the noise
exactly recovers the target's sampling distribution. This enables exact
tree-based verification at arbitrary temperatures without the sequential
proposal interface required by classical rejection sampling. We draw
inspiration from \sys{} for dynamic threshold decoding, where a target step commits a set of
positions. There the drafter must also choose the tokens, since the stale predictions miss more of
the tokens of later steps ($83\%$ right one step ahead, $44\%$ three steps ahead). \sysSets{} still
starts from the stale distributions, but drafts whole commit sets together with their tokens, also
reading the target's hidden state and the candidates' token embeddings, and grows a tree whose
nodes are whole target steps.

The resulting method substantially reduces both model evaluations and
wall-clock decoding time.  On SDAR-8B, \sys{} uses $16$--$36\%$ fewer target
forwards than Spiffy's calibrated draft graph and achieves up to $1.4\times$
its throughput, with the adaptive-budget configuration reaching up to
$1.7\times$.  Relative to standard draft-free decoding, \sys{}
reduces target passes by up to $6.8\times$ and reaches up to $3.4\times$
throughput.  The gains hold across tasks at both greedy decoding and
$T{=}1$ sampling.  We further evaluate on SDAR-4B, SDAR-1.7B and LLaDA2.0-mini, and test
drafter transfer to two additional targets, LLaDA-8B and Dream-7B, covering
six DLMs in total. On top of the threshold decoder, \sysSets{}, with a drafter trained on general chat only, raises SDAR-8B's throughput to
$5.2$--$9.9\times$ that of standard decoding ($7.9\times$ on average), against $3.3$--$6.1\times$ for the
threshold decoder alone.

Our main contributions are:
\begin{itemize}
\item We analyze speculative decoding for diffusion targets along a drafting and a
verification axis, show that commit-order prediction is the drafting bottleneck while
the target's stale distributions already supply the tokens, show why independent
sampling breaks state-equality verification at $T>0$, and motivate coupling the drafter
and verifier through shared randomness.
\item Based on this analysis, we introduce \sys{}, a $0.62$M-parameter
prefix-conditional commit-order drafter that builds multi-step trees of block states
from stale distributions, paired with coupled-noise verification, which provably
preserves the target's distribution at any sampling temperature; inspired by \sys{},
\sysSets{} drafts the commit sets and their tokens for dynamic threshold decoding, with a tree
over whole target steps.
\item We evaluate \sys{} across tasks, temperatures, block widths and six diffusion
language models: it beats or matches calibrated and training-free baselines in every setting at $T{=}0$ and
$T{=}1$, cuts target passes by up to $6.8\times$ on SDAR-8B, and its checkpoints transfer between
models, including LLaDA-8B and Dream-7B, without retraining; on top of dynamic threshold decoding, \sysSets{} reaches
$7.9\times$ the throughput of standard decoding on SDAR-8B on average, against $5.0\times$ for the threshold decoder alone.
\end{itemize}
\section{Related Work}
\label{sec:related}

\paragraph{Speculative decoding and DLMs.}
Speculative decoding was introduced for autoregressive (AR) language models, where a
lightweight drafter proposes tokens that the target verifies in parallel while preserving its  distribution~\citep{leviathan_fast_2023,chen_accelerating_2023}. Later work improved the drafter with auxiliary decoding heads, learned drafters and dynamically built draft trees~\citep{cai_medusa_2024,li_eagle_2024,li_eagle-2_2024,li_eagle-3_2025}, and a complementary line uses diffusion models as drafters for AR targets~\citep{christopher_speculative_2025,li_diffuspec_2026,chen_dflash_2026,ringel_accelerating_2026}. Drafters accept more on the tasks they are trained
on~\citep{hong_training_2025,zbib_taps_2026} and lose speed under distribution shift~\citep{wang_when_2026}, so we
train our main \sysSets{} drafter on general chat, not on the train splits of our benchmarks. DLMs instead generate text by denoising~\citep{austin_structured_2021,nie_large_2025,ye_dream_2025}; block-causal
models such as SDAR make this practical block by
block~\citep{arriola_block_2025,cheng_sdar_2026}, while approximate caching and
parallel unmasking accelerate decoding at a cost in
fidelity~\citep{wu_fast-dllm_2026,sun_psd_2026}. Fast-dLLM's dynamic threshold commits every position whose confidence clears a bar; \sysSets{} (Section~\ref{sec:method-sets}) speculates on top of this rule; with exact acceptance it leaves the output unchanged. Because attention inside a block is
bidirectional, the target decides not only what to write but where, so the commit order is an extra dimension of decoding and speculative decoding does not transfer directly. For verification we build on communication-free coupling: \citet{daliri_coupling_2025} show that two parties sampling from $P$ and $Q$ under
shared Gumbel noise agree with probability at least $(1-D_{TV})/(1+D_{TV})$, and apply
this to \emph{drafter-invariant} speculative decoding. We carry the construction to diffusion decoding, where per-\emph{position} shared noise makes the target's sampled
commitment at any hypothetical block state a deterministic function of that state (Section~\ref{sec:method-verification}).
\paragraph{Speculative decoding for diffusion targets.}
SSD~\citep{gao_self_2025} and Spiffy~\citep{agrawal_structuring_2026} study speculative decoding with DLMs targets: SSD orders positions by confidence into a self-drafted chain and verifies it greedily, whereas Spiffy drafts states along an offline-calibrated graph and verifies them blockwise. Both are training-free and greedy. MRP~\citep{xu_multi-token_2026} learns a residual head that extrapolates the target's
logits a few denoising steps ahead and accepts drafted tokens in one pass, greedy only (Section~\ref{sec:exp-transfer} sets its reported gains beside ours). BlockSpec~\citep{pan_blockspec_2025} extends training-free self-speculation to trees on decoding trajectories, and SimSD~\citep{cui_simsd_2026} drafts with a same-family
sibling model but verifies approximately, so it is not lossless and is not added to our
comparison. In contrast, our drafter learns the commit order, builds multi-step trees,
verifies exactly under sampling, and transfers across targets without retraining; with \sysSets{} it also drafts the commit sets of threshold decoding.
\section{Analysis of Block-Diffusion Speculative Decoding}
\label{sec:observations}

\paragraph{Setting.}
The target decodes a block of $L$ masked positions one commit per forward (we write b32 for $L{=}32$). A draft is
a hypothetical block state with a few more positions filled by the drafter, and $D$
drafts are verified at once by appending each behind the cached prefix as its own copy
of the block, so a single forward over $(1{+}D)L$ positions returns the target's next
commit at the root and at every draft. A draft is accepted iff it equals that commit at
its parent, so the accepted set is one path and the output cannot change. A draft
therefore costs a whole block rather than a token: budgets are small, and target
forwards (NFE) and wall clock diverge (Appendix~\ref{app:layout}). Two measurements on
SDAR-8B shape the method: \emph{what} a drafter must predict, and \emph{how} its
proposals must be verified once the target samples.

\subsection{Draft-Trajectory Dynamics}
\label{sec:obs-oracle}
\begin{keytakeaway}
Knowing the future position order is almost as effective as knowing the full draft trajectory: the commit order carries most of the useful drafting information.
\end{keytakeaway}
A draft must predict which position the target commits next and what token it writes.
To isolate their value we evaluate \emph{live oracles} that peek at the target
continuation and propose (i) true \emph{positions} filled with stale-argmax tokens,
(ii) true \emph{tokens} ordered by stale confidence, or (iii) both; each bounds the
gain obtainable from one information source under standard verification.

\begin{figure}[t]
\centering
\includegraphics[width=0.98\linewidth]{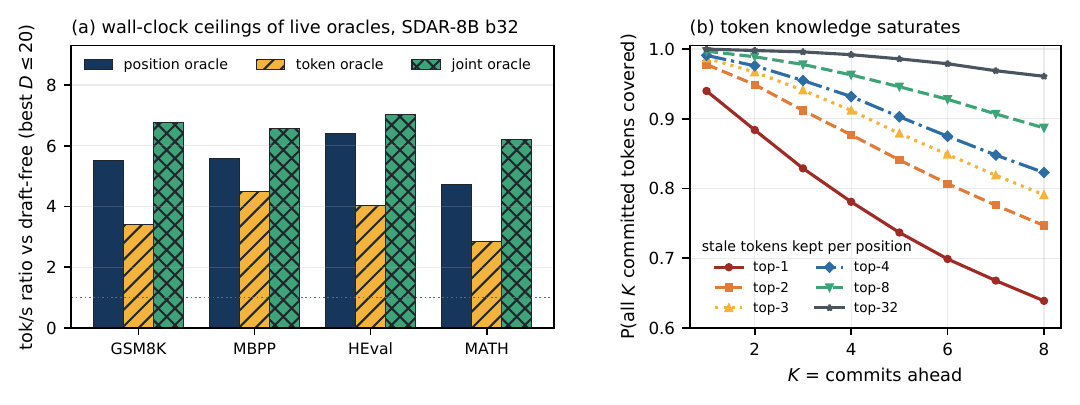}
\caption{\textbf{(a) Live oracle ceilings} on SDAR-8B (b32): best throughput ratio
over draft-free decoding for $D \le 20$; positions are worth more than tokens on every task, and a perfect
drafter reaches $6.2$--$7.0\times$ (full grid in Table~\ref{tab:oracle}).
\textbf{(b) Stale coverage}: probability that all committed tokens over the next $K$
steps lie within the stale top-$t$ candidates.}
\label{fig:oracle}
\end{figure}

Figure~\ref{fig:oracle}(a) and Table~\ref{tab:oracle} (Appendix~\ref{app:ext-oracle})
show that position knowledge dominates on every task and keeps paying across the
tested budgets (NFE reduction $5.4\times \to 7.2\times$ from $D8$ to $D20$ on GSM8K,
$5.7\times \to 8.2\times$ on MBPP), whereas true tokens yield only $3.4$--$6.5\times$
at $D20$. The stale distributions already carry token identity: the stale argmax
matches the true token in $92\%$ of first-block commits and $\ge 90\%$ thereafter,
and eight stale candidates per position cover $89\%$ of entire eight-commit
continuations (Figure~\ref{fig:oracle}(b)). They fail to predict order: the
probability that the previous round's top-confidence position commits next drops to
$3\%$ by block $9$ (Appendix~\ref{app:mechanism}). \emph{What} the model writes is
nearly known; \emph{which position} it writes next is not. This holds for single commits; when a
step commits several positions at once (dynamic threshold decoding), the stale tokens of later steps
are less reliable, which \sysSets{} addresses (Section~\ref{sec:method-sets}).

NFE does not map directly to wall clock, since every drafted round forwards
$(1{+}D)L$ positions, so we charge each oracle its measured verification cost
(lookahead subtracted). By learning the
missing order, \sys{}
predicts the next commit with $75\%$ accuracy on held-out trajectories
(Appendix~\ref{app:training}) against $22\%$ for the stale-confidence order. Commit-order prediction is
therefore the primary information bottleneck for drafting on this target, token
knowledge a complementary second axis, and verification cost a separate limit on
realized speed.

\subsection{Verification Dynamics Across Sampling Temperatures}
\label{sec:obs-temp}
\begin{keytakeaways}
\begin{itemize}[leftmargin=4mm, itemsep=0pt, topsep=1pt]
\item \textbf{Exactness.} Coupled and uncoupled matching preserve the target
sampling distribution, whereas greedy matching increasingly drifts as temperature rises.
\item \textbf{Acceptance.} Coupling becomes increasingly important at higher
temperatures, where independent matching rapidly loses acceptance.
\item \textbf{High-temperature regime.} Target quality deteriorates sharply beyond
$T{\approx}1.5$ and becomes the dominant limitation.
\end{itemize}
\end{keytakeaways}

At $T{=}0$ the acceptance test above is \emph{state equality} against a
deterministic commit. At $T{>}0$ the commitment is a random draw, and there are
three ways to keep verifying by equality: a \emph{greedy
match} accepts a drafted state iff it equals the target's \emph{argmax} commitment,
as the $T{=}0$ verifier would if simply run at temperature; an \emph{uncoupled
match} lets the target sample independently and accepts only if the sample happens
to equal the proposal; a \emph{coupled-noise match} shares the sampler's Gumbel
noise between drafter and verifier, so that the target's draw is again a
deterministic function of the state (Section~\ref{sec:method-verification}). We ran
all three on \sys{}, Spiffy and the BlockSpec-style tree at
$T \in \{0.5, 1, 1.5, 2, 2.5\}$ (Table~\ref{tab:tsweep}, Appendix~\ref{app:ext-ablations}).

\begin{table}[t]
\centering\small
\caption{Temperature sweep on SDAR-8B MBPP ($n{=}100$). Draft-free columns: target accuracy, argmax fraction, mean committed-token probability. \sys{} $D8$: NFE and throughput against the $T$-matched draft-free control. Greedy-match columns show distribution drift. \SEEDN{tsweep}. $^{\dagger}$Greedy match is not distribution-preserving, so its NFE is not comparable.}
\label{tab:tsweep}
\setlength{\tabcolsep}{3.5pt}
\begin{tabular}{lcccccccccc}
\toprule
& \multicolumn{3}{c}{draft-free ($D0$)} & \multicolumn{3}{c}{\sys{} $D8$ NFE (tok/s ratio)} & \multicolumn{2}{c}{greedy \sys{}} & \multicolumn{2}{c}{Spiffy $D8$ NFE} \\
\cmidrule(lr){2-4}\cmidrule(lr){5-7}\cmidrule(lr){8-9}\cmidrule(lr){10-11}
$T$ & acc & argmax & mean $p$ & coupled & uncoupled & greedy$^{\dagger}$ & acc & argmax & coupled & uncoupled \\
\midrule
0.5 & 0.36 & 0.998 & 0.994 & 56.4\,(3.70) & 56.4\,(3.72) & 56.0\,(3.67) & 0.35 & 0.999 & 83.2 & 83.0 \\
1.0 & 0.42 & 0.990 & 0.968 & 58.5\,(3.56) & 58.6\,(3.56) & 56.1\,(3.68) & 0.41 & 0.995 & 85.3 & 85.5 \\
1.5 & 0.11 & 0.893 & 0.841 & 71.2\,(2.98) & 79.1\,(2.69) & 65.0\,(3.24) & 0.15 & 0.925 & 92.8 & 106.1 \\
2.0 & 0.00 & 0.062 & 0.054 & 187.2\,(1.21) & 257.5\,(0.88) & 183.7\,(1.23) & 0.00 & 0.072 & 114.8 & 259.4 \\
2.5 & 0.00 & 0.002 & 0.000 & 198.9\,(1.14) & 267.0\,(0.85) & 198.9\,(1.14) & 0.00 & 0.002 & 109.9 & 267.0 \\
\bottomrule
\end{tabular}

\end{table}

\textbf{(i) Coupled and uncoupled match are exact; greedy match is not.} In both
exact regimes the emitted tokens track the draft-free statistics within sampling
noise: over all three drafters at $D8$ and $D20$ and $T \le 1.5$ (Table~\ref{tab:tsweep-full}), the emitted argmax
fraction stays within $0.006$ of the draft-free value, the mean committed-token
probability within $0.006$, and accuracy within $0.05$ ($n{=}100$). Greedy match drifts, and the drift grows with
$T$: at $T{=}1.5$ \sys{} emits accuracy $0.15$--$0.16$ where the model samples $0.11$ and picks
the argmax at $92.2$--$92.5\%$ of commits against the model's $89.4\%$, while Spiffy and the tree, which accept less, stay within $0.2$ points; the drift is
proportional to acceptance, so it makes the \emph{best} drafter the \emph{most}
biased. \textbf{(ii) Coupling keeps acceptance alive as the target's
samples move away from its argmax.} At $T \le 1$ the sampled token is the argmax
$99$--$99.8\%$ of the time and the two verifiers agree within $2\%$ in NFE; at
$T{=}1.5$ uncoupled costs $11$--$18\%$ more NFE for \sys{} and Spiffy ($71.2 \to
79.1$ and $92.8 \to 106.1$ at $D8$); at $T{=}2$ uncoupled accepts almost nothing ($257.5$ NFE
against $267.1$ draft-free) while coupled still drafts ($187.2$). The mechanism is the
coupling bound of \citet{daliri_coupling_2025}: a stale-informed proposal survives
with probability at least $(1-\delta)/(1+\delta)$ in the total variation $\delta$
between stale and fresh conditionals, whereas an independent sample equals the
argmax only with probability $p_{\max}$. \textbf{(iii) At high temperatures, target quality becomes the limiting factor.}
Draft-free accuracy already drops sharply to $0.11$ at $T{=}1.5$ and reaches
$0.00$ at $T \ge 2$, leaving little useful generation quality to preserve.

\section{Method}
\label{sec:method}

\subsection{Setup}
\label{sec:method-setup}

Let $p_\theta$ be a block-causal diffusion language model
\citep{cheng_sdar_2026, bie_llada20_2025}.  Within an active block of length $L$, a state
$x\in\mathcal{X}=(\mathcal{V}\cup\{\textsc{m}\})^L$ assigns each
position either a token in vocabulary $\mathcal{V}$ or the mask symbol
\textsc{m}, and
$\mathcal{M}(x)=\{s:x_s=\textsc{m}\}$ denotes its masked positions.
Given the completed causal context $C$, one target forward returns
$p_{\theta,s}(\cdot\mid x,C)$ for every $s\in\mathcal{M}(x)$. Let 
    $s^\star(x)
    =
    \argmax_{s\in\mathcal{M}(x)}
    \max_{v\in\mathcal{V}}
    p_{\theta,s}(v\mid x,C)$
be the highest-confidence masked position
and $x\oplus(s,v)$ be
the state obtained by filling position $s$ with token $v$.  At $T=0$,
the target commits
$v^\star_0(x)=\argmax_v
p_{\theta,s^\star(x)}(v\mid x,C)$, giving the deterministic transition
\begin{equation}
\label{eq:phi-greedy}
    \Phi_0(x)
    =
    x\oplus
    \bigl(s^\star(x),v^\star_0(x)\bigr).
\end{equation}
At $T>0$, the committed token is instead sampled from the
temperature-adjusted conditional
\begin{equation}
\label{eq:tempered-dist}
    \pi_{T,s}(v\mid x,C)
    =
    \frac{
        p_{\theta,s}(v\mid x,C)^{1/T}
    }{
        \sum_{u\in\mathcal{V}}
        p_{\theta,s}(u\mid x,C)^{1/T}
    }.
\end{equation}

A \emph{drafter} $\mathcal{D}$ proposes a finite tree of hypothetical
future states rooted at the current state, with each edge representing
one position--token commitment.  The target evaluates the root and all
drafted states in one batched forward.  The verifier accepts an edge
$(y,z)$ only when $z$ equals the target's own next state from $y$,
which we call \emph{state-equality verification}; because the target
has a unique next commitment, the accepted subset of a draft tree is
always a single path. Let $X_T^{\mathrm{DF}}$ and
$X_{T,\mathcal{D}}^{\mathrm{Spec}}$ denote the draft-free and
speculative committed trajectories.  We call a verifier
\emph{distribution preserving} when $ X_{T,\mathcal{D}}^{\mathrm{Spec}}
    \overset{d}{=}
    X_T^{\mathrm{DF}}.$

\subsection{The drafter: learned commit order}
\label{sec:method-drafter}

\paragraph{Order model.}
At speculative round $t$, let $\widehat p_{t,s}(\cdot)$ denote the
most recently available target conditional at masked position $s$.
These conditionals are \emph{stale}: hypothetical future commitments
change the true target distribution, but no additional target forward
is performed while drafting.  Our drafter is a three-layer transformer
encoder ($d_{\mathrm{model}}=128$, $8$ heads, $0.62$M parameters).
Each masked position is represented using its stale top-$32$
probabilities (raw and log), eight summary statistics, and two path
features encoding the positions already committed along the
hypothetical prefix and its depth.  Given a hypothetical state $y$,
the model predicts $q_\phi(s\mid y)$,
    where
    $s\in\mathcal M(y),$ a distribution over the target's next committed position.  The same
prefix-conditional predictor is learned in
Algorithm~\ref{alg:drafter-training} and recursively queried during
Algorithm~\ref{alg:drafter-inference}.  The drafter uses no target hidden states, token embeddings, or text, and can therefore be easily evaluated across different DLMs.

\paragraph{Training.}
We train $q_\phi$ entirely from draft-free target trajectories
(Algorithm~\ref{alg:drafter-training}).  Let
$(s_t^\star,s_{t+1}^\star,\ldots)$ denote the target commit sequence
following decoding state $x_t$.  For each prefix depth $d$, the first
$d$ commitments define a hypothetical state $y_{t,d}$ while the stale
conditionals $\widehat p_t$ remain fixed; the next target commitment
$s_{t+d}^\star$ is used as supervision.  This produces
prefix-conditioned examples
$(\widehat p_t,y_{t,d},s_{t+d}^\star)$ that match the states encountered
during recursive drafting.  We optimize $
    \mathcal L_{\mathrm{order}}(\phi)
    =
    -\mathbb E_{(\widehat p,y,s^\star)\sim\mathcal E}
    \log q_\phi(s^\star\mid y),
$ where $\mathcal E$ is the resulting training set.  Full data-generation
and optimization details are given in Appendix~\ref{app:training}.

\paragraph{Inference.}
At inference (Algorithm~\ref{alg:drafter-inference}), drafting starts
from the current target state $x_t$ and stale conditionals
$\widehat p_t$.  For every candidate prefix $y$, the order model scores
the remaining positions using $q_\phi(\cdot\mid y)$, and each position
is paired with token hypotheses from the stale target conditional.
Let $\mathcal H_t(s)$ denote the token-hypothesis set for position $s$.
At $T=0$, its primary hypothesis is $
    \widetilde v_0(s)
    =
    \argmax_v \widehat p_{t,s}(v),
$ with the runner-up additionally included when the stale top-$1$
probability is below $0.6$ (the token hedge).  At $T>0$, $\mathcal H_t(s)$ uses the
coupled stale draw of Eq.~\ref{eq:stale-gumbel}, introduced formally in
Section~\ref{sec:method-verification}.

For a candidate child $z=y\oplus(s,v)$, define its \emph{prefix mass}
recursively as
\begin{equation}
\label{eq:prefix-mass}
    m(z)
    =
    m(y)\,
    q_\phi(s\mid y)\,
    \widehat p_{t,s}(v),
    \qquad
    m(x_t)=1.
\end{equation}
Algorithm~\ref{alg:drafter-inference} uses DDTree, a level-batched
best-first search adapted from \citet{ringel_accelerating_2026}, to
allocate budget $D$ to the highest-mass candidate states.  Since every
factor in Eq.~\ref{eq:prefix-mass} lies in $[0,1]$, a child's mass
cannot exceed its parent's, allowing selected nodes to be expanded
level-by-level while recovering the same top-$D$ fixpoint as
sequential best-first search.  The final model applies no calibration temperature to
$q_\phi$ before the search.

\paragraph{Adaptive budget and padding.}
Algorithm~\ref{alg:drafter-inference} treats $D$ as a hard upper bound.
We additionally apply a prefix-mass floor $\gamma$, discarding
candidates with $m(z)<\gamma$ and thereby adapting the realized
verification budget to drafter confidence.  Independently, the target
batch may be padded with dummy replicas to maintain a fixed $(1+D)L$
shape and reduce batch-shape-dependent numerical variation.  We report the unpadded $\gamma=0$ configuration as \sys{} and the unpadded
configuration with a floor calibrated once per target ($\gamma=0.02$ on SDAR-8B;
Appendix~\ref{app:gamma-calib}) as \sys{}-$\gamma$. The complete budget,
mass-floor, and padding ablations are given in
Section~\ref{sec:exp-ablation}.

\subsection{Diffusion-block verification by coupled-noise match}
\label{sec:method-verification}

\paragraph{Greedy verification.}
At $T=0$, the target transition $\Phi_0$ in
Eq.~\ref{eq:phi-greedy} is deterministic. Therefore, a drafted edge $(y,z)$ is
 accepted exactly when $z=\Phi_0(y)$. Proposition~\ref{prop:greedy} is elementary and included both for completeness and as a basis for the proof of Proposition~\ref{prop:match}.

\begin{proposition}[Greedy exactness]
\label{prop:greedy}
For any drafter $\mathcal{D}$ and any finite chain or tree draft,
state-equality verification at $T=0$ produces exactly the draft-free
target trajectory:     $X^{\mathrm{Spec}}_{0,\mathcal{D}}
    =
    X^{\mathrm{DF}}_0 .$

\end{proposition}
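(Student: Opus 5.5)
The plan is an induction on the number of speculative rounds, using only the facts that $\Phi_0$ in Eq.~\ref{eq:phi-greedy} is a deterministic function of the state (given the fixed causal context $C$) and that the batched forward returns the same conditionals $p_{\theta,s}(\cdot\mid y,C)$ at every drafted state $y$ as a standalone forward at $y$ would. Write the draft-free trajectory as $x_0, x_1=\Phi_0(x_0), x_2=\Phi_0(x_1),\dots$, ending when the block is full. We will show that the sequence of committed states produced by speculative decoding, read off round by round, is exactly this sequence.

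\textbf{Single round.} Fix a round whose root is $x_k$ and let the drafter propose any finite tree rooted at $x_k$. We first show by induction on depth that every node $y$ on the accepted path at depth $j$ satisfies $y=x_{k+j}$. The base case $j=0$ is the root itself. For the step, an edge $(y,z)$ with $y=x_{k+j}$ is accepted only if $z=\Phi_0(y)=\Phi_0(x_{k+j})=x_{k+j+1}$.

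Two further facts complete the round. First, uniqueness: since $\Phi_0(y)$ is a single state, at most one child of $y$ can pass, so the accepted set is a single path, as noted in Section~\ref{sec:method-setup}. Second, the round always advances. The target's own commitment at the deepest accepted node $y=x_{k+a}$ is computed in the same forward, so the round emits $x_{k+1},\dots,x_{k+a}$ and then $\Phi_0(x_{k+a})=x_{k+a+1}$. The next root is therefore $x_{k+a+1}$, with $a\ge 0$. If no draft is accepted, the round reduces to one draft-free step.

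\textbf{Across rounds.} An outer induction on rounds follows directly. Each round starts at some $x_k$ and ends at $x_{k+a+1}$ with $a\ge0$, and everything emitted in between is a contiguous stretch of the draft-free sequence. Because each round advances by at least one commit and the block has only $L$ positions, the process terminates. It terminates at the same full block as draft-free decoding, so $X^{\mathrm{Spec}}_{0,\mathcal D}=X^{\mathrm{DF}}_0$. Across blocks the causal context $C$ is then identical, and the argument repeats block by block. Nothing in the argument uses any property of the drafter, so it holds for any $\mathcal D$ and any chain or tree.

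\textbf{Main obstacle.} The only non-formal point is the assumption that the target's output at a drafted copy equals its output in a draft-free forward at that state. This requires that the copies in the $(1{+}D)L$ batch attend only to the cached prefix and to themselves, and that ties in the $\argmax$ defining $s^\star$ and $v^\star_0$ are broken by a fixed deterministic rule. I would state both as standing assumptions of the exact-arithmetic model. The batch-shape numerical variation mentioned under padding falls outside this idealization.
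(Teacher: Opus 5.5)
Your proof is correct and is essentially the paper's argument. The paper runs a single induction on the number of committed transitions, using that $\Phi_0$ gives a unique successor which is either accepted as a drafted child or committed as the fallback; your nested induction (by depth within a round, then across rounds) regroups that same step, and your two standing assumptions are the paper's own idealized-target-evaluation and fixed-tie-breaking conventions. One small point: when the accepted path already reaches a full block, no fallback commit is made, so the round then ends at $x_{k+a}$ rather than at $x_{k+a+1}$; this does not affect the conclusion.
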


\paragraph{Sampling with shared Gumbel noise.}
At $T>0$, state equality cannot use independent drafter and verifier
samples: even identical distributions would then produce different
tokens with nonzero probability.  Instead, we couple their sampling
through shared randomness.  For every absolute output position $a$ and
vocabulary item $v$, draw $G_{a,v} \overset{\mathrm{iid}}{\sim} \mathrm{Gumbel}(0,1),$ and let $G=\{G_{a,v}\}_{a,v}$ denote the resulting table.  The table is
shared by drafter and verifier and indexed by absolute position, so the
same position uses the same Gumbel row in both; $a(s)$ denotes the absolute output position of block position $s$.

For target state $x$, the target draw is
    $v^\star_G(x)
    =
    \argmax_{v\in\mathcal{V}}
    \left[
        \frac{1}{T}
        \log p_{\theta,s^\star(x)}(v\mid x,C)
        +
        G_{a(s^\star(x)),v}
    \right]$, 
which defines the transition $\Phi_G(x)
    =
    x\oplus
    \bigl(
        s^\star(x),v^\star_G(x)
    \bigr).$ By the Gumbel--argmax identity,
$v^\star_G(x)\sim
\pi_{T,s^\star(x)}(\cdot\mid x,C)$. The drafter applies the \emph{same} noise to its stale conditional:
\begin{equation}
\label{eq:stale-gumbel}
    \widetilde v_G(s)
    =
    \argmax_{v\in\mathcal{V}}
    \left[
        \frac{1}{T}
        \log \widehat p_{t,s}(v)
        +
        G_{a(s),v}
    \right].
\end{equation}
Thus, whenever the stale and fresh conditionals have the same
Gumbel-perturbed argmax, drafter and target propose exactly the same
token.  Conditioned on $G$, $\Phi_G$ is deterministic, so verification
again reduces to state equality: $z=\Phi_G(y).$

\begin{proposition}[Coupled-noise exactness]
\label{prop:match}
For any drafter $\mathcal D$ and $T>0$, the Gumbel-driven draft-free
trajectory satisfies
$X^{\mathrm{DF}}_T(G)\overset{d}{=}X^{\mathrm{DF}}_T$, and for every
realization $g$ of $G$,
$X^{\mathrm{Spec}}_{T,\mathcal D}(g)=X^{\mathrm{DF}}_T(g)$.
Consequently,
$X^{\mathrm{Spec}}_{T,\mathcal D}\overset{d}{=}X^{\mathrm{DF}}_T$.
\end{proposition}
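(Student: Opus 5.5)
The plan has three parts: prove the distributional identity for the Gumbel-driven draft-free chain, prove the pathwise identity by reusing the argument of Proposition~\ref{prop:greedy}, and then combine the two.

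\textbf{Step 1: the Gumbel-driven draft-free chain has the right law.} Write $x_0$ for the initial state and set $x_{k+1}=\Phi_g(x_k)$. We show by induction over commits (and over blocks, with the causal context $C$ updated as blocks complete) that this chain has the law of the sampled draft-free chain. The key observation is that each absolute position is committed at most once along a trajectory. Hence the commit at step $k$ reads the Gumbel row $G_{a(s^\star(x_k)),\cdot}$. This row is determined by $x_k$, but it has never been consulted before step $k$. Formally, conditioned on the history $x_0,\dots,x_k$, the unused rows are still i.i.d.\ Gumbel and independent of that history. The history depends only on rows at already-committed positions, and $s^\star(x_k)$ is one of the uncommitted positions, chosen as a function of the history.

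By the Gumbel--argmax identity, $v^\star_g(x_k)$ given the history is then distributed as $\pi_{T,s^\star(x_k)}(\cdot\mid x_k,C)$. This is exactly the transition kernel of $X^{\mathrm{DF}}_T$. Matching kernels and initial states give $X^{\mathrm{DF}}_T(G)\overset{d}{=}X^{\mathrm{DF}}_T$.

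\textbf{Step 2: the pathwise identity for a fixed $g$.} Fix a realization $g$. Then $\Phi_g$ is a deterministic map on states, so the argument of Proposition~\ref{prop:greedy} applies verbatim with $\Phi_0$ replaced by $\Phi_g$. In each round the verifier evaluates $\Phi_g$ at the root and at every drafted node in the single forward. It accepts an edge $(y,z)$ iff $z=\Phi_g(y)$.

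Starting at the root $x$, an induction along the accepted path shows that every accepted node is $\Phi_g^{j}(x)$ for some $j$. The round then ends by emitting the target's own commit $\Phi_g$ at the deepest accepted node, which is available from the same forward. Consequently each round advances the trajectory by one or more steps of the iteration $x\mapsto\Phi_g(x)$.

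Two facts make this work. First, the drafter's choices, including its stale coupled draws from Eq.~\ref{eq:stale-gumbel}, affect only which nodes are proposed, never the value of $\Phi_g$ at a node. Second, $\Phi_g(y)$ depends only on $y$, $C$ and $g$, and not on the round in which $y$ is evaluated. Concatenating rounds yields $X^{\mathrm{Spec}}_{T,\mathcal D}(g)=X^{\mathrm{DF}}_T(g)$.

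\textbf{Step 3 and the main obstacle.} Combining the two steps, $X^{\mathrm{Spec}}_{T,\mathcal D}=X^{\mathrm{DF}}_T(G)$ holds almost surely, so $X^{\mathrm{Spec}}_{T,\mathcal D}\overset{d}{=}X^{\mathrm{DF}}_T$.

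I expect the main obstacle to be the conditional-independence claim in Step 1. Because the committed position $s^\star(x_k)$ is itself random, we must check that the adaptively chosen row is fresh. This rests on two points. Masked positions are never revisited, so no row is read twice by the target. The drafter's reads of rows at uncommitted positions do not matter, since Step 1 concerns only the draft-free chain, and Step 2 removes any dependence on the drafter pathwise. A secondary point is that the target's forward at a state must be deterministic, with no batch-dependent numerics. The proof assumes this exact arithmetic, consistent with the padding remark.
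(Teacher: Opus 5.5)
Your proposal is correct and follows essentially the same three-step route as the paper's proof. It obtains the draft-free law from the Gumbel--argmax identity together with the freshness of the adaptively selected row $G_{a(s^\star(x_k)),\cdot}$, gets the pathwise identity by reusing the Proposition~\ref{prop:greedy} induction with $\Phi_g$ in place of $\Phi_0$, and then marginalizes over $G$; your caveats on batch-independent target evaluation and on induction across blocks correspond to the paper's idealized-evaluation assumption and its block-to-block remark.
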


Propositions~\ref{prop:greedy} and~\ref{prop:match} therefore separate
\emph{correctness} from \emph{draft quality}: the verifier always
preserves the target trajectory or distribution, while the drafter
determines only how many future target transitions are already present
in the tree and can be accepted in one target forward.  The complete
formal setup and proofs are given in Appendix~\ref{app:proofs}. For a drafted position $s$, let $\widehat\pi_{T,s}$ and $\pi_{T,s}$
denote the stale and fresh temperature-adjusted conditionals, and $v_G(s)$ the target's own draw at $s$ with
the same row $G_{a(s),\cdot}$.  The
shared-Gumbel coupling of \citet{daliri_coupling_2025} satisfies $\Pr[
        \widetilde v_G(s)=v_G(s)
    ]
    \ge
    \frac{1-\delta_T}{1+\delta_T},$
    where
    $\delta_T
    =
    D_{\mathrm{TV}}
    \!\left(
        \widehat\pi_{T,s},
        \pi_{T,s}
    \right).$ Hence coupled verification becomes more effective as the stale and
fresh sampling distributions become closer.

\paragraph{Relation to rejection sampling.}
When the drafter exposes an explicit sequential proposal distribution,
exact verification can alternatively be performed by rejection
sampling (RS).  Appendix~\ref{app:rs} formalizes this alternative, proves
its exactness and maximal per-link acceptance probability
$1-D_{\mathrm{TV}}(\pi_k,q_k)$, and validates its implementation.
Classical per-link RS assumes a sequential proposal interface, whereas
coupled-noise match requires only state equality and therefore applies
directly to our batched draft trees.

\subsection{\sysSets{}: drafting commit sets for threshold decoding}
\label{sec:method-sets}
Faster decoders commit, in each forward, every masked position whose confidence
$c_s(x)=\max_v p_{\theta,s}(v\mid x,C)$ clears a bar $\tau$, or the most confident one if none
does~\citep{wu_fast-dllm_2026}. Given the shared noise $G$, this transition $\Phi_\tau$ is still deterministic, so
state-equality verification and Propositions~\ref{prop:greedy} and~\ref{prop:match} carry over to it. Inspired by
\sys{}, \sysSets{} drafts its future steps, each a whole set of positions whose tokens the stale argmax often misses
($83/60/44\%$ right one, two and three steps ahead; Appendix~\ref{app:sets-drafter}). Its set drafter ($5.1$M
parameters) therefore also reads the embeddings of the $16$ most probable stale candidates and the target's last hidden
state, and predicts for each position whether it clears $\tau$ at the next step and which candidate it takes. Tree
nodes are whole steps, searched with the same prefix mass, floor $\gamma$ and cap $D$ (\sysSets{}-$\gamma$); a draft
that completes a block also carries the next one (cross-block drafting), and the next block's cache-refresh forward
verifies drafts too (draft refresh; Figure~\ref{fig:sets-round}).

\paragraph{Training data.} The drafter learns from the target's own threshold-decoded trajectories ($\tau{=}0.9$,
$T{=}0$) on general chat prompts only, about $1$M examples in all (Appendix~\ref{app:sets-drafter}). The prompts are
the first user turn of ShareGPT and UltraChat-200K~\citep{ding_enhancing_2023} conversations, half from each, the
datasets EAGLE-3 trains on~\citep{li_eagle-3_2025}. Keeping the benchmarks' own data out matters, since drafters accept
more on the tasks they are trained on and lose speed under distribution shift~\citep{hong_training_2025,zbib_taps_2026,wang_when_2026};
Appendix~\ref{app:sets-data} compares a drafter trained on task prompts instead.

\paragraph{Max-commit.} Max-commit relaxes state equality and follows the accepted path with the most commits
(Figure~\ref{fig:sets-maxcommit}). Intuitively, its cut rule can only reduce the loss of threshold decoding, as it
makes decoding stricter: it accepts a drafted step only if it is the set a higher bar $\tau'\ge\tau$ would commit, a
step that a stricter threshold decoder would also take. Our main rows use the looser product rule, which accepts a step
whose tokens are the target's top choices and whose confidences have a geometric mean of at least $\tau$; it commits
more per forward but no longer reproduces the threshold decoder's output, so we also report the exact variant
(Appendix~\ref{app:sets-maxcommit}).

\begin{figure}[t]
\centering
\includegraphics[width=\linewidth]{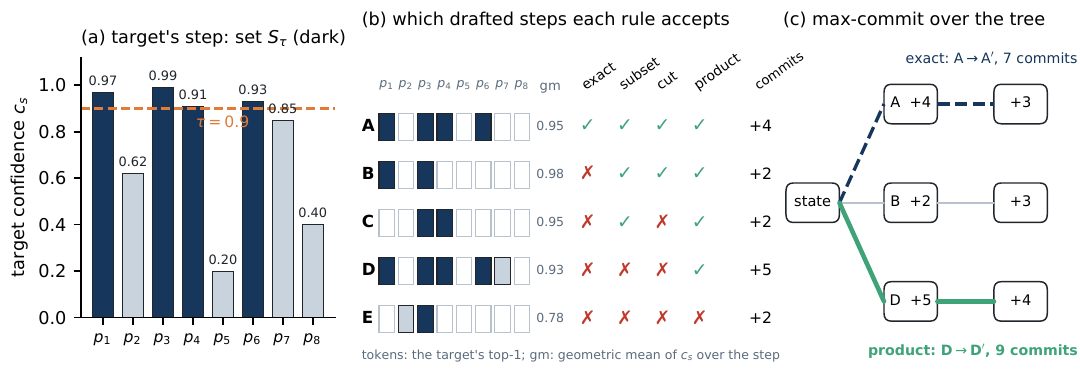}
\caption{Max-commit on a toy block (illustrative numbers). (a) The threshold decoder commits $S_\tau$, the
positions above $\tau$. (b) Exact acceptance takes only the drafted step equal to $S_\tau$ (A); subset and cut also
take parts of it, and product any step whose confidences have a geometric mean of at least $\tau$. (c) Max-commit
follows the accepted path with the most commits: $7$ under exact, $9$ under product, whose output is no longer the
threshold decoder's own.}
\label{fig:sets-maxcommit}
\end{figure}

\section{Experiments}
\label{sec:experiments}
We establish that \sys{} takes fewer target forwards with higher throughput than every baseline at $T{=}0$ and $T{=}1$ (Sections~\ref{sec:exp-main} and~\ref{sec:exp-temp}), and so does \sysSets{} on top of dynamic threshold decoding (Section~\ref{sec:exp-main}), then that a checkpoint transfers across targets without additional training (Section~\ref{sec:exp-transfer}), and finally that each component of the drafter earns its place (Section~\ref{sec:exp-ablation}).
\subsection{Experimental setup}
\label{sec:exp-setup}
Our headline target is \textbf{SDAR-8B}~\citep{cheng_sdar_2026}, a dense block-causal
diffusion LM at block width $L{=}32$ with its exact prefix KV cache. The same recipe
gives drafters for \textbf{SDAR-4B} and \textbf{SDAR-1.7B} and for the mixture-of-experts
\textbf{LLaDA2.0-mini}~\citep{bie_llada20_2025}, and transfer is measured on two further
targets, \textbf{LLaDA-8B}~\citep{nie_large_2025} and
\textbf{Dream-7B}~\citep{ye_dream_2025}: six diffusion language models in all. We evaluate
GSM8K~\citep{cobbe_training_2021}, MBPP~\citep{austin_program_2021},
HumanEval~\citep{chen_evaluating_2021} and MATH500~\citep{hendrycks_measuring_2021} on
$n{=}350$ samples (all $164$ for HumanEval) with seed $0$; Tables~\ref{tab:tsweep} and~\ref{tab:main-sdar} give the mean of seeds $0$--$2$, except where a caption says otherwise, at generation length $256$ and budgets $D\in\{6,8,12,20\}$ and report NFE per sample, accuracy, and a \emph{throughput ratio} against the draft-free run of the same task, model and temperature. \sys{} is one checkpoint and one calibration per target, and \sys{}-$\gamma$ runs it unpadded with that floor. Baselines are Spiffy's calibrated draft graph~\citep{agrawal_structuring_2026}, an SSD-style confidence-ordered free chain~\citep{gao_self_2025} and a BlockSpec-style confidence
tree~\citep{pan_blockspec_2025}, all sharing our decoder and verifier so that the comparison is about drafting alone. Other block widths, temperatures, and generation lengths, and the full protocol, are in Appendices~\ref{app:setup}, \ref{app:dataset}
and~\ref{app:ext}. Every drafted run is unpadded. Throughput is measured on two machines, each with eight NVIDIA RTX PRO 6000 Blackwell cards, one job per card; every ratio is over the draft-free run of the same machine.

\subsection{Main results}
\label{sec:exp-main}

\begin{table}[t]
\centering\footnotesize
\caption{Main results, SDAR-8B b32 at $T{=}0$ and $T{=}1$: \nfe{NFE per sample} (\tps{throughput ratio} over the draft-free run of the same temperature), best in bold, second best underlined. Shaded rows give each block's accuracy range. \sys{}-$\gamma$: unpadded, $\gamma{=}0.02$. At $T{=}1$ all methods use coupled-noise match except the free chain (RS). \SEEDN{main_sdar8b_both}; $n{=}350$ per task (HumanEval $164$). Last two blocks: on top of the threshold decoder ($\tau{=}0.9$, Section~\ref{sec:method-sets}), ratios still over standard decoding.}
\label{tab:main-sdar}\label{tab:main-t1}
\setlength{\tabcolsep}{4pt}\scriptsize
\resizebox{\linewidth}{!}{\begin{tabular}{lcccccccc}
\toprule
\nfe{NFE}$\downarrow$ (\tps{tok/s}$\uparrow$) & \multicolumn{2}{c}{GSM8K} & \multicolumn{2}{c}{MBPP} & \multicolumn{2}{c}{HumanEval} & \multicolumn{2}{c}{MATH500} \\
\cmidrule(lr){2-3}\cmidrule(lr){4-5}\cmidrule(lr){6-7}\cmidrule(lr){8-9}
method & $D8$ & $D20$ & $D8$ & $D20$ & $D8$ & $D20$ & $D8$ & $D20$ \\
\midrule
\rowcolor[gray]{0.90}$T{=}0$ & \multicolumn{2}{c}{acc.\ 0.89--0.90} & \multicolumn{2}{c}{acc.\ 0.43} & \multicolumn{2}{c}{acc.\ 0.71--0.74} & \multicolumn{2}{c}{acc.\ 0.10--0.12} \\
no draft ($D0$) & \multicolumn{2}{c}{\nfe{270.8}$\downarrow$ (\tps{1.00}$\uparrow$)\SZ} & \multicolumn{2}{c}{\nfe{269.2}$\downarrow$ (\tps{1.00}$\uparrow$)\SZ} & \multicolumn{2}{c}{\nfe{271.4}$\downarrow$ (\tps{1.00}$\uparrow$)\SZ} & \multicolumn{2}{c}{\nfe{271.8}$\downarrow$ (\tps{1.00}$\uparrow$)\SZ} \\
\midrule
\rowcolor{oursrow}\textbf{\sys{} (ours)} & \nfe{\textbf{70.1}}\,(\tps{\underline{2.95}})\SZ & \nfe{\textbf{52.4}}\,(\tps{\underline{2.37}})\SZ & \nfe{\textbf{56.7}}\,(\tps{\underline{3.38}})\SZ & \nfe{\textbf{39.5}}\,(\tps{\underline{2.49}})\SZ & \nfe{\textbf{62.5}}\,(\tps{\underline{3.35}})\SZ & \nfe{\textbf{44.8}}\,(\tps{\underline{2.74}})\SZ & \nfe{\textbf{84.6}}\,(\tps{\underline{2.50}})\SZ & \nfe{\textbf{66.0}}\,(\tps{\underline{1.86}})\SZ \\
\rowcolor{oursrow}\sys{}-$\gamma$ (ours) & \nfe{\underline{70.1}}\,(\tps{\textbf{3.03}})\SZ & \nfe{\underline{54.7}}\,(\tps{\textbf{2.88}})\SZ & \nfe{\underline{56.9}}\,(\tps{\textbf{3.48}})\SZ & \nfe{\underline{41.8}}\,(\tps{\textbf{3.36}})\SZ & \nfe{\underline{62.6}}\,(\tps{\textbf{3.42}})\SZ & \nfe{\underline{46.7}}\,(\tps{\textbf{3.36}})\SZ & \nfe{\underline{84.7}}\,(\tps{\textbf{2.53}})\SZ & \nfe{\underline{69.1}}\,(\tps{\textbf{2.26}})\SZ \\
Spiffy graph & \nfe{92.5}\,(\tps{2.48})\SZ & \nfe{71.6}\,(\tps{2.12})\SZ & \nfe{83.6}\,(\tps{2.40})\SZ & \nfe{61.2}\,(\tps{1.97})\SZ & \nfe{87.6}\,(\tps{2.64})\SZ & \nfe{65.5}\,(\tps{2.28})\SZ & \nfe{100.6}\,(\tps{2.29})\SZ & \nfe{82.5}\,(\tps{1.80})\SZ \\
SSD free chain & \nfe{92.4}\,(\tps{2.44})\SZ & \nfe{84.7}\,(\tps{1.60})\SZ & \nfe{84.5}\,(\tps{2.39})\SZ & \nfe{75.8}\,(\tps{1.41})\SZ & \nfe{87.0}\,(\tps{2.63})\SZ & \nfe{80.0}\,(\tps{1.69})\SZ & \nfe{100.6}\,(\tps{2.28})\SZ & \nfe{94.0}\,(\tps{1.41})\SZ \\
BlockSpec tree & \nfe{136.5}\,(\tps{1.64})\SZ & \nfe{114.6}\,(\tps{1.10})\SZ & \nfe{135.1}\,(\tps{1.46})\SZ & \nfe{111.5}\,(\tps{0.86})\SZ & \nfe{136.0}\,(\tps{1.65})\SZ & \nfe{112.7}\,(\tps{1.10})\SZ & \nfe{136.8}\,(\tps{1.64})\SZ & \nfe{116.3}\,(\tps{1.09})\SZ \\
\specialrule{0.6pt}{2.5pt}{0pt}
\rowcolor[gray]{0.90}$T{=}1$ & \multicolumn{2}{c}{acc.\ 0.88--0.90} & \multicolumn{2}{c}{acc.\ 0.33--0.38} & \multicolumn{2}{c}{acc.\ 0.67--0.69} & \multicolumn{2}{c}{acc.\ 0.10--0.11} \\
\midrule
\rowcolor{oursrow}\textbf{\sys{} (ours)} & \nfe{\textbf{72.6}}\,(\tps{\underline{2.87}})\SZ & \nfe{\textbf{54.8}}\,(\tps{\underline{2.25}})\SZ & \nfe{\textbf{60.1}}\,(\tps{\underline{3.12}})\SZ & \nfe{\textbf{42.5}}\,(\tps{\underline{2.27}})\SZ & \nfe{\textbf{65.2}}\,(\tps{\underline{3.20}})\SZ & \nfe{\textbf{47.4}}\,(\tps{\underline{2.57}})\SZ & \nfe{\textbf{87.6}}\,(\tps{\underline{2.38}})\SZ & \nfe{\textbf{68.8}}\,(\tps{\underline{1.83}})\SZ \\
\rowcolor{oursrow}\sys{}-$\gamma$ (ours) & \nfe{\underline{73.0}}\,(\tps{\textbf{2.93}})\SZ & \nfe{\underline{57.6}}\,(\tps{\textbf{2.77}})\SZ & \nfe{\underline{61.0}}\,(\tps{\textbf{3.27}})\SZ & \nfe{\underline{45.8}}\,(\tps{\textbf{3.21}})\SZ & \nfe{\underline{65.7}}\,(\tps{\textbf{3.24}})\SZ & \nfe{\underline{49.5}}\,(\tps{\textbf{3.23}})\SZ & \nfe{\underline{88.3}}\,(\tps{\textbf{2.42}})\SZ & \nfe{\underline{72.6}}\,(\tps{\textbf{2.20}})\SZ \\
Spiffy graph & \nfe{95.3}\,(\tps{2.42})\SZ & \nfe{75.4}\,(\tps{1.97})\SZ & \nfe{86.5}\,(\tps{2.28})\SZ & \nfe{64.9}\,(\tps{1.87})\SZ & \nfe{89.9}\,(\tps{2.59})\SZ & \nfe{68.7}\,(\tps{2.23})\SZ & \nfe{104.5}\,(\tps{2.21})\SZ & \nfe{87.5}\,(\tps{1.69})\SZ \\
free chain, RS & \nfe{89.9}\,(\tps{1.87})\SZ & \nfe{84.0}\,(\tps{1.46})\SZ & \nfe{64.5}\,(\tps{2.59})\SZ & \nfe{52.3}\,(\tps{2.09})\SZ & \nfe{79.2}\,(\tps{2.18})\SZ & \nfe{70.3}\,(\tps{1.74})\SZ & \nfe{116.8}\,(\tps{1.40})\SZ & \nfe{115.8}\,(\tps{1.02})\SZ \\
BlockSpec tree & \nfe{139.3}\,(\tps{1.61})\SZ & \nfe{118.2}\,(\tps{1.08})\SZ & \nfe{138.2}\,(\tps{1.41})\SZ & \nfe{115.1}\,(\tps{0.83})\SZ & \nfe{138.3}\,(\tps{1.63})\SZ & \nfe{115.5}\,(\tps{1.06})\SZ & \nfe{140.5}\,(\tps{1.59})\SZ & \nfe{121.2}\,(\tps{1.05})\SZ \\
\specialrule{0.6pt}{2.5pt}{0pt}
\multicolumn{9}{c}{\emph{threshold decoding, $\tau{=}0.9$ ($n{=}350$, HumanEval $164$)}} \\
\rowcolor[gray]{0.90}$T{=}0$ & \multicolumn{2}{c}{acc.\ 0.88--0.89} & \multicolumn{2}{c}{acc.\ 0.39--0.41} & \multicolumn{2}{c}{acc.\ 0.67--0.71} & \multicolumn{2}{c}{acc.\ 0.10--0.12} \\
no draft & \multicolumn{2}{c}{\nfe{58.8}$\downarrow$ (\tps{4.51}$\uparrow$)\SZ} & \multicolumn{2}{c}{\nfe{44.0}$\downarrow$ (\tps{5.89}$\uparrow$)\SZ} & \multicolumn{2}{c}{\nfe{43.4}$\downarrow$ (\tps{6.13}$\uparrow$)\SZ} & \multicolumn{2}{c}{\nfe{80.1}$\downarrow$ (\tps{3.34}$\uparrow$)\SZ} \\
\midrule
\rowcolor{oursrow}\textbf{\sysSets{}-$\gamma$ (ours)} & \nfe{\textbf{30.6}}\,(\tps{\textbf{7.30}})\SZ & \nfe{\textbf{30.2}}\,(\tps{\textbf{7.26}})\SZ & \nfe{\textbf{22.8}}\,(\tps{\textbf{9.28}})\SZ & \nfe{\textbf{22.4}}\,(\tps{\textbf{9.15}})\SZ & \nfe{\textbf{22.7}}\,(\tps{\textbf{9.85}})\SZ & \nfe{\textbf{22.6}}\,(\tps{\textbf{9.62}})\SZ & \nfe{\textbf{42.3}}\,(\tps{\textbf{5.24}})\SZ & \nfe{\textbf{41.8}}\,(\tps{\textbf{5.22}})\SZ \\
\rowcolor{oursrow}\sysSets{}-$\gamma$, exact (ours) & \nfe{\underline{32.5}}\,(\tps{\underline{6.93}})\SZ & \nfe{\underline{32.2}}\,(\tps{\underline{6.86}})\SZ & \nfe{\underline{23.8}}\,(\tps{\underline{8.97}})\SZ & \nfe{\underline{23.4}}\,(\tps{\underline{8.72}})\SZ & \nfe{\underline{24.0}}\,(\tps{\underline{9.35}})\SZ & \nfe{\underline{23.7}}\,(\tps{\underline{9.22}})\SZ & \nfe{\underline{45.1}}\,(\tps{\underline{4.96}})\SZ & \nfe{\underline{44.5}}\,(\tps{\underline{4.92}})\SZ \\
Spiffy graph & \nfe{38.6}\,(\tps{6.06})\SZ & \nfe{37.5}\,(\tps{5.53})\SZ & \nfe{26.6}\,(\tps{7.97})\SZ & \nfe{26.3}\,(\tps{6.85})\SZ & \nfe{29.0}\,(\tps{8.12})\SZ & \nfe{28.6}\,(\tps{7.37})\SZ & \nfe{52.3}\,(\tps{4.47})\SZ & \nfe{51.5}\,(\tps{4.04})\SZ \\
SSD free chain & \nfe{38.5}\,(\tps{6.03})\SZ & \nfe{37.4}\,(\tps{4.13})\SZ & \nfe{26.6}\,(\tps{7.92})\SZ & \nfe{26.0}\,(\tps{4.95})\SZ & \nfe{28.9}\,(\tps{8.18})\SZ & \nfe{28.4}\,(\tps{5.54})\SZ & \nfe{52.3}\,(\tps{4.44})\SZ & \nfe{51.0}\,(\tps{2.87})\SZ \\
BlockSpec tree & \nfe{41.8}\,(\tps{5.49})\SZ & \nfe{39.0}\,(\tps{3.76})\SZ & \nfe{30.0}\,(\tps{6.97})\SZ & \nfe{27.7}\,(\tps{4.46})\SZ & \nfe{31.3}\,(\tps{7.39})\SZ & \nfe{29.4}\,(\tps{5.12})\SZ & \nfe{56.2}\,(\tps{4.06})\SZ & \nfe{52.8}\,(\tps{2.66})\SZ \\
\specialrule{0.6pt}{2.5pt}{0pt}
\rowcolor[gray]{0.90}$T{=}1$ & \multicolumn{2}{c}{acc.\ 0.87--0.89} & \multicolumn{2}{c}{acc.\ 0.33--0.35} & \multicolumn{2}{c}{acc.\ 0.65--0.66} & \multicolumn{2}{c}{acc.\ 0.09--0.11} \\
no draft & \multicolumn{2}{c}{\nfe{61.5}$\downarrow$ (\tps{4.33}$\uparrow$)\SZ} & \multicolumn{2}{c}{\nfe{46.9}$\downarrow$ (\tps{5.54}$\uparrow$)\SZ} & \multicolumn{2}{c}{\nfe{44.4}$\downarrow$ (\tps{6.07}$\uparrow$)\SZ} & \multicolumn{2}{c}{\nfe{81.1}$\downarrow$ (\tps{3.31}$\uparrow$)\SZ} \\
\midrule
\rowcolor{oursrow}\textbf{\sysSets{}-$\gamma$ (ours)} & \nfe{\textbf{34.0}}\,(\tps{\textbf{6.44}})\SZ & \nfe{\textbf{33.5}}\,(\tps{\textbf{6.41}})\SZ & \nfe{\textbf{27.5}}\,(\tps{\textbf{7.53}})\SZ & \nfe{\textbf{27.2}}\,(\tps{\textbf{7.25}})\SZ & \nfe{\textbf{26.2}}\,(\tps{\textbf{8.37}})\SZ & \nfe{\textbf{25.9}}\,(\tps{\textbf{8.29}})\SZ & \nfe{\textbf{45.8}}\,(\tps{\textbf{4.74}})\SZ & \nfe{\textbf{45.0}}\,(\tps{\textbf{4.74}})\SZ \\
\rowcolor{oursrow}\sysSets{}-$\gamma$, exact (ours) & \nfe{\underline{36.0}}\,(\tps{\underline{6.21}})\SZ & \nfe{\underline{35.4}}\,(\tps{\underline{6.21}})\SZ & \nfe{\underline{28.5}}\,(\tps{\underline{7.40}})\SZ & \nfe{\underline{28.1}}\,(\tps{\underline{7.14}})\SZ & \nfe{\underline{27.2}}\,(\tps{\underline{8.30}})\SZ & \nfe{\underline{26.8}}\,(\tps{\underline{8.17}})\SZ & \nfe{\underline{48.6}}\,(\tps{\underline{4.58}})\SZ & \nfe{\underline{47.7}}\,(\tps{\underline{4.57}})\SZ \\
Spiffy graph & \nfe{42.7}\,(\tps{5.47})\SZ & \nfe{42.3}\,(\tps{4.99})\SZ & \nfe{30.7}\,(\tps{6.88})\SZ & \nfe{30.4}\,(\tps{5.76})\SZ & \nfe{31.7}\,(\tps{7.45})\SZ & \nfe{31.3}\,(\tps{6.72})\SZ & \nfe{57.2}\,(\tps{4.08})\SZ & \nfe{56.6}\,(\tps{3.65})\SZ \\
SSD free chain & \nfe{42.7}\,(\tps{5.44})\SZ & \nfe{42.1}\,(\tps{3.61})\SZ & \nfe{30.6}\,(\tps{6.84})\SZ & \nfe{30.1}\,(\tps{4.16})\SZ & \nfe{31.7}\,(\tps{7.46})\SZ & \nfe{31.6}\,(\tps{5.02})\SZ & \nfe{57.1}\,(\tps{4.05})\SZ & \nfe{56.4}\,(\tps{2.59})\SZ \\
BlockSpec tree & \nfe{45.0}\,(\tps{5.03})\SZ & \nfe{43.1}\,(\tps{3.40})\SZ & \nfe{33.6}\,(\tps{6.23})\SZ & \nfe{31.7}\,(\tps{3.74})\SZ & \nfe{33.5}\,(\tps{6.92})\SZ & \nfe{32.5}\,(\tps{4.70})\SZ & \nfe{59.6}\,(\tps{3.79})\SZ & \nfe{57.0}\,(\tps{2.48})\SZ \\
\bottomrule
\end{tabular}
}
\end{table}

In the upper block of Table~\ref{tab:main-sdar}, \sys{} delivers $3.2$--$4.7\times$ fewer target forwards than draft-free decoding at $D8$ and $4.1$--$6.8\times$ at $D20$, at $2.5$--$3.4\times$ the throughput of draft-free decoding at $D8$; against Spiffy's calibrated graph it has $16$--$36\%$ fewer NFE and $1.04$--$1.41\times$ its throughput in every task-and-budget setting, with larger margins against the chain and the tree. The baselines' throughput collapses with budget where their NFE does not: from $D8$ to $D20$ the chain falls from $2.3$--$2.6\times$ to $1.4$--$1.7\times$ and the tree from $1.5$--$1.7\times$ to $0.86$--$1.10\times$ (\emph{slower than draft-free} on MBPP), since a wider draft costs $(1{+}D)L$ positions whether or not it is accepted, while they accept only $1.1$--$2.9$ tokens per drafted round against \sys{}'s $2.5$--$7.5$ (Appendix~\ref{app:mechanism}). \sys{}-$\gamma$ removes the budget dependence: its throughput ratio is nearly flat in the budget ($2.5$--$3.5\times$ at $D8$, $2.3$--$3.4\times$ at $D20$) and at or above fixed-budget \sys{} in every SDAR-8B setting at both temperatures. The floor pays at most $4$ NFE per setting, so \sys{} stays NFE-optimal while \sys{}-$\gamma$ still has fewer forwards than every baseline in every setting, at every budget from $D6$ to $D20$ (Tables~\ref{tab:main-full} and~\ref{tab:main-full-b}). MRP~\citep{xu_multi-token_2026}, the closest learned drafter, reports $1.17$--$1.40\times$ over one-token-per-step decoding on these same targets in a different implementation, against \sys{}'s $2.5$--$3.8\times$ at $D8$, a margin wider than that difference can account for, and its argmax-equality acceptance has no distribution-preserving extension to sampling (Table~\ref{tab:mrp}).

The last two blocks of Table~\ref{tab:main-sdar} run every method on top of the threshold decoder
of \citet{wu_fast-dllm_2026}, which alone is $3.3$--$6.1\times$ faster than standard decoding. There \sysSets{} has
the lowest NFE and the highest throughput in every column, and so it does on SDAR-4B, SDAR-1.7B and LLaDA2.0-mini at
both temperatures (Tables~\ref{tab:main-4b}--\ref{tab:main-mini-t1}). On SDAR-8B at $T{=}0$ and $D8$,
\sysSets{}-$\gamma$ takes $47$--$48\%$ fewer forwards than the threshold decoder and reaches $5.2$--$9.9\times$ the
throughput of standard decoding ($7.9\times$ on average, against $5.0\times$ for the threshold decoder alone); its
exact variant, which keeps the threshold decoder's output up to floating-point batch effects, reaches $7.6\times$. Our drafter trains on general chat
only. A second drafter, trained on the train splits of GSM8K, MATH and MBPP and on CodeAlpaca, is faster still
($8.2\times$ on average against $7.9\times$), as expected when a drafter's training data match the
evaluation tasks~\citep{hong_training_2025,zbib_taps_2026}; on HumanEval, which neither drafter saw, the two
drafters are within $3\%$ of each other (Appendix~\ref{app:sets-data}). We keep the general chat drafter as our main
setting, since its speed-ups do not rely on training data from the benchmarks' own distribution.

\subsection{Sampling temperature: coupled-noise match against the baselines}
\label{sec:exp-temp}

The second block of Table~\ref{tab:main-sdar} shows that nothing changes when the target samples at $T{=}1$: under the same verifier, \sys{} keeps $16$--$35\%$ fewer NFE than Spiffy and $1.08$--$1.37\times$ its throughput. Free chain under rejection sampling is the strongest baseline on MBPP NFE, but stays $0.18$--$0.53$ below \sys{} in throughput, since every accepted link pays an acceptance test, and stalls on MATH500 ($116$ NFE at both budgets, $1.02\times$ at $D20$). Every $T{=}1$ result is distribution-preserving, with accuracies within $0.04$ of the draft-free run over seeds; tree $+$ coupled-noise match is the deployment configuration at any temperature
(Appendix~\ref{app:verifier-extra}).

\subsection{Transferability within and across model families}
\label{sec:exp-transfer}

\begin{table}[t]
\centering\footnotesize
\caption{Checkpoint transfer ($T{=}0$, b32, $D8$, unpadded, seed $0$): each checkpoint pointed at a
target with no retraining or recalibration, \nfe{NFE per sample}. Ratios in Table~\ref{tab:transfer-full}.
mini: the LLaDA2.0-mini checkpoint. $^{\dagger}$approximate prefix cache of \citet{wu_fast-dllm_2026}, not
distribution-preserving.
Lower block: \sysSets{}-$\gamma$ on the threshold decoder ($\tau{=}0.9$), its input layers refit on $99$ prompts
of each target (ratios in Table~\ref{tab:sets-transfer-full-d}).}
\label{tab:transfer}
\setlength{\tabcolsep}{4pt}
\begin{tabular}{lcccccccc}
\toprule
\nfe{NFE}$\downarrow$ at $D8$ & \multicolumn{2}{c}{GSM8K} & \multicolumn{2}{c}{MBPP} & \multicolumn{2}{c}{HumanEval} & \multicolumn{2}{c}{MATH500} \\
\cmidrule(lr){2-3}\cmidrule(lr){4-5}\cmidrule(lr){6-7}\cmidrule(lr){8-9}
target $\backslash$ drafter & SDAR-8B & mini & SDAR-8B & mini & SDAR-8B & mini & SDAR-8B & mini \\
\midrule
SDAR-8B & \nfe{70.1}\SZ & \nfe{73.0}\SZ & \nfe{56.7}\SZ & \nfe{58.7}\SZ & \nfe{62.5}\SZ & \nfe{67.3}\SZ & \nfe{84.6}\SZ & \nfe{88.1}\SZ \\
LLaDA2.0-mini & \nfe{85.9}\SZ & \nfe{82.6}\SZ & \nfe{101.7}\SZ & \nfe{94.1}\SZ & \nfe{97.6}\SZ & \nfe{87.7}\SZ & \nfe{87.1}\SZ & \nfe{84.6}\SZ \\
LLaDA-8B$^{\dagger}$ & \nfe{79.9}\SZ & \nfe{83.0}\SZ & \nfe{54.8}\SZ & \nfe{57.3}\SZ & \nfe{83.4}\SZ & \nfe{87.5}\SZ & \nfe{83.8}\SZ & \nfe{86.7}\SZ \\
Dream-7B$^{\dagger}$ & \nfe{66.3}\SZ & \nfe{68.0}\SZ & \nfe{51.2}\SZ & \nfe{52.5}\SZ & \nfe{56.6}\SZ & \nfe{58.3}\SZ & \nfe{77.1}\SZ & \nfe{79.5}\SZ \\
\midrule
\multicolumn{9}{l}{\emph{threshold decoding ($\tau{=}0.9$), \sysSets{}-$\gamma$ (cap 8) with a calibrated drafter}} \\
\midrule
SDAR-8B & \nfe{30.6}\SZ & \nfe{30.1}\SZ & \nfe{22.8}\SZ & \nfe{22.3}\SZ & \nfe{22.7}\SZ & \nfe{23.3}\SZ & \nfe{42.3}\SZ & \nfe{41.7}\SZ \\
LLaDA2.0-mini & \nfe{48.7}\SZ & \nfe{44.0}\SZ & \nfe{37.7}\SZ & \nfe{34.7}\SZ & \nfe{19.8}\SZ & \nfe{19.0}\SZ & \nfe{47.7}\SZ & \nfe{43.7}\SZ \\
LLaDA-8B$^{\dagger}$ & \nfe{40.6}\SZ & \nfe{38.7}\SZ & \nfe{22.4}\SZ & \nfe{21.6}\SZ & \nfe{36.6}\SZ & \nfe{34.8}\SZ & \nfe{48.7}\SZ & \nfe{46.7}\SZ \\
Dream-7B$^{\dagger}$ & \nfe{30.2}\SZ & \nfe{30.0}\SZ & \nfe{19.2}\SZ & \nfe{19.0}\SZ & \nfe{23.6}\SZ & \nfe{23.1}\SZ & \nfe{48.5}\SZ & \nfe{49.0}\SZ \\
\bottomrule
\end{tabular}

\end{table}

Transfer takes two forms. Re-running identical training recipe on
\textbf{SDAR-4B} reproduces the SDAR-8B picture, lowest NFE and highest throughput
in every comparison at $T{=}0$ and $T{=}1$. On \textbf{SDAR-1.7B}, and on
\textbf{LLaDA2.0-mini}, a mixture-of-experts target with a different tokenizer, it gives the lowest NFE in every
comparison and beats or matches every baseline in throughput (within $5\%$, Appendix~\ref{app:accounting}). The method
carries to block width 16 untuned (Appendices~\ref{app:ext-models}
and~\ref{app:ext-blocks}). Table~\ref{tab:transfer} swaps the trained checkpoints
between targets, tokenizers included, with no retraining or recalibration, and
Table~\ref{tab:transfer-full} gives the two checkpoints on all six targets: the
transferred drafter costs $4$--$8\%$ NFE on SDAR-8B and \emph{still beats every
baseline calibrated on SDAR-8B} (Table~\ref{tab:main-sdar}), and $3$--$11\%$
on LLaDA2.0-mini, where it stays ahead of every baseline on GSM8K and MATH500. Beyond the
block-causal family, \textbf{LLaDA-8B}~\citep{nie_large_2025} and
\textbf{Dream-7B}~\citep{ye_dream_2025} attend over the sequence, so drafted
runs and control both use the approximate cache of \citet{wu_fast-dllm_2026}
(Appendix~\ref{app:setup}) and these rows measure reach rather than losslessness:
checkpoints that never saw either model cut LLaDA-8B's $256$ forwards to $55$--$88$
and Dream-7B's to $51$--$80$: the order model learns \emph{where a diffusion target
commits}, and much of that carries.

The \sysSets{} drafter also reads hidden states and token embeddings, whose spaces differ between models,
so for each new target we refit only its input layers on $99$ calibration prompts from the GSM8K, MATH, MBPP and
CodeAlpaca train splits (Appendix~\ref{app:sets-transfer}). With this light calibration, a transferred drafter stays
within $8\%$ of the native drafter's throughput (Table~\ref{tab:sets-transfer-full-d}), and it also speeds up LLaDA-8B and
Dream-7B, on which no drafter was trained (lower block of Table~\ref{tab:transfer}).

\subsection{Ablations}
\label{sec:exp-ablation}

Table~\ref{tab:ablation} (Appendix~\ref{app:ext-ablations}) varies one component on SDAR-8B MBPP. The largest effect comes from replacing the tree with a chain: NFE increases by $7.0$ at $D8$ and by $11.7$ at $D20$, and the ratio at $D8$ drops from $3.65$ to $3.34\times$. Budget therefore pays through branching, not a longer chain. The
token hedge and the larger training corpus are each worth under one NFE, and at $\gamma{=}0$ padding moves
throughput by under $0.1$. The mass floor trades NFE for wall-clock speed, but
only without padding: at $D20$, $\gamma = 0 / 0.02 / 0.05$ gives
$39.5 / 41.8 / 44.4$ NFE at $3.19 / 4.13 / 4.16\times$, by computing fewer verification positions; padded, $\gamma{=}0.02$ gives $3.32\times$.
\begin{figure}[t]
\centering
\begin{minipage}[c]{0.58\linewidth}\centering\includegraphics[width=\linewidth]{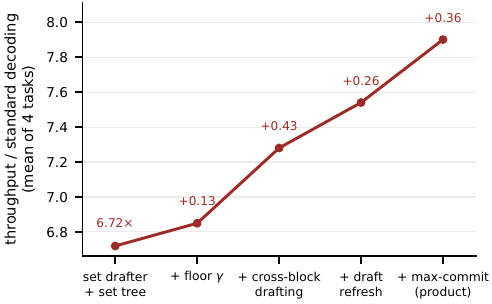}\end{minipage}\hfill
\begin{minipage}[c]{0.39\linewidth}
\caption{What each part of \sysSets{} adds on SDAR-8B ($\tau{=}0.9$, $T{=}0$, cap $8$): throughput over standard
decoding, mean of the four tasks, as each part is added in turn (labels: gain per step). The block-end additions
(cross-block drafting, draft refresh) give the most. Numbers in Table~\ref{tab:sets-ladder}.}
\label{fig:sets-ladder}
\end{minipage}
\end{figure}

For \sysSets{} (Figure~\ref{fig:sets-ladder}; Table~\ref{tab:sets-ladder}, Appendix~\ref{app:sets}), the set drafter and tree
alone reach $6.7\times$ the throughput of standard decoding at $D8$ (mean of the four tasks); the two block-end additions, cross-block drafting and draft refresh, add the most
($6.9\to7.5\times$), since a threshold decoder ends a block every few forwards, and the product max-commit rule brings it to $7.9\times$.
\section{Discussion}
\label{sec:conclusion}

DLM decoding remains constrained by a fundamental speed--quality trade-off. \sys{}
addresses this challenge with a two-fold solution. We show that stale target
distributions already contain highly reliable token information, making commit
order the key missing signal for effective speculation. A lightweight
$0.62$M-parameter model predicts this order and constructs multi-step draft
trees, while coupled-noise verification preserves the target distribution under
sampling; inspired by \sys{}, \sysSets{} drafts the commit sets and their tokens for dynamic threshold decoding. \sys{} transfers
across six DLMs without retraining, and on SDAR-8B it uses $16$--$36\%$ fewer target forwards than an
offline-calibrated draft graph at up to $1.4\times$ its throughput; there \sysSets{} lifts dynamic threshold decoding from $5.0\times$ to $7.9\times$ the throughput of standard decoding on average, and from $4.8\times$ to $6.8\times$ at $T{=}1$.
More broadly, our results show that substantially faster DLM generation can be
achieved without sacrificing the quality or exactness of the underlying decoding
process (for \sysSets{}, in its exact mode, or with the cut rule, which only makes the decoder stricter), establishing commit-order prediction as a promising foundation for the
next generation of speculative decoding methods for DLMs.

\paragraph{Limitations.}
The main limitation of \sys{} is that its gains diminish when verification is costly
or acceptance is low. Verification becomes the bottleneck at wide blocks, while exact
tree packing remains unresolved under bidirectional attention. Gains also narrow on
low-acceptance tasks, and at $T \ge 2$ the $T{=}0$-trained order model is outperformed
by noise-aware ranking (Appendix~\ref{app:ext-ablations}). \sysSets{} reads the target's hidden states and token embeddings, so unlike \sys{} it needs a short calibration on each new target, and its larger drafter ($5.1$M parameters, about $6\%$ of a round's time on SDAR-8B) matters more when the target forward is cheap.

\section*{AI use statement}

Generative AI tools were used during manuscript preparation to support language refinement and editing, and during development to assist with implementation of the experimental codebase. They were not used to formulate the research questions, develop the proposed method, define the evaluation procedure, or draw conclusions from the experimental findings. The quantitative results reported in this paper originate from the experiments described in Section~\ref{sec:exp-setup} and the appendix. The authors inspected and verified all AI-assisted contributions, including code incorporated into the experimental workflow, and remain fully responsible for the manuscript, implementation, experimental results, and conclusions.

\subsection*{Ethics statement}

This work focuses on accelerating inference in diffusion large language models and does not involve human subjects or the collection of personal data. We do not anticipate any ethical concerns beyond those associated with the underlying language models and their downstream applications.

\subsection*{Reproducibility statement}

Code and the trained drafter checkpoints are available at \url{https://github.com/Eliezer318/SPOT-dLLM}, with instructions to reproduce the experiments. Every number in the benchmark tables comes from one benchmark run with the protocol of Section~\ref{sec:exp-setup} and Appendix~\ref{app:setup}, on machines with eight identical NVIDIA RTX PRO 6000 Blackwell cards and one job per card. The one-commit \sys{} results come from one machine, and Table~\ref{tab:main-sdar}, the SDAR-4B $T{=}1$ table and the threshold-decoding (\sysSets{}) results from a second one, each ratio over that machine's own draft-free run. 

\bibliography{references}
\bibliographystyle{iclr2027_conference}

\appendix
\clearpage

\section{Formal Setup and Exactness Proofs}
\label{app:proofs}

This section makes the decoding semantics used in
Section~\ref{sec:method} explicit and proves
Propositions~\ref{prop:greedy} and~\ref{prop:match}.

\subsection{Formal decoding model}
\label{app:formal-setup}

\paragraph{States and terminal states.}
Recall
\[
    \mathcal{X}
    =
    (\mathcal{V}\cup\{\textsc{m}\})^L,
    \qquad
    \mathcal{M}(x)
    =
    \{s:x_s=\textsc{m}\}.
\]
Define the nonterminal state space
\[
    \mathcal{X}_{\mathrm{nt}}
    =
    \{x\in\mathcal{X}:
      \mathcal{M}(x)\neq\varnothing\}.
\]
All target transition maps below are defined on
$\mathcal{X}_{\mathrm{nt}}$.

The completed causal context $C$ is fixed while an active block is
decoded.  A target forward at state $x$ returns
\[
    p_{\theta,s}(\cdot\mid x,C)
\]
for every $s\in\mathcal{M}(x)$.

\paragraph{Idealized target evaluation.}
The exactness results concern the mathematical target function:
evaluating $p_\theta(\cdot\mid x,C)$ as one replica of a batch is
assumed to return the same mathematical conditional as evaluating the
same $(x,C)$ alone.  This separates algorithmic exactness from
finite-precision effects caused by physical batch shape.

\paragraph{Position selection.}
The target chooses
\[
    s^\star(x)
    =
    \argmax_{s\in\mathcal{M}(x)}
    \max_{v\in\mathcal{V}}
    p_{\theta,s}(v\mid x,C).
\]
All argmax operations use a fixed deterministic tie-breaking rule.
In particular, $s^\star(x)$ depends only on the current target state
and not on sampling randomness that has not yet been consumed.

For $s\in\mathcal{M}(x)$ and $v\in\mathcal{V}$, write
\[
    x\oplus(s,v)
\]
for the state obtained by replacing $x_s=\textsc{m}$ by $v$.

\paragraph{Greedy transition.}
At $T=0$,
\[
    v^\star_0(x)
    =
    \argmax_v
    p_{\theta,s^\star(x)}(v\mid x,C),
\]
and therefore
\[
    \Phi_0:
    \mathcal{X}_{\mathrm{nt}}
    \rightarrow
    \mathcal{X},
    \qquad
    \Phi_0(x)
    =
    x\oplus
    \bigl(
        s^\star(x),v^\star_0(x)
    \bigr).
\]
Starting from initial block state $x_0$, draft-free greedy decoding is
the deterministic orbit
\[
    x^{\mathrm{DF}}_0=x_0,
    \qquad
    x^{\mathrm{DF}}_{j+1}
    =
    \Phi_0(x^{\mathrm{DF}}_j)
\]
until a terminal state is reached.

\paragraph{Sampling transition.}
At $T>0$, the target token law at position $s$ is
\[
    \pi_{T,s}(v\mid x,C)
    =
    \frac{
        p_{\theta,s}(v\mid x,C)^{1/T}
    }{
        \sum_{u\in\mathcal{V}}
        p_{\theta,s}(u\mid x,C)^{1/T}
    }.
\]
Standard draft-free target sampling at state $x$ first chooses the
deterministic position $s^\star(x)$ and then draws
\[
    V
    \sim
    \pi_{T,s^\star(x)}(\cdot\mid x,C).
\]

\paragraph{Shared Gumbel representation.}
Let $a(s)$ denote the absolute output position corresponding to local
block position $s$.  Draw
\[
    G_{a,v}
    \overset{\mathrm{iid}}{\sim}
    \mathrm{Gumbel}(0,1)
\]
for every absolute position $a$ and token $v$, and let
\[
    G=\{G_{a,v}\}_{a,v}.
\]
For fixed $G$, define
\[
    v^\star_G(x)
    =
    \argmax_v
    \left[
        \frac{1}{T}
        \log p_{\theta,s^\star(x)}(v\mid x,C)
        +
        G_{a(s^\star(x)),v}
    \right]
\]
and
\[
    \Phi_G:
    \mathcal{X}_{\mathrm{nt}}
    \rightarrow
    \mathcal{X},
    \qquad
    \Phi_G(x)
    =
    x\oplus
    \bigl(
        s^\star(x),v^\star_G(x)
    \bigr).
\]
For any fixed realization $G=g$, $\Phi_g$ is deterministic.

\paragraph{Draft trees.}
At speculative round $t$, a drafter $\mathcal{D}$ returns a finite
rooted tree
\[
    \mathcal{T}_t
    =
    (\mathcal{N}_t,\mathcal{E}_t)
\]
whose root is the current committed state $x_t$.  Every edge
$(y,z)\in\mathcal{E}_t$ corresponds to exactly one hypothetical
commitment:
\[
    z=y\oplus(s,v)
\]
for some $s\in\mathcal{M}(y)$ and $v\in\mathcal{V}$.

The exactness arguments make no assumption about how this tree is
constructed.  In particular, $\mathcal{D}$ may be learned or
hand-designed, may return a chain or a tree, may depend on all previous
target outputs, and at $T>0$ may depend on the shared randomness $G$.

\paragraph{State-equality verifier.}
Let $\Phi$ denote $\Phi_0$ for $T=0$ or $\Phi_G$ after fixing the
shared randomness for $T>0$.  Beginning at the root $y_0=x_t$, the
verifier proceeds recursively.

If $y_j$ has a drafted child $z$ such that
\[
    z=\Phi(y_j),
\]
the verifier accepts that child and continues from it.  If no such
child exists, it rejects all remaining descendants, commits
$\Phi(y_j)$ directly, and terminates the speculative round.  If $y_j$
is terminal, no further transition is made.

Since $\Phi(y_j)$ is a unique one-step extension, at most one child of
a node can be accepted.  Hence the accepted portion of an arbitrary
draft tree is a single path.

Let $X^{\mathrm{DF}}_T$ denote the complete draft-free trajectory and
$X^{\mathrm{Spec}}_{T,\mathcal D}$ the complete committed speculative
trajectory.  A verifier is distribution preserving when
\[
    X^{\mathrm{Spec}}_{T,\mathcal D}
    \overset{d}{=}
    X^{\mathrm{DF}}_T
\]
for every drafter $\mathcal D$ that returns a finite tree of the form above.

\subsection{Proof of Proposition~\ref{prop:greedy}}
\label{app:proof-greedy}

\begin{proof}
Fix target $p_\theta$, causal context $C$, initial block state $x_0$,
and an arbitrary drafter $\mathcal D$. By the fixed deterministic
tie-breaking rule, at $T=0$ the target transition
\[
    \Phi_0:
    \mathcal X_{\mathrm{nt}}
    \rightarrow
    \mathcal X
\]
is deterministic and gives a unique successor for every nonterminal
state.

Let
\[
    x^{\mathrm{DF}}_0=x_0,
    \qquad
    x^{\mathrm{DF}}_{k+1}
    =
    \Phi_0(x^{\mathrm{DF}}_k)
\]
be the draft-free trajectory, and let $x^{\mathrm{Spec}}_k$ denote the
state of the speculative decoder after $k$ committed transitions.

We prove by induction on $k$ that
\[
    x^{\mathrm{Spec}}_k
    =
    x^{\mathrm{DF}}_k
\]
for every commitment before termination.

\paragraph{Base case.}
For $k=0$, both procedures start from the same initial state:
\[
    x^{\mathrm{Spec}}_0
    =
    x_0
    =
    x^{\mathrm{DF}}_0.
\]

\paragraph{Inductive step.}
Assume
\[
    x^{\mathrm{Spec}}_k
    =
    x^{\mathrm{DF}}_k
\]
and that this state is nonterminal.  From $x^{\mathrm{Spec}}_k$, the target has
the unique successor
\[
    \Phi_0(x^{\mathrm{Spec}}_k).
\]
If the draft tree contains this state as a child of
$x^{\mathrm{Spec}}_k$, state-equality verification accepts that child.
Otherwise, the verifier rejects the speculative continuation and
commits the same state as the target fallback.  Hence, in either case,
\[
\begin{aligned}
    x^{\mathrm{Spec}}_{k+1}
    &=
    \Phi_0(x^{\mathrm{Spec}}_k) \\
    &=
    \Phi_0(x^{\mathrm{DF}}_k) \\
    &=
    x^{\mathrm{DF}}_{k+1}.
\end{aligned}
\]
Thus the induction hypothesis holds for $k+1$.

Since $\Phi_0(x)$ is unique, at most one child of any tree node can be
accepted.  Therefore, even for an arbitrary finite draft tree, the
accepted states form a single path, and the argument above applies to
every committed transition.

By induction,
\[
    X^{\mathrm{Spec}}_{0,\mathcal D}
    =
    X^{\mathrm{DF}}_0.
\]
\end{proof}

The argument is block-local only for notation. Since the completed
block is identical to draft-free decoding, the causal context of the
next block is also identical. Induction over blocks therefore gives
equality of the complete generated sequence at $T=0$.

\subsection{Proof of Proposition~\ref{prop:match}}
\label{app:proof-match}

\begin{proof}
Fix $T>0$, target $p_\theta$, causal context $C$, initial state $x_0$,
and an arbitrary drafter $\mathcal D$.

The proof has three steps. First, we show that the shared-Gumbel
construction induces the same draft-free trajectory distribution as
ordinary temperature-$T$ sampling. Second, conditioned on a realization
of the Gumbel table, verification reduces to the deterministic case of
Proposition~\ref{prop:greedy}. Third, we marginalize over the shared
randomness.

\paragraph{Step 1: Gumbel-driven draft-free law.}
Fix a nonterminal state $x$ and let
\[
    s=s^\star(x).
\]
Since $\{G_{a(s),v}\}_{v\in\mathcal V}$ are independent standard
Gumbels, the Gumbel--argmax identity gives
\begin{align}
    \Pr[v^\star_G(x)=u]
    &=
    \frac{
        \exp\!\left(
            T^{-1}\log p_{\theta,s}(u\mid x,C)
        \right)
    }{
        \sum_{v\in\mathcal V}
        \exp\!\left(
            T^{-1}\log p_{\theta,s}(v\mid x,C)
        \right)
    }
    \\
    &=
    \frac{
        p_{\theta,s}(u\mid x,C)^{1/T}
    }{
        \sum_{v\in\mathcal V}
        p_{\theta,s}(v\mid x,C)^{1/T}
    }
    \\
    &=
    \pi_{T,s}(u\mid x,C).
\end{align}
Thus, for any fixed state, the Gumbel construction induces exactly the
temperature-$T$ target token law.

It remains to show that using a single Gumbel table indexed by absolute
position preserves this law along the full adaptive trajectory.
Consider commitment step $j$ of Gumbel-driven draft-free decoding, with
current state $x_j$ and selected position
\[
    s_j=s^\star(x_j).
\]
Let $\mathcal F_j$ denote the history generated by the states and
Gumbel rows used during the first $j$ commitments. Since each
commitment fills a previously masked position, $s_j$ has not been
committed before, and therefore the row
\[
    G_{a(s_j),\cdot}
\]
has not previously been used.

Moreover, $s_j$ is determined by $x_j$ and hence by $\mathcal F_j$;
its selection does not depend on any unused Gumbel row. Since the
Gumbel rows are independent across absolute positions,
$G_{a(s_j),\cdot}$ remains an independent standard-Gumbel vector
conditional on $\mathcal F_j$. Therefore,
\[
    \Pr\!\left[
        v^\star_G(x_j)=u
        \mid
        \mathcal F_j
    \right]
    =
    \pi_{T,s_j}(u\mid x_j,C).
\]
Hence the Gumbel-driven decoder has the same conditional transition law
as ordinary draft-free sampling at every commitment. Since both start
from $x_0$, the chain rule gives
\begin{equation}
\label{eq:gumbel-df-law}
    X^{\mathrm{DF}}_T(G)
    \overset{d}{=}
    X^{\mathrm{DF}}_T.
\end{equation}

\paragraph{Step 2: conditional pathwise exactness.}
Fix any realization $G=g$. Then
\[
    \Phi_g:
    \mathcal X_{\mathrm{nt}}
    \rightarrow
    \mathcal X
\]
is a deterministic transition map.

The proof of Proposition~\ref{prop:greedy} requires only that the target
have a unique deterministic successor at each state and that a drafted
child be accepted exactly when it equals that successor. Both properties
hold with $\Phi_g$ in place of $\Phi_0$. Therefore, for every realization
$g$,
\begin{equation}
\label{eq:pathwise-all-g}
    X^{\mathrm{Spec}}_{T,\mathcal D}(g)
    =
    X^{\mathrm{DF}}_T(g).
\end{equation}

This argument is independent of how $\mathcal D$ constructs its draft.
In particular, the draft may depend on stale target distributions,
learned parameters, previous decoding history, or the shared table
$G$ itself; a drafted edge is committed only when it equals the target
successor under the same realization $g$.

\paragraph{Step 3: marginalization.}
By Step 2, for every realization $g$ of the shared Gumbel table,
\[
    X^{\mathrm{Spec}}_{T,\mathcal D}(g)
    =
    X^{\mathrm{DF}}_T(g).
\]
Therefore, when $G$ is drawn according to its Gumbel distribution,
\[
    X^{\mathrm{Spec}}_{T,\mathcal D}(G)
    =
    X^{\mathrm{DF}}_T(G).
\]
By definition, $X^{\mathrm{Spec}}_{T,\mathcal D}$ denotes this same
random speculative trajectory with its dependence on $G$ left implicit.

Hence, for any set $A$ of complete decoding trajectories,
\[
    \Pr\!\left[
        X^{\mathrm{Spec}}_{T,\mathcal D}\in A
    \right]
    =
    \Pr\!\left[
        X^{\mathrm{DF}}_T(G)\in A
    \right].
\]
Using \eqref{eq:gumbel-df-law},
\[
    \Pr\!\left[
        X^{\mathrm{Spec}}_{T,\mathcal D}\in A
    \right]
    =
    \Pr\!\left[
        X^{\mathrm{DF}}_T\in A
    \right].
\]
Since this holds for every $A$,
\[
    X^{\mathrm{Spec}}_{T,\mathcal D}
    \overset{d}{=}
    X^{\mathrm{DF}}_T.
\]

Finally, for fixed $G$, the committed trajectory is the deterministic
orbit of $\Phi_G$ and is therefore invariant to the choice of drafter.
This establishes drafter-invariance.

\end{proof}

\section{Rejection Sampling as an Alternative Verifier}
\label{app:rs}

Coupled-noise match is the verifier used by \sys{} because it requires
only state equality and applies directly to DDTree.  If a drafter
instead exposes an explicit \emph{sequential proposal distribution},
exact verification can alternatively be performed by rejection
sampling (RS).  We formalize this alternative here, prove its
correctness and maximal per-link acceptance probability, and describe
the implementation-level validation used in our experiments.

\subsection{Sequential proposal setup}
\label{app:rs-setup}

Consider a verified predecessor state
\[
    y_k\in\mathcal X_{\mathrm{nt}}.
\]
Let
\[
    \mathcal Z(y_k)
    =
    \mathcal M(y_k)\times\mathcal V
\]
be the set of possible one-step position--token commitments.

\paragraph{Target commit law.}
Let
\[
    s_k^\star
    =
    s^\star(y_k).
\]
At $T>0$, the target distribution over one-step commitments is
\begin{equation}
\label{eq:rs-target-law}
    \pi_k(s,v)
    =
    \mathbf 1\{s=s_k^\star\}
    \,
    \pi_{T,s_k^\star}(v\mid y_k,C).
\end{equation}
Therefore
\[
    \sum_{z\in\mathcal Z(y_k)}
    \pi_k(z)
    =
    1.
\]

\paragraph{Proposal law.}
Suppose the drafter exposes a normalized proposal distribution
\[
    q_k
    \in
    \Delta(\mathcal Z(y_k))
\]
and generates one proposed commitment
\begin{equation}
\label{eq:rs-proposal-draw}
    \widetilde Z_k
    =
    (\widetilde S_k,\widetilde V_k)
    \sim q_k.
\end{equation}
For the chain proposal used in our experiments,
\begin{equation}
\label{eq:rs-proposal-law}
    q_k(s,v)
    =
    q_k^{\mathrm{pos}}(s)
    \,
    \widehat\pi_{T,k,s}(v),
\end{equation}
where $q_k^{\mathrm{pos}}$ is the drafter's next-position distribution and
$\widehat\pi_{T,k,s}$ is its stale temperature-adjusted token
conditional.

The factorization in \eqref{eq:rs-proposal-law} is not required by the
proof; exactness needs only that the proposal actually be sampled from
the explicit distribution $q_k$ supplied to the verifier.

\paragraph{Accept--residual rule.}
Given proposal $\widetilde Z_k=z$ with $q_k(z)>0$, accept it with
probability
\begin{equation}
\label{eq:rs-acceptance-rule}
    \alpha_k(z)
    =
    \min
    \left\{
        1,
        \frac{\pi_k(z)}{q_k(z)}
    \right\}.
\end{equation}
Upon rejection, draw a replacement from
\begin{equation}
\label{eq:rs-residual}
    r_k(z)
    =
    \frac{
        (\pi_k(z)-q_k(z))_+
    }{
        \sum_{w\in\mathcal Z(y_k)}
        (\pi_k(w)-q_k(w))_+
    }.
\end{equation}
If the denominator is zero, then $\pi_k=q_k$, rejection has
probability zero, and $r_k$ is never queried.

\subsection{Exactness and optimal acceptance}
\label{app:rs-proof}

\begin{proposition}[Rejection-sampling exactness and optimal acceptance]
\label{prop:rs}
Fix a verified predecessor state $y_k$.  Let
$\widetilde Z_k\sim q_k$ be accepted according to
\eqref{eq:rs-acceptance-rule}, with rejected proposals replaced using
\eqref{eq:rs-residual}.  Then the committed pair $Z_k$ satisfies
\[
    Z_k\sim\pi_k.
\]
Moreover,
\[
    \Pr[
        \mathrm{accept}\mid y_k
    ]
    =
    1-D_{\mathrm{TV}}(\pi_k,q_k),
\]
which is the maximal possible agreement probability among all
couplings with marginals $\pi_k$ and $q_k$.  Applying the rule
sequentially at every verified predecessor therefore reproduces the
complete draft-free target distribution.
\end{proposition}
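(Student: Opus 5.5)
The plan is to follow the classical speculative-sampling argument of \citet{leviathan_fast_2023,chen_accelerating_2023}, carried out on the finite commitment space $\mathcal Z(y_k)=\mathcal M(y_k)\times\mathcal V$, and then to chain the per-link statement along the trajectory much as in Step~1 of the proof of Proposition~\ref{prop:match}.

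\textbf{Per-link law.} For a fixed $z\in\mathcal Z(y_k)$, I would split $\Pr[Z_k=z]$ into the accepted-proposal term and the rejected-then-replaced term. The first term is
\[
q_k(z)\,\alpha_k(z)=\min\{q_k(z),\pi_k(z)\}.
\]
The total rejection probability is
\[
\beta_k=\sum_w q_k(w)\bigl(1-\alpha_k(w)\bigr)=\sum_w \bigl(q_k(w)-\pi_k(w)\bigr)_+ .
\]
Since both laws sum to one, $\sum_w(q_k-\pi_k)_+=\sum_w(\pi_k-q_k)_+=D_{\mathrm{TV}}(\pi_k,q_k)$, so $\beta_k$ equals the normalizer of $r_k$ in \eqref{eq:rs-residual}. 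The second term is then $\beta_k r_k(z)=(\pi_k(z)-q_k(z))_+$. Adding the two terms gives $\min\{q_k,\pi_k\}+(\pi_k-q_k)_+=\pi_k(z)$, so $Z_k\sim\pi_k$. The degenerate case $\beta_k=0$ is exactly the case $\pi_k=q_k$ noted in the setup, where $r_k$ is never queried. Proposals with $q_k(z)=0$ are never drawn, so $\alpha_k$ only needs to be defined on the support of $q_k$. The same computation gives $\Pr[\mathrm{accept}\mid y_k]=\sum_z\min\{q_k(z),\pi_k(z)\}=1-D_{\mathrm{TV}}(\pi_k,q_k)$.

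\textbf{Optimality.} I would use the standard coupling bound. For any coupling $(A,B)$ with marginals $\pi_k$ and $q_k$,
\[
\Pr[A=B]=\sum_z\Pr[A=B=z]\le\sum_z\min\{\pi_k(z),q_k(z)\}=1-D_{\mathrm{TV}}(\pi_k,q_k).
\]
The accept--residual rule is itself such a coupling of the proposal $\widetilde Z_k\sim q_k$ and the committed $Z_k\sim\pi_k$, and its event $\{\widetilde Z_k=Z_k\}$ contains the acceptance event. It therefore attains the bound, and the bound cannot be exceeded.

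\textbf{Sequential composition.} This is the step that needs the most care. Let $\mathcal F_k$ be the history up to the verified predecessor $y_k$: past states, proposals, and accept/reject coins. I need two facts. First, conditional on $\mathcal F_k$, the proposal is a fresh draw from a $q_k$ that is $\mathcal F_k$-measurable. Second, the acceptance uniform and the residual draw are fresh randomness. Both are conditions on the drafter: the proposal must genuinely be sampled from the $q_k$ supplied to the verifier. With these, the per-link result gives $\Pr[Z_k=z\mid\mathcal F_k]=\pi_k(z)$. This is exactly the draft-free transition law, since $\pi_k$ puts all its mass on $s^\star(y_k)$ and, there, on $\pi_{T,s^\star(y_k)}(\cdot\mid y_k,C)$. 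Note that a proposal at a wrong position has $\pi_k=0$ and is always rejected. The chain rule over commitments then matches the trajectory law with $X^{\mathrm{DF}}_T$, and the block-induction remark after Proposition~\ref{prop:greedy} extends this to the full sequence. Within a round, a rejection ends the round and the next round starts from a fresh verified predecessor, so the round structure does not disturb this conditional-independence bookkeeping.
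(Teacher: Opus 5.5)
Your proposal is correct and follows essentially the same route as the paper. It uses the accepted mass $\min\{q_k(z),\pi_k(z)\}$, the rejected mass $(\pi_k(z)-q_k(z))_+$, the acceptance probability $\sum_z\min\{q_k(z),\pi_k(z)\}=1-D_{\mathrm{TV}}(\pi_k,q_k)$, and then the chain rule over verified predecessors. You also prove the coupling bound $\Pr[A=B]\le\sum_z\min\{\pi_k(z),q_k(z)\}$, which the paper only asserts, and you spell out the fresh-randomness and measurability conditions behind the sequential step; these refine the paper's argument rather than depart from it.
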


\begin{proof}
Fix $y_k$ and abbreviate
\[
    \pi=\pi_k,
    \qquad
    q=q_k,
    \qquad
    r=r_k.
\]

\paragraph{Step 1: accepted mass.}
For arbitrary $z\in\mathcal Z(y_k)$ with $q(z)>0$,
\begin{align}
    \Pr[
        \widetilde Z_k=z,
        \mathrm{accept}
    ]
    &=
    q(z)
    \min
    \left\{
        1,\frac{\pi(z)}{q(z)}
    \right\}
    \\
    &=
    \min\{q(z),\pi(z)\}.
\label{eq:rs-proof-accepted-mass}
\end{align}
If $q(z)=0$, both sides of
\eqref{eq:rs-proof-accepted-mass} are zero, so the identity holds for
all $z$.

Summing over $z$,
\begin{equation}
\label{eq:rs-proof-total-accept}
    \Pr[\mathrm{accept}]
    =
    \sum_z
    \min\{q(z),\pi(z)\}.
\end{equation}
Hence
\begin{equation}
\label{eq:rs-proof-reject}
    \Pr[\mathrm{reject}]
    =
    1-
    \sum_z
    \min\{q(z),\pi(z)\}.
\end{equation}

\paragraph{Step 2: rejected mass.}
Because $\pi$ is normalized,
\begin{align}
    1-
    \sum_z\min\{q(z),\pi(z)\}
    &=
    \sum_z
    \left[
        \pi(z)-\min\{q(z),\pi(z)\}
    \right]
    \\
    &=
    \sum_z
    (\pi(z)-q(z))_+.
\label{eq:rs-proof-residual-mass}
\end{align}
Therefore
\[
    \Pr[\mathrm{reject}]
    =
    \sum_z
    (\pi(z)-q(z))_+.
\]
If this quantity is positive, then
\[
    r(z)
    =
    \frac{
        (\pi(z)-q(z))_+
    }{
        \Pr[\mathrm{reject}]
    },
\]
so
\begin{equation}
\label{eq:rs-proof-rejected-mass}
    \Pr[
        \mathrm{reject},
        Z_k=z
    ]
    =
    (\pi(z)-q(z))_+.
\end{equation}

\paragraph{Step 3: one-step exactness.}
Combining the accepted and rejected branches,
\begin{align}
    \Pr[Z_k=z]
    &=
    \min\{q(z),\pi(z)\}
    +
    (\pi(z)-q(z))_+
    \\
    &=
    \pi(z).
\end{align}
Thus
\[
    Z_k\sim\pi_k.
\]

If the denominator of \eqref{eq:rs-residual} is zero, then
$(\pi(z)-q(z))_+=0$ for every $z$.  Hence $\pi(z)\le q(z)$
everywhere, and because both distributions have total mass one,
$\pi=q$.  In that case \eqref{eq:rs-proof-total-accept} gives
$\Pr[\mathrm{accept}]=1$, so the residual branch is never used.

\paragraph{Step 4: maximal acceptance.}
For discrete distributions,
\[
    D_{\mathrm{TV}}(\pi,q)
    =
    1-
    \sum_z
    \min\{\pi(z),q(z)\}.
\]
Combining this identity with
\eqref{eq:rs-proof-total-accept},
\[
    \Pr[\mathrm{accept}]
    =
    1-D_{\mathrm{TV}}(\pi_k,q_k).
\]
The quantity
\[
    \sum_z
    \min\{\pi_k(z),q_k(z)\}
\]
is the maximal possible agreement probability of two random variables
with marginals $\pi_k$ and $q_k$.  The accept--residual construction
therefore realizes a maximal coupling.

\paragraph{Step 5: trajectory exactness.}
Let $Y_0,Y_1,\ldots$ denote the states produced by sequential RS.
Conditioned on any realized history $Y_{0:k}=y_{0:k}$, one-step
exactness gives
\[
    Z_k \mid (Y_{0:k}=y_{0:k})
    \sim \pi_k(\cdot\mid y_k),
\]
which is exactly the target's one-step commit law at $y_k$. Hence, for
every successor $y'$,
\[
    \Pr[
        Y_{k+1}=y'
        \mid
        Y_{0:k}=y_{0:k}
    ]
    =
    \Pr_{\mathrm{DF}}[
        X_{k+1}=y'
        \mid
        X_k=y_k
    ].
\]

Since RS and draft-free decoding start from the same initial state, the
chain rule implies that every finite trajectory $y_0,\ldots,y_m$ has
the same probability under both procedures:
\[
    \Pr[
        Y_0=y_0,\ldots,Y_m=y_m
    ]
    =
    \Pr_{\mathrm{DF}}[
        X_0=y_0,\ldots,X_m=y_m
    ].
\]
Therefore, the complete RS trajectory has the draft-free target
distribution.
\end{proof}

\subsection{Relation to batched tree verification}
\label{app:rs-tree}

Proposition~\ref{prop:rs} assumes the classical per-link proposal
interface: at each verified state $y_k$, a single proposal is sampled
from $q_k(\cdot \mid y_k)$ and corrected by the accept--residual rule.
Only after this step determines the realized successor is the proposal
at the next verified state generated.

After a rejection, the residual sample need not continue the drafted
chain, so the proposals are verified one link at a time.

Coupled-noise verification avoids this sequential dependence. Once the shared
randomness $G$ is fixed, the target defines a unique successor
$\Phi_G(y)$ for every hypothetical state $y$. Each drafted edge
$(y,z)$ can therefore be verified independently by the state-equality
test
\[
    z = \Phi_G(y),
\]
allowing all hypothetical states in the tree to be evaluated in the
same target batch and the unique target-consistent path to be
recovered afterward.

\subsection{Empirical validation}
\label{app:rs-certification}

Proposition~\ref{prop:rs} establishes the correctness of the idealized
accept--residual procedure. We also checked that our implementation matches its acceptance-rate identity on live
target states, including deliberately miscalibrated proposal distributions: on every tested state, the acceptance rate
over repeated executions differs from $1-D_{\mathrm{TV}}(\pi_k,q_k)$ by at most $0.002$. These checks validate the
implementation only; the distributional guarantee follows from Proposition~\ref{prop:rs}.

\section{Drafter Training and Draft-Tree Construction}
\label{app:drafter-algorithms}

This appendix gives the complete procedures used to train the
prefix-conditional commit-order predictor and to construct the speculative
draft tree at inference time. The notation follows
Section~\ref{sec:method-drafter}. In particular, $\widehat p_t$ denotes the
stale target conditionals available at speculative round $t$, and
$q_\phi(s\mid y)$ denotes the predicted distribution over the target's next
committed position conditioned on hypothetical prefix state $y$.

\subsection{Training the commit-order predictor}
\label{app:drafter-training-algorithm}

Training uses only draft-free trajectories generated by the target.
For each target state $x_t$, we retain the stale conditionals
$\widehat p_t$ available at that state together with the subsequent target
commit sequence
\[
    \mathbf{s}_t^\star
    =
    (s_t^\star,s_{t+1}^\star,\ldots).
\]
For every prefix depth $d$, we replay the first $d$ target commitments to
construct the hypothetical state $y_{t,d}$ while keeping
$\widehat p_t$ fixed. The next target commitment $s_{t+d}^\star$ is then
used as the supervision target. Hence each training example has the form
\[
    (\widehat p_t,y_{t,d},s_{t+d}^\star).
\]

\begin{algorithm}[t]
\caption{
Training the prefix-conditional commit-order predictor from draft-free
target trajectories.
}
\label{alg:drafter-training}
\begin{algorithmic}[1]

\Require Target $p_\theta$, prompt set $\mathcal S$, maximum replay depth
$d_{\max}$
\Ensure Trained order model $q_{\phi^\star}$

\State $\mathcal R_\theta
       \gets \textsc{Decode}(p_\theta,\mathcal S)$
       \Comment{Collect draft-free target trajectories}

\State $\mathcal E\gets\varnothing$

\For{each recorded target state
      $(x_t,\widehat p_t,\mathbf{s}_t^\star)\in\mathcal R_\theta$}

    \For{$d=0,\ldots,d_{\max}$}

        \State $y_{t,d}\gets
        \textsc{Prefix}(x_t,\mathbf{s}_t^\star,d)$
        \Comment{Replay the first $d$ target commitments}

        \State $\mathcal E\gets
        \mathcal E\cup
        \{(\widehat p_t,y_{t,d},s^\star_{t+d})\}$
        \Comment{Next position is the supervision target}

    \EndFor

\EndFor

\State $\displaystyle
\phi^\star
\gets
\argmin_\phi
\mathbb E_{(\widehat p,y,s^\star)\sim\mathcal E}
\left[-\log q_\phi(s^\star\mid y)\right]$

\State \Return $q_{\phi^\star}$

\end{algorithmic}
\end{algorithm}

The important feature of this construction is that the stale conditionals
$\widehat p_t$ remain unchanged while the hypothetical prefix is extended.
The resulting examples therefore match the information available to the
drafter during recursive inference rather than giving it access to future
target evaluations.

\subsection{Prefix-conditional draft-tree construction}
\label{app:drafter-inference-algorithm}

At speculative round $t$, drafting begins from the current state $x_t$ and
the stale target conditionals $\widehat p_t$. For each active hypothetical
state $y$, the order model produces
\[
    q_\phi(s\mid y),
    \qquad s\in\mathcal M(y).
\]
Each candidate position is paired with token hypotheses from
$\mathcal H_t(s)$. At $T=0$, the primary hypothesis is
\[
    \widetilde v_0(s)
    =
    \argmax_v \widehat p_{t,s}(v),
\]
with the stale runner-up additionally retained when the top-$1$ probability
is below $0.6$. At $T>0$, $\mathcal H_t(s)$ is determined by the coupled
stale draw in Eq.~\ref{eq:stale-gumbel}.

For every extension
\[
    z=y\oplus(s,v),
\]
we use the same prefix mass defined in the main text,
\[
    m(z)
    =
    m(y)\,
    q_\phi(s\mid y)\,
    \widehat p_{t,s}(v),
    \qquad
    m(x_t)=1.
\]
DDTree then allocates the available draft budget to the highest-mass states.

\begin{algorithm}[t]
\caption{
Prefix-conditional construction of the speculative draft tree under budget
$D$.
}
\label{alg:drafter-inference}
\begin{algorithmic}[1]

\Require Current target state $x_t$, stale conditionals $\widehat p_t$,
order model $q_\phi$, draft budget $D$
\Ensure Draft tree $\mathcal T_t$

\State $\mathcal T_t\gets\{x_t\}$
\State $m(x_t)\gets1$
\State $\mathcal F\gets\{x_t\}$
\Comment{Current expansion frontier}

\While{$\mathcal F\neq\varnothing$}

    \State $\mathcal C\gets\varnothing$

    \For{each active prefix $y\in\mathcal F$}

        \State $q_y(\cdot)\gets q_\phi(\cdot\mid y)$
        \Comment{Predict next commit position}

        \For{each $s\in\mathcal M(y)$}

            \State construct $\mathcal H_t(s)$ from
            $\widehat p_{t,s}$
            \Comment{Token hypotheses}

            \For{each $v\in\mathcal H_t(s)$}

                \State $z\gets y\oplus(s,v)$

                \State $\displaystyle
                m(z)\gets
                m(y)\,
                q_y(s)\,
                \widehat p_{t,s}(v)$

                \State $\mathcal C\gets\mathcal C\cup\{z\}$

            \EndFor

        \EndFor

    \EndFor

    \State $(\mathcal T_t,\mathcal F)
    \gets
    \textsc{DDTreeTopD}(\mathcal T_t,\mathcal C,D)$

\EndWhile

\State \Return $\mathcal T_t$

\end{algorithmic}
\end{algorithm}

Because every multiplicative factor in the prefix mass lies in $[0,1]$,
a child's mass cannot exceed that of its parent. This monotonicity permits
level-wise expansion while recovering the same top-$D$ fixpoint as
sequential best-first search. The final model applies no calibration temperature to $q_\phi$ before the
search.

\section{Experimental setup}
\label{app:setup}

\subsection{Drafter training details}
\label{app:training}
Architecture: three-layer transformer encoder over the masked positions,
$d_{\text{model}}{=}128$, $8$ heads, top-$32$ stale probabilities per position
($m_{\text{top}}{=}32$), the stale runner-up as a second token candidate (the hedge) when the top-$1$ probability
is below $0.6$;
$0.62$M parameters. Inputs per masked position: stale top-32 probabilities (raw and
log), eight summary statistics, two path features (taken-position set, depth). Training
data: decoding trajectories of the target itself at $T{=}0$, with
prefix-conditional examples extracted at every (round, depth). The prompt pools are
recorded per shard and differ by target. The SDAR-8B b32 and LLaDA2.0-mini b32
drafters are trained on CodeAlpaca~\citep{chaudhary_code_2023} alone, ${\approx}21$k
and ${\approx}14$k trajectories; the SDAR-4B drafter on a four-pool mixture of
CodeAlpaca, UltraChat~\citep{ding_enhancing_2023} and the chat and code shards of the
Llama-Nemotron post-training corpus~\citep{bercovich_llama-nemotron_2025}, $15{,}027$
trajectories. No pool contains MATH-500 or any evaluation sample. Optimization: Adam,
learning rate $3{\times}10^{-4}$, $10$ epochs; the released SDAR-8B b32 checkpoint
reaches $0.75$ held-out next-commit accuracy ($0.77$ at depth 0, $0.72$ at
depth 7). Training a drafter takes under two GPU-hours on one card. The identical
procedure (data regenerated per target and width) produces the checkpoints for
SDAR-8B b32/b16, SDAR-4B, SDAR-1.7B and LLaDA2.0-mini b32/b16; no per-setting tuning.

\paragraph{What the drafters have seen.} Two consequences follow, disclosed rather
than controlled. The SDAR-8B and mini drafters have seen only code, while two of the
four evaluation tasks are mathematics: their GSM8K and MATH500 runs are out of domain
for the drafter, which makes those results evidence that the learned order carries
across domains rather than memorizing one. And because the SDAR-4B drafter was trained
on a different mixture, any difference between the SDAR-8B and SDAR-4B tables
confounds target scale with corpus composition; we report both and attribute neither.
Regenerating the SDAR-8B corpus would invalidate every downstream run of the
benchmark, so the confound is left in place and stated here.

\subsection{Draft layout and verification cost}
\label{app:layout}
A target forward at a block state yields a distribution at every masked position; the
most confident position is filled and the block is re-run. To verify $D$ drafts in one
forward, each draft is appended behind the cached prefix as a separate copy of the
block with its own mask layout, and a copy attends to the prefix and to itself only, so
its logits are exactly what the target would compute at that state. One forward over
$(1{+}D)L$ positions therefore returns the target's own next commit at the root and at
every draft, a rejected draft costs positions but never changes the output, and packing
a tree into a single sequence is not exact here, since a copy's logits depend on its
complete mask layout.

\subsection{Decoding configuration}
Strict single-commit decoding (the threshold decoder of Section~\ref{sec:method-sets} for \sysSets{}),
generation length 256 ($64$ and $512$ in Appendix~\ref{app:ext-lengths}), block width 32 (16 where
stated), exact prefix KV cache on (for LLaDA2.0-mini: an exact prefix-KV path we
added to the released model code, verified to reproduce the uncached decoder's
commits), drafts laid out as copies of the block appended behind the prefix with a blockwise
mask, no padding of drafted forwards (every drafted run is unpadded),
and each run records whether its configuration is distribution-preserving. Prompt templates and stopping rules per task are those of the released
harness (Appendix~\ref{app:dataset}).

\subsection{Baseline configuration}
Spiffy's draft graph is calibrated with the released procedure of
\citet{agrawal_structuring_2026} (offline collection on held-out prompts:
MBPP train and MATH500 problems disjoint from the evaluation samples, 25 each,
lookahead 6, graph size 24) separately for SDAR-8B, SDAR-4B, SDAR-1.7B and LLaDA2.0-mini, one graph per target and decoding rule (one commit per forward; threshold $\tau{=}0.9$), and
truncated to the budget at decode time. The SSD-style free chain uses candidate
rank 1, the sampler's ordering and depth 4. The BlockSpec-style confidence tree is
a best-first tree over the target's renormalized stale confidences. All baselines run through the same decoder and
verifier as \sys{}; their draft-free control is shared with ours per task.

\subsection{Mass-floor calibration}
\label{app:gamma-calib}
The floor $\gamma$ of \sys{}-$\gamma$ is calibrated per target on $50$ held-out
samples, MBPP test samples $350$--$399$ (every evaluated MBPP run uses samples $0$--$349$):
$\gamma \in \{0, 0.02, 0.05, 0.1, 0.2\}$, unpadded, $D \in \{8, 12, 20\}$, $T{=}0$,
seed $0$, the fifteen runs back to back on one card with that card's draft-free run, which every
ratio of the grid is taken against. The selected value is the argmax of the mean throughput ratio over
the three budgets, the smaller floor being kept when two are within $3\%$
(Table~\ref{tab:gamma-calib}). $\gamma{=}0.02$ wins on the three SDAR targets, by $0.4\%$ over $0.05$ on
SDAR-8B, $2.7\%$ on SDAR-4B and $2.3\%$ on SDAR-1.7B, and by $9.5$--$12.5\%$ over no floor; the gain
comes from the large budgets ($D20$ on SDAR-8B: $3.29\times$ without a floor, $4.33\times$ with it). On
LLaDA2.0-mini $0.02$ leads $\gamma{=}0$ by $1.3\%$, inside the tie band, so the floor is switched off: that
target's throughput keeps rising with the budget without one. On the evaluated samples the calibrated
floor is at or above the unpadded $\gamma{=}0$ recipe in every SDAR-8B and SDAR-1.7B setting
(Tables~\ref{tab:main-sdar}, \ref{tab:main-full}, \ref{tab:main-full-b}, \ref{tab:main-17}
and~\ref{tab:main-17-t1}), and in all $16$ SDAR-4B settings (Tables~\ref{tab:main-4b} and~\ref{tab:main-4b-t1}).

\begin{table}[htbp]
\centering\small
\caption{Mass-floor calibration grids: mean throughput ratio over $D8/D12/D20$ on
the $50$ held-out MBPP samples (test $350$--$399$), unpadded, seed $0$, against the same card's draft-free run.
Bold: selected.}
\label{tab:gamma-calib}
\begin{tabular}{lccccc}
\toprule
target & $\gamma{=}0$ & $0.02$ & $0.05$ & $0.1$ & $0.2$ \\
\midrule
SDAR-8B b32 & 3.64\SZ & \textbf{4.10}\SZ & 4.08\SZ & 3.92\SZ & 3.72\SZ \\
SDAR-4B b32 & 4.04\SZ & \textbf{4.47}\SZ & 4.35\SZ & 4.24\SZ & 4.02\SZ \\
SDAR-1.7B b32 & 4.04\SZ & \textbf{4.43}\SZ & 4.33\SZ & 4.21\SZ & 3.92\SZ \\
LLaDA2.0-mini b32 & \textbf{2.25}\SZ & 2.28\SZ & 2.18\SZ & 2.09\SZ & 1.88\SZ \\
\bottomrule
\end{tabular}

\end{table}

\subsection{Methods outside the comparison}
\label{app:not-compared}
\paragraph{SimSD.} SimSD~\citep{cui_simsd_2026} drafts with a same-family sibling model
and verifies the drafted tokens in one forward under a plug-in attention mask that
lends them temporally valid contexts. That test approximates the sequential denoising
distribution instead of reproducing it, and its reported speedups come with changed
(on average improved) generation quality. It is therefore a lossy accelerator, not a
lossless speculative decoder, and its throughput is not commensurable with that of the
exact methods compared here; we do not compare against it.

\paragraph{MRP, from reported numbers.} MRP~\citep{xu_multi-token_2026} trains a
three-layer residual head to extrapolate the backbone's logits up to $K{=}3$ denoising
steps ahead and keeps a drafted token where the backbone's argmax agrees with it. It
reports results on the same SDAR-Chat 1.7B, 4B and 8B targets and the same four tasks
as this paper (its Table~2: block width $16$, greedy decoding, one H100), against a
one-token-per-step baseline, which is the draft-free decoder of
Section~\ref{sec:observations}. We did not run its code; Table~\ref{tab:mrp} sets its
reported speedups beside \sys{}'s $D8$ throughput ratio over our draft-free control
(block width $32$, length $256$, one card). The two are measured in different
implementations, so the comparison is between relative gains: \sys{} is ahead in all
twelve settings, by $1.8$--$3.2\times$, and uses $3.2$--$4.9\times$ fewer target
forwards than draft-free decoding, a quantity MRP does not report. Its acceptance test
is argmax equality, which has no distribution-preserving extension to sampling.

\begin{table}[htbp]
\centering\small
\caption{MRP's reported speedups next to \sys{}. MRP: speculative mode ($K{=}3$)
over its one-token-per-step baseline, Table~2 of \citet{xu_multi-token_2026}
(one H100, block width $16$, greedy). \sys{}: $D8$ throughput ratio over our
draft-free control (block width $32$, length $256$, one card, seed $0$), NFE reduction
in parentheses. MRP generates up to $4096$ tokens. Different implementations: relative
gains only.}
\label{tab:mrp}
\begin{tabular}{llcccc}
\toprule
target & method & GSM8K & MATH500 & HumanEval & MBPP \\
\midrule
\multirow{2}{*}{SDAR-1.7B} & MRP (reported) & $1.25\times$ & $1.23\times$ & $1.23\times$ & $1.17\times$ \\
 & \sys{} (ours) & $\mathbf{2.81\times}$ (3.6)\SZ & $\mathbf{2.48\times}$ (3.2)\SZ & $\mathbf{3.23\times}$ (4.2)\SZ & $\mathbf{3.74\times}$ (4.9)\SZ \\
\midrule
\multirow{2}{*}{SDAR-4B} & MRP (reported) & $1.36\times$ & $1.26\times$ & $1.35\times$ & $1.27\times$ \\
 & \sys{} (ours) & $\mathbf{3.27\times}$ (4.0)\SZ & $\mathbf{2.61\times}$ (3.3)\SZ & $\mathbf{3.49\times}$ (4.4)\SZ & $\mathbf{3.79\times}$ (4.9)\SZ \\
\midrule
\multirow{2}{*}{SDAR-8B} & MRP (reported) & $1.40\times$ & $1.39\times$ & $1.34\times$ & $1.34\times$ \\
 & \sys{} (ours) & $\mathbf{3.00\times}$ (3.9)\SZ & $\mathbf{2.47\times}$ (3.2)\SZ & $\mathbf{3.44\times}$ (4.3)\SZ & $\mathbf{3.65\times}$ (4.7)\SZ \\
\bottomrule
\end{tabular}

\end{table}

\subsection{Full-attention targets (LLaDA-8B, Dream-7B)}
LLaDA-8B-Instruct and Dream-7B-Instruct attend over the whole sequence, so the keys
and values of the prefix depend on the block being decoded and no exact prefix cache
exists for them. For the transfer rows of Table~\ref{tab:transfer} both run under an
approximate prefix cache of the Fast-dLLM kind~\citep{wu_fast-dllm_2026}: the prefix
is encoded once when a block opens and reused, unrefreshed, for every forward of that
block; the draft-free control of each task runs under the same cache, and every such
run is recorded as not distribution-preserving. The draft replicas use the same
concat layout as on SDAR, rotated at the positions of the block they draft for; for
LLaDA-8B this required threading explicit position ids through the released model
code (Dream's already accepts them), and the patched model reproduces the released
one exactly when positions are left implicit. Two checks were run in fp32 before the
benchmark: an appended, attention-isolated replica changes the decoded block's logits
by at most $10^{-4}$ (LLaDA) and $4{\times}10^{-4}$ (Dream), and a replica of the
true block reproduces that block's own logits to $10^{-4}$ and $2{\times}10^{-4}$,
so verification on these targets is exact relative to the cached decoder; the
approximation lives entirely in the cache, which is why these rows are reported
against the cached control and kept out of every lossless claim.

\subsection{Accounting details}
\label{app:accounting}
NFE counts target forwards only; oracle lookahead and drafter forwards are excluded
from it, and the drafter's own cost is charged to wall clock and therefore to the
throughput ratio. Throughput is measured per sample as generated
tokens over net wall time of the generation loop, summed over the $n$ samples of a task, and
reported as the ratio to the draft-free run of the same task, model and
temperature on the same machine. The draft-free run is a separate run with the same seed (at $T{>}0$, the same Gumbel
table), and every method of a task is normalized by it; each run is alone on its card. Margins under about $5\%$ are
reported but not called wins: within that margin we say that one method matches another. Oracle throughput subtracts
the oracle's lookahead forwards from wall time. Accuracy uses the task's standard
scorer on the generated text.

\section{Datasets}
\label{app:dataset}

GSM8K: the \texttt{main} test split (1319 problems), first 350, zero-shot chat
prompt, answer extracted after ``\#\#\#\#'' or as the last number. MBPP: the
\texttt{full} test split, first 350 problems (task ids 11--360), 3-shot with the
exemplars of the \texttt{prompt} split as in the MBPP paper and lm-eval, answers
wrapped in \texttt{[BEGIN]}/\texttt{[DONE]} and executed against the reference
tests. HumanEval: all $164$ problems of the test set, completion executed
against the reference tests. MATH500: the standard 500-problem subset of MATH,
first 350, exact-match on the boxed answer after normalization. LLaDA2.0-mini and the tables whose captions
say $n{=}100$ (Tables~\ref{tab:oracle}, \ref{tab:tsweep} and~\ref{tab:fourbed}) use the first $100$ of the same
order. For every task the samples are the exact prefix of the full-set order the released harness
uses, so runs at larger $n$ extend these rather than replace them. The prompt templates and stopping rules are those of the
released harness and ship with it.

\section{Extended results}
\label{app:ext}

\subsection{Extended comparison against related methods}
\label{app:related-table}
Table~\ref{tab:related} compares the published diffusion drafters on the axes this
paper argues are the relevant ones.

\begin{table}[htbp]
\centering\small
\caption{Speculative decoding for diffusion targets, on the axes this paper argues are
the relevant ones. ``Learns order'' = the commit order is predicted by a trained model
rather than read off stale confidence or fixed offline; ``multi-step'' = one drafted
proposal spans more than one commit; ``exact $T{>}0$'' = the committed distribution is
provably the target's at sampling temperature.}
\label{tab:related}
\setlength{\tabcolsep}{4pt}
\begin{tabular}{llccccc}
\toprule
method & drafter & size & learns order & multi-step & exact $T{>}0$ & single device \\
\midrule
Spiffy~\citeyearpar{agrawal_structuring_2026}   & offline graph      & ---    & \xmark & \cmark & \xmark & \cmark \\
SSD~\citeyearpar{gao_self_2025}                 & self (confidence)  & ---    & \xmark & \cmark & \xmark & \cmark \\
BlockSpec~\citeyearpar{pan_blockspec_2025}      & self (conf.\ tree) & ---    & \xmark & \cmark & \xmark & \cmark \\
SimSD~\citeyearpar{cui_simsd_2026}              & sibling model      & 1.7B   & \cmark & \cmark & approx.\ & \xmark \\
MRP~\citeyearpar{xu_multi-token_2026}           & residual head      & 3 layers & \xmark & \cmark & \xmark & \cmark \\
\textbf{\sys{} (ours)}                          & order model        & 0.62M  & \cmark & \cmark & \cmark & \cmark \\
\bottomrule
\end{tabular}
\end{table}

\subsection{Oracle grid}
\label{app:ext-oracle}
Table~\ref{tab:oracle} is the full live-oracle grid behind Figure~\ref{fig:oracle}(a):
NFE (reduction over the setting's draft-free NFE) and throughput ratio (lookahead subtracted) for the
position, token and joint oracles at $D \in \{8, 12, 20\}$ on the four SDAR-8B b32 settings
and on LLaDA2.0-mini MBPP. Honest throughput subtracts the oracle's lookahead
forwards from wall time (on SDAR-8B b32 the correction removes $82$--$89\%$ of the oracle's raw
wall) and divides by the setting's draft-free run.
NFE never rises with the budget for any oracle. At b32 the joint oracle's throughput still rises at
$D20$ on every task ($6.21$--$7.04\times$), the position oracle's on three of four (MATH500 peaks at $D12$,
$4.74\times$), and the token oracle's peaks at $D12$ or earlier ($2.84$--$4.49\times$). On LLaDA2.0-mini MBPP the
order is the same: the position oracle reaches $3.23$--$3.45\times$ and the token oracle $1.98$--$2.25\times$; the joint
oracle still rises at $D20$ ($4.40\times$).

\begin{table}[htbp]
\centering\small
\caption{Live-oracle grid, $T{=}0$, generation length 256, $n{=}100$, seed 0. Per
setting: NFE (reduction $\times$ over the setting's draft-free NFE) and tok/s ratio (lookahead subtracted)
over the setting's draft-free throughput, for the position, token and joint oracles at
$D8$/$D12$/$D20$; the draft-free column gives the baseline NFE and tok/s. Bold:
best throughput per setting.}
\label{tab:oracle}
\setlength{\tabcolsep}{3pt}
\resizebox{\linewidth}{!}{\begin{tabular}{llcccccccccc}
\toprule
 & & & \multicolumn{3}{c}{position oracle} & \multicolumn{3}{c}{token oracle} & \multicolumn{3}{c}{joint oracle} \\
\cmidrule(lr){4-6}\cmidrule(lr){7-9}\cmidrule(lr){10-12}
setting & metric & draft-free & $D8$ & $D12$ & $D20$ & $D8$ & $D12$ & $D20$ & $D8$ & $D12$ & $D20$ \\
\midrule
\multirow{2}{*}{SDAR-8B b32 GSM8K} & NFE & 271.1\SZ & 49.8 (5.4$\times$)\SZ & 42.5 (6.4$\times$)\SZ & 37.5 (7.2$\times$)\SZ & 68.3 (4.0$\times$)\SZ & 65.0 (4.2$\times$)\SZ & 61.6 (4.4$\times$)\SZ & 46.3 (5.8$\times$)\SZ & 37.9 (7.2$\times$)\SZ & 30.4 (8.9$\times$)\SZ \\
 & tok/s ratio & 16.3 tok/s\SZ & 4.62$\times$\SZ & 5.37$\times$\SZ & 5.51$\times$\SZ & 3.34$\times$\SZ & 3.42$\times$\SZ & 3.32$\times$\SZ & 5.00$\times$\SZ & 5.93$\times$\SZ & \textbf{6.78$\times$}\SZ \\
\multirow{2}{*}{SDAR-8B b32 MBPP} & NFE & 267.1\SZ & 46.8 (5.7$\times$)\SZ & 39.7 (6.7$\times$)\SZ & 32.8 (8.2$\times$)\SZ & 53.5 (5.0$\times$)\SZ & 47.3 (5.6$\times$)\SZ & 41.0 (6.5$\times$)\SZ & 43.5 (6.1$\times$)\SZ & 35.3 (7.6$\times$)\SZ & 27.4 (9.7$\times$)\SZ \\
 & tok/s ratio & 16.3 tok/s\SZ & 4.91$\times$\SZ & 5.43$\times$\SZ & 5.58$\times$\SZ & 4.23$\times$\SZ & 4.49$\times$\SZ & 4.35$\times$\SZ & 5.20$\times$\SZ & 6.06$\times$\SZ & \textbf{6.57$\times$}\SZ \\
\multirow{2}{*}{SDAR-8B b32 HumanEval} & NFE & 272.6\SZ & 47.1 (5.8$\times$)\SZ & 39.7 (6.9$\times$)\SZ & 32.9 (8.3$\times$)\SZ & 61.9 (4.4$\times$)\SZ & 57.4 (4.7$\times$)\SZ & 52.8 (5.2$\times$)\SZ & 45.7 (6.0$\times$)\SZ & 37.7 (7.2$\times$)\SZ & 29.9 (9.1$\times$)\SZ \\
 & tok/s ratio & 15.9 tok/s\SZ & 4.94$\times$\SZ & 5.85$\times$\SZ & 6.40$\times$\SZ & 3.85$\times$\SZ & 4.04$\times$\SZ & 3.98$\times$\SZ & 5.22$\times$\SZ & 6.15$\times$\SZ & \textbf{7.04$\times$}\SZ \\
\multirow{2}{*}{SDAR-8B b32 MATH500} & NFE & 272.9\SZ & 53.5 (5.1$\times$)\SZ & 47.7 (5.7$\times$)\SZ & 44.5 (6.1$\times$)\SZ & 81.9 (3.3$\times$)\SZ & 81.9 (3.3$\times$)\SZ & 81.0 (3.4$\times$)\SZ & 48.0 (5.7$\times$)\SZ & 39.8 (6.9$\times$)\SZ & 32.6 (8.4$\times$)\SZ \\
 & tok/s ratio & 16.2 tok/s\SZ & 4.33$\times$\SZ & 4.74$\times$\SZ & 4.65$\times$\SZ & 2.84$\times$\SZ & 2.77$\times$\SZ & 2.50$\times$\SZ & 4.69$\times$\SZ & 5.75$\times$\SZ & \textbf{6.21$\times$}\SZ \\
\midrule
\multirow{2}{*}{LLaDA2.0-mini b32 MBPP} & NFE & 270.4\SZ & 59.4 (4.6$\times$)\SZ & 53.7 (5.0$\times$)\SZ & 51.9 (5.2$\times$)\SZ & 90.1 (3.0$\times$)\SZ & 89.8 (3.0$\times$)\SZ & 88.8 (3.0$\times$)\SZ & 54.1 (5.0$\times$)\SZ & 46.7 (5.8$\times$)\SZ & 42.3 (6.4$\times$)\SZ \\
 & tok/s ratio & 6.6 tok/s\SZ & 3.23$\times$\SZ & 3.45$\times$\SZ & 3.39$\times$\SZ & 2.25$\times$\SZ & 2.21$\times$\SZ & 1.98$\times$\SZ & 3.66$\times$\SZ & 3.96$\times$\SZ & \textbf{4.40$\times$}\SZ \\
\bottomrule
\end{tabular}
}
\end{table}

\subsection{More models}
\label{app:ext-models}
The main comparison repeated on the three non-headline targets the recipe was trained
for: \textbf{SDAR-4B} at $T{=}0$ and $T{=}1$ (Tables~\ref{tab:main-4b}
and~\ref{tab:main-4b-t1}), \textbf{SDAR-1.7B} at both
(Tables~\ref{tab:main-17} and~\ref{tab:main-17-t1}), and \textbf{LLaDA2.0-mini}
at $T{=}0$ and $T{=}1$ (Tables~\ref{tab:main-mini} and~\ref{tab:main-mini-t1}). The recipe is unchanged in each case: one checkpoint, one
calibration, the methods and budgets of Table~\ref{tab:main-sdar}.

\begin{table}[htbp]
\centering\small
\caption{Main results, SDAR-4B b32, $T{=}0$ (format as
Table~\ref{tab:main-sdar}). \sys{} has the lowest NFE and the highest throughput
ratio in every ours-vs-baseline comparison. Seed $0$. Lower block: the threshold decoder ($\tau{=}0.9$), second machine, \THRN{main_sdar4b}, \sysSets{} rows and ratios as in Table~\ref{tab:main-sdar}.}
\label{tab:main-4b}
\setlength{\tabcolsep}{3pt}
\resizebox{\linewidth}{!}{\begin{tabular}{lcccccccc}
\toprule
\nfe{NFE}$\downarrow$ (\tps{tok/s}$\uparrow$) & \multicolumn{2}{c}{GSM8K} & \multicolumn{2}{c}{MBPP} & \multicolumn{2}{c}{HumanEval} & \multicolumn{2}{c}{MATH500} \\
 & \multicolumn{2}{c}{\footnotesize acc.\ 0.85--0.87} & \multicolumn{2}{c}{\footnotesize acc.\ 0.24--0.27} & \multicolumn{2}{c}{\footnotesize acc.\ 0.59--0.63} & \multicolumn{2}{c}{\footnotesize acc.\ 0.13--0.14} \\
\cmidrule(lr){2-3}\cmidrule(lr){4-5}\cmidrule(lr){6-7}\cmidrule(lr){8-9}
method & $D8$ & $D20$ & $D8$ & $D20$ & $D8$ & $D20$ & $D8$ & $D20$ \\
\midrule
no draft ($D0$) & \multicolumn{2}{c}{\nfe{270.8}$\downarrow$ (\tps{1.00}$\uparrow$)\SZ} & \multicolumn{2}{c}{\nfe{269.2}$\downarrow$ (\tps{1.00}$\uparrow$)\SZ} & \multicolumn{2}{c}{\nfe{271.4}$\downarrow$ (\tps{1.00}$\uparrow$)\SZ} & \multicolumn{2}{c}{\nfe{271.8}$\downarrow$ (\tps{1.00}$\uparrow$)\SZ} \\
\midrule
\rowcolor{oursrow}\textbf{\sys{} (ours)} & \nfe{\textbf{68.5}}\,(\tps{\underline{3.27}})\SZ & \nfe{\textbf{51.1}}\,(\tps{\underline{3.69}})\SZ & \nfe{\textbf{54.6}}\,(\tps{\underline{3.79}})\SZ & \nfe{\textbf{37.8}}\,(\tps{\underline{3.74}})\SZ & \nfe{\textbf{61.1}}\,(\tps{\underline{3.49}})\SZ & \nfe{\textbf{45.0}}\,(\tps{\underline{3.93}})\SZ & \nfe{\textbf{82.3}}\,(\tps{\underline{2.61}})\SZ & \nfe{\textbf{64.6}}\,(\tps{\underline{2.87}})\SZ \\
\rowcolor{oursrow}\sys{}-$\gamma$ (ours) & \nfe{\underline{68.5}}\,(\tps{\textbf{3.29}})\SZ & \nfe{\underline{53.2}}\,(\tps{\textbf{3.96}})\SZ & \nfe{\underline{54.8}}\,(\tps{\textbf{3.92}})\SZ & \nfe{\underline{39.7}}\,(\tps{\textbf{4.69}})\SZ & \nfe{\underline{61.3}}\,(\tps{\textbf{3.54}})\SZ & \nfe{\underline{46.6}}\,(\tps{\textbf{4.33}})\SZ & \nfe{\underline{82.6}}\,(\tps{\textbf{2.63}})\SZ & \nfe{\underline{68.2}}\,(\tps{\textbf{3.08}})\SZ \\
Spiffy graph & \nfe{91.2}\,(\tps{2.67})\SZ & \nfe{71.0}\,(\tps{3.24})\SZ & \nfe{82.4}\,(\tps{2.83})\SZ & \nfe{59.7}\,(\tps{3.14})\SZ & \nfe{88.9}\,(\tps{2.70})\SZ & \nfe{66.8}\,(\tps{3.44})\SZ & \nfe{97.7}\,(\tps{2.39})\SZ & \nfe{80.5}\,(\tps{2.79})\SZ \\
SSD free chain & \nfe{91.5}\,(\tps{2.67})\SZ & \nfe{83.7}\,(\tps{2.68})\SZ & \nfe{83.3}\,(\tps{2.82})\SZ & \nfe{74.5}\,(\tps{2.26})\SZ & \nfe{89.5}\,(\tps{2.74})\SZ & \nfe{78.6}\,(\tps{2.74})\SZ & \nfe{98.5}\,(\tps{2.38})\SZ & \nfe{91.5}\,(\tps{2.33})\SZ \\
BlockSpec tree & \nfe{136.3}\,(\tps{1.74})\SZ & \nfe{114.3}\,(\tps{1.80})\SZ & \nfe{135.1}\,(\tps{1.70})\SZ & \nfe{111.0}\,(\tps{1.35})\SZ & \nfe{135.9}\,(\tps{1.71})\SZ & \nfe{113.3}\,(\tps{1.75})\SZ & \nfe{136.1}\,(\tps{1.68})\SZ & \nfe{115.5}\,(\tps{1.75})\SZ \\
\specialrule{0.6pt}{2.5pt}{0pt}
\multicolumn{9}{c}{\emph{threshold decoding, $\tau{=}0.9$ (second machine, $n{=}350$, HumanEval $164$)}} \\
\rowcolor[gray]{0.90}$T{=}0$ & \multicolumn{2}{c}{acc.\ 0.84--0.86} & \multicolumn{2}{c}{acc.\ 0.24--0.27} & \multicolumn{2}{c}{acc.\ 0.58--0.61} & \multicolumn{2}{c}{acc.\ 0.13--0.15} \\
no draft & \multicolumn{2}{c}{\nfe{59.0}$\downarrow$ (\tps{4.55}$\uparrow$)\SZ} & \multicolumn{2}{c}{\nfe{40.6}$\downarrow$ (\tps{6.43}$\uparrow$)\SZ} & \multicolumn{2}{c}{\nfe{38.4}$\downarrow$ (\tps{6.91}$\uparrow$)\SZ} & \multicolumn{2}{c}{\nfe{85.4}$\downarrow$ (\tps{3.14}$\uparrow$)\SZ} \\
\midrule
\rowcolor{oursrow}\textbf{\sysSets{}-$\gamma$ (ours)} & \nfe{\textbf{30.9}}\,(\tps{\textbf{7.34}})\SZ & \nfe{\textbf{30.6}}\,(\tps{\textbf{7.38}})\SZ & \nfe{\textbf{22.0}}\,(\tps{\textbf{10.22}})\SZ & \nfe{\textbf{21.5}}\,(\tps{\textbf{10.40}})\SZ & \nfe{\textbf{21.8}}\,(\tps{\textbf{10.42}})\SZ & \nfe{\textbf{21.7}}\,(\tps{\textbf{10.42}})\SZ & \nfe{\textbf{45.0}}\,(\tps{\textbf{5.04}})\SZ & \nfe{\textbf{44.9}}\,(\tps{\textbf{5.07}})\SZ \\
\rowcolor{oursrow}\sysSets{}-$\gamma$, exact (ours) & \nfe{\underline{33.2}}\,(\tps{\underline{6.85}})\SZ & \nfe{\underline{33.0}}\,(\tps{\underline{6.83}})\SZ & \nfe{\underline{22.7}}\,(\tps{\underline{9.90}})\SZ & \nfe{\underline{22.3}}\,(\tps{\underline{10.00}})\SZ & \nfe{\underline{23.4}}\,(\tps{\underline{9.73}})\SZ & \nfe{\underline{23.2}}\,(\tps{\underline{9.66}})\SZ & \nfe{\underline{48.0}}\,(\tps{\underline{4.75}})\SZ & \nfe{\underline{47.4}}\,(\tps{\underline{4.77}})\SZ \\
Spiffy graph & \nfe{37.8}\,(\tps{6.27})\SZ & \nfe{37.7}\,(\tps{6.23})\SZ & \nfe{25.1}\,(\tps{9.48})\SZ & \nfe{24.8}\,(\tps{8.91})\SZ & \nfe{28.0}\,(\tps{8.59})\SZ & \nfe{28.0}\,(\tps{8.51})\SZ & \nfe{54.4}\,(\tps{4.34})\SZ & \nfe{53.8}\,(\tps{4.33})\SZ \\
SSD free chain & \nfe{37.9}\,(\tps{6.26})\SZ & \nfe{37.1}\,(\tps{5.01})\SZ & \nfe{24.8}\,(\tps{9.46})\SZ & \nfe{23.9}\,(\tps{6.43})\SZ & \nfe{28.1}\,(\tps{8.56})\SZ & \nfe{27.6}\,(\tps{6.77})\SZ & \nfe{54.2}\,(\tps{4.36})\SZ & \nfe{52.9}\,(\tps{3.44})\SZ \\
BlockSpec tree & \nfe{41.1}\,(\tps{5.62})\SZ & \nfe{38.4}\,(\tps{4.64})\SZ & \nfe{27.7}\,(\tps{8.33})\SZ & \nfe{25.4}\,(\tps{5.82})\SZ & \nfe{29.6}\,(\tps{7.93})\SZ & \nfe{28.3}\,(\tps{6.40})\SZ & \nfe{58.9}\,(\tps{3.92})\SZ & \nfe{55.3}\,(\tps{3.04})\SZ \\
\bottomrule
\end{tabular}
}
\end{table}

\begin{table}[htbp]
\centering\small
\caption{SDAR-4B b32 at $T{=}1$ (coupled-noise match; free chain also under
rejection sampling). Every comparison won on both columns. Seed $0$. Both blocks from the second machine, each ratio over that machine's draft-free run. Lower block: the threshold decoder ($\tau{=}0.9$), \THRN{main_sdar4b_t1}, \sysSets{} rows and ratios as in Table~\ref{tab:main-sdar}.}
\label{tab:main-4b-t1}
\setlength{\tabcolsep}{3pt}
\resizebox{\linewidth}{!}{\begin{tabular}{lcccccccc}
\toprule
\nfe{NFE}$\downarrow$ (\tps{tok/s}$\uparrow$) & \multicolumn{2}{c}{GSM8K} & \multicolumn{2}{c}{MBPP} & \multicolumn{2}{c}{HumanEval} & \multicolumn{2}{c}{MATH500} \\
 & \multicolumn{2}{c}{\footnotesize acc.\ 0.85--0.88} & \multicolumn{2}{c}{\footnotesize acc.\ 0.19--0.23} & \multicolumn{2}{c}{\footnotesize acc.\ 0.55--0.63} & \multicolumn{2}{c}{\footnotesize acc.\ 0.13--0.17} \\
\cmidrule(lr){2-3}\cmidrule(lr){4-5}\cmidrule(lr){6-7}\cmidrule(lr){8-9}
method & $D8$ & $D20$ & $D8$ & $D20$ & $D8$ & $D20$ & $D8$ & $D20$ \\
\midrule
no draft ($D0$) & \multicolumn{2}{c}{\nfe{270.8}$\downarrow$ (\tps{1.00}$\uparrow$)\SZ} & \multicolumn{2}{c}{\nfe{269.2}$\downarrow$ (\tps{1.00}$\uparrow$)\SZ} & \multicolumn{2}{c}{\nfe{271.4}$\downarrow$ (\tps{1.00}$\uparrow$)\SZ} & \multicolumn{2}{c}{\nfe{271.8}$\downarrow$ (\tps{1.00}$\uparrow$)\SZ} \\
\midrule
\rowcolor{oursrow}\textbf{\sys{} (ours)} & \nfe{\textbf{70.9}}\,(\tps{\underline{2.94}})\SZ & \nfe{\textbf{53.4}}\,(\tps{\underline{2.77}})\SZ & \nfe{\textbf{57.2}}\,(\tps{\underline{3.64}})\SZ & \nfe{\textbf{40.4}}\,(\tps{\underline{2.79}})\SZ & \nfe{\textbf{62.9}}\,(\tps{\underline{3.32}})\SZ & \nfe{\textbf{46.8}}\,(\tps{\underline{3.05}})\SZ & \nfe{\textbf{85.6}}\,(\tps{\underline{2.48}})\SZ & \nfe{\textbf{67.8}}\,(\tps{\underline{2.19}})\SZ \\
\rowcolor{oursrow}\sys{}-$\gamma$ (ours) & \nfe{\underline{71.3}}\,(\tps{\textbf{3.02}})\SZ & \nfe{\underline{55.9}}\,(\tps{\textbf{3.36}})\SZ & \nfe{\underline{57.8}}\,(\tps{\textbf{3.73}})\SZ & \nfe{\underline{42.9}}\,(\tps{\textbf{3.98}})\SZ & \nfe{\underline{63.4}}\,(\tps{\textbf{3.39}})\SZ & \nfe{\underline{48.7}}\,(\tps{\textbf{3.72}})\SZ & \nfe{\underline{86.5}}\,(\tps{\textbf{2.49}})\SZ & \nfe{\underline{72.0}}\,(\tps{\textbf{2.63}})\SZ \\
Spiffy graph & \nfe{94.3}\,(\tps{2.47})\SZ & \nfe{74.4}\,(\tps{2.53})\SZ & \nfe{84.9}\,(\tps{2.80})\SZ & \nfe{63.3}\,(\tps{2.23})\SZ & \nfe{90.2}\,(\tps{2.61})\SZ & \nfe{68.6}\,(\tps{2.63})\SZ & \nfe{102.4}\,(\tps{2.29})\SZ & \nfe{86.0}\,(\tps{2.15})\SZ \\
free chain, match & \nfe{94.6}\,(\tps{2.45})\SZ & \nfe{86.5}\,(\tps{1.99})\SZ & \nfe{85.8}\,(\tps{2.73})\SZ & \nfe{77.3}\,(\tps{1.64})\SZ & \nfe{90.7}\,(\tps{2.56})\SZ & \nfe{79.9}\,(\tps{2.04})\SZ & \nfe{103.3}\,(\tps{2.23})\SZ & \nfe{96.4}\,(\tps{1.74})\SZ \\
free chain, RS & \nfe{88.5}\,(\tps{1.95})\SZ & \nfe{83.4}\,(\tps{1.66})\SZ & \nfe{61.2}\,(\tps{2.97})\SZ & \nfe{50.7}\,(\tps{2.37})\SZ & \nfe{76.7}\,(\tps{2.24})\SZ & \nfe{69.7}\,(\tps{1.94})\SZ & \nfe{110.3}\,(\tps{1.55})\SZ & \nfe{110.2}\,(\tps{1.29})\SZ \\
BlockSpec tree & \nfe{138.9}\,(\tps{1.63})\SZ & \nfe{117.7}\,(\tps{1.37})\SZ & \nfe{137.9}\,(\tps{1.66})\SZ & \nfe{114.2}\,(\tps{1.00})\SZ & \nfe{137.7}\,(\tps{1.65})\SZ & \nfe{115.8}\,(\tps{1.32})\SZ & \nfe{140.8}\,(\tps{1.62})\SZ & \nfe{121.1}\,(\tps{1.27})\SZ \\
\specialrule{0.6pt}{2.5pt}{0pt}
\multicolumn{9}{c}{\emph{threshold decoding, $\tau{=}0.9$ (second machine, $n{=}350$, HumanEval $164$)}} \\
\rowcolor[gray]{0.90}$T{=}1$ & \multicolumn{2}{c}{acc.\ 0.81--0.85} & \multicolumn{2}{c}{acc.\ 0.19--0.21} & \multicolumn{2}{c}{acc.\ 0.57--0.60} & \multicolumn{2}{c}{acc.\ 0.15--0.17} \\
no draft & \multicolumn{2}{c}{\nfe{59.9}$\downarrow$ (\tps{4.41}$\uparrow$)\SZ} & \multicolumn{2}{c}{\nfe{41.2}$\downarrow$ (\tps{6.46}$\uparrow$)\SZ} & \multicolumn{2}{c}{\nfe{41.9}$\downarrow$ (\tps{6.35}$\uparrow$)\SZ} & \multicolumn{2}{c}{\nfe{86.8}$\downarrow$ (\tps{3.11}$\uparrow$)\SZ} \\
\midrule
\rowcolor{oursrow}\textbf{\sysSets{}-$\gamma$ (ours)} & \nfe{\textbf{34.3}}\,(\tps{\textbf{6.44}})\SZ & \nfe{\textbf{33.8}}\,(\tps{\textbf{6.50}})\SZ & \nfe{\textbf{25.8}}\,(\tps{\underline{8.53}})\SZ & \nfe{\textbf{25.9}}\,(\tps{\underline{8.46}})\SZ & \nfe{\textbf{25.7}}\,(\tps{\textbf{8.64}})\SZ & \nfe{\textbf{25.5}}\,(\tps{\textbf{8.50}})\SZ & \nfe{\textbf{49.8}}\,(\tps{\textbf{4.45}})\SZ & \nfe{\textbf{49.1}}\,(\tps{\textbf{4.46}})\SZ \\
\rowcolor{oursrow}\sysSets{}-$\gamma$, exact (ours) & \nfe{\underline{36.4}}\,(\tps{\underline{6.27}})\SZ & \nfe{\underline{35.9}}\,(\tps{\underline{6.30}})\SZ & \nfe{\underline{26.4}}\,(\tps{\textbf{8.57}})\SZ & \nfe{\underline{26.2}}\,(\tps{\textbf{8.56}})\SZ & \nfe{\underline{26.7}}\,(\tps{\underline{8.39}})\SZ & \nfe{\underline{26.6}}\,(\tps{\underline{8.37}})\SZ & \nfe{\underline{52.5}}\,(\tps{\underline{4.30}})\SZ & \nfe{\underline{51.6}}\,(\tps{\underline{4.34}})\SZ \\
Spiffy graph & \nfe{41.5}\,(\tps{5.66})\SZ & \nfe{41.4}\,(\tps{5.58})\SZ & \nfe{28.1}\,(\tps{8.36})\SZ & \nfe{28.3}\,(\tps{7.79})\SZ & \nfe{31.6}\,(\tps{7.43})\SZ & \nfe{31.8}\,(\tps{7.33})\SZ & \nfe{60.2}\,(\tps{3.90})\SZ & \nfe{59.4}\,(\tps{3.91})\SZ \\
SSD free chain & \nfe{41.1}\,(\tps{5.72})\SZ & \nfe{40.6}\,(\tps{4.65})\SZ & \nfe{27.8}\,(\tps{8.48})\SZ & \nfe{27.6}\,(\tps{5.45})\SZ & \nfe{31.3}\,(\tps{7.53})\SZ & \nfe{31.3}\,(\tps{5.93})\SZ & \nfe{60.1}\,(\tps{3.91})\SZ & \nfe{59.4}\,(\tps{2.96})\SZ \\
BlockSpec tree & \nfe{43.5}\,(\tps{5.30})\SZ & \nfe{41.3}\,(\tps{4.27})\SZ & \nfe{30.2}\,(\tps{7.62})\SZ & \nfe{28.6}\,(\tps{5.02})\SZ & \nfe{33.0}\,(\tps{7.08})\SZ & \nfe{31.7}\,(\tps{5.65})\SZ & \nfe{63.2}\,(\tps{3.62})\SZ & \nfe{60.3}\,(\tps{2.80})\SZ \\
\bottomrule
\end{tabular}
}
\end{table}

\begin{table}[htbp]
\centering\small
\caption{Main results, SDAR-1.7B b32, $T{=}0$ (format as Table~\ref{tab:main-sdar}).
\sys{} has the lowest NFE in every ours-vs-baseline comparison and the higher
throughput ratio in $23$ of $24$ (Spiffy takes MATH500 at $D20$); \sys{}-$\gamma$ wins all $24$ on both columns. Seed $0$. Lower block: the threshold decoder ($\tau{=}0.9$), second machine, \THRN{main_sdar17}, \sysSets{} rows and ratios as in Table~\ref{tab:main-sdar}.}
\label{tab:main-17}
\setlength{\tabcolsep}{3pt}
\resizebox{\linewidth}{!}{\begin{tabular}{lcccccccc}
\toprule
\nfe{NFE}$\downarrow$ (\tps{tok/s}$\uparrow$) & \multicolumn{2}{c}{GSM8K} & \multicolumn{2}{c}{MBPP} & \multicolumn{2}{c}{HumanEval} & \multicolumn{2}{c}{MATH500} \\
 & \multicolumn{2}{c}{\footnotesize acc.\ 0.69--0.71} & \multicolumn{2}{c}{\footnotesize acc.\ 0.29--0.30} & \multicolumn{2}{c}{\footnotesize acc.\ 0.45--0.47} & \multicolumn{2}{c}{\footnotesize acc.\ 0.09--0.11} \\
\cmidrule(lr){2-3}\cmidrule(lr){4-5}\cmidrule(lr){6-7}\cmidrule(lr){8-9}
method & $D8$ & $D20$ & $D8$ & $D20$ & $D8$ & $D20$ & $D8$ & $D20$ \\
\midrule
no draft ($D0$) & \multicolumn{2}{c}{\nfe{270.8}$\downarrow$ (\tps{1.00}$\uparrow$)\SZ} & \multicolumn{2}{c}{\nfe{269.2}$\downarrow$ (\tps{1.00}$\uparrow$)\SZ} & \multicolumn{2}{c}{\nfe{271.4}$\downarrow$ (\tps{1.00}$\uparrow$)\SZ} & \multicolumn{2}{c}{\nfe{271.8}$\downarrow$ (\tps{1.00}$\uparrow$)\SZ} \\
\midrule
\rowcolor{oursrow}\textbf{\sys{} (ours)} & \nfe{\textbf{75.6}}\,(\tps{\underline{2.81}})\SZ & \nfe{\textbf{58.8}}\,(\tps{\underline{3.01}})\SZ & \nfe{\textbf{55.4}}\,(\tps{\underline{3.74}})\SZ & \nfe{\textbf{38.8}}\,(\tps{\underline{4.28}})\SZ & \nfe{\underline{64.6}}\,(\tps{\underline{3.23}})\SZ & \nfe{\textbf{46.4}}\,(\tps{\underline{3.69}})\SZ & \nfe{\textbf{85.2}}\,(\tps{\underline{2.48}})\SZ & \nfe{\textbf{67.7}}\,(\tps{2.67})\SZ \\
\rowcolor{oursrow}\sys{}-$\gamma$ (ours) & \nfe{\underline{75.7}}\,(\tps{\textbf{2.85}})\SZ & \nfe{\underline{62.2}}\,(\tps{\textbf{3.30}})\SZ & \nfe{\underline{55.7}}\,(\tps{\textbf{3.83}})\SZ & \nfe{\underline{40.4}}\,(\tps{\textbf{5.02}})\SZ & \nfe{\textbf{64.5}}\,(\tps{\textbf{3.30}})\SZ & \nfe{\underline{48.1}}\,(\tps{\textbf{4.09}})\SZ & \nfe{\underline{85.6}}\,(\tps{\textbf{2.55}})\SZ & \nfe{\underline{71.6}}\,(\tps{\textbf{2.85}})\SZ \\
Spiffy graph & \nfe{94.8}\,(\tps{2.49})\SZ & \nfe{76.6}\,(\tps{2.99})\SZ & \nfe{85.3}\,(\tps{2.79})\SZ & \nfe{64.4}\,(\tps{3.55})\SZ & \nfe{89.9}\,(\tps{2.64})\SZ & \nfe{69.3}\,(\tps{3.27})\SZ & \nfe{99.4}\,(\tps{2.39})\SZ & \nfe{82.8}\,(\tps{\underline{2.75}})\SZ \\
SSD free chain & \nfe{94.8}\,(\tps{2.47})\SZ & \nfe{88.1}\,(\tps{2.54})\SZ & \nfe{91.4}\,(\tps{2.54})\SZ & \nfe{76.5}\,(\tps{3.00})\SZ & \nfe{89.4}\,(\tps{2.66})\SZ & \nfe{81.8}\,(\tps{2.78})\SZ & \nfe{99.3}\,(\tps{2.43})\SZ & \nfe{92.7}\,(\tps{2.46})\SZ \\
BlockSpec tree & \nfe{136.5}\,(\tps{1.68})\SZ & \nfe{115.7}\,(\tps{1.72})\SZ & \nfe{135.1}\,(\tps{1.71})\SZ & \nfe{111.5}\,(\tps{1.81})\SZ & \nfe{135.2}\,(\tps{1.70})\SZ & \nfe{112.7}\,(\tps{1.83})\SZ & \nfe{136.0}\,(\tps{1.68})\SZ & \nfe{115.8}\,(\tps{1.75})\SZ \\
\specialrule{0.6pt}{2.5pt}{0pt}
\multicolumn{9}{c}{\emph{threshold decoding, $\tau{=}0.9$ (second machine, $n{=}350$, HumanEval $164$)}} \\
\rowcolor[gray]{0.90}$T{=}0$ & \multicolumn{2}{c}{acc.\ 0.67--0.69} & \multicolumn{2}{c}{acc.\ 0.28--0.30} & \multicolumn{2}{c}{acc.\ 0.42--0.45} & \multicolumn{2}{c}{acc.\ 0.09--0.10} \\
no draft & \multicolumn{2}{c}{\nfe{86.4}$\downarrow$ (\tps{3.13}$\uparrow$)\SZ} & \multicolumn{2}{c}{\nfe{40.7}$\downarrow$ (\tps{6.46}$\uparrow$)\SZ} & \multicolumn{2}{c}{\nfe{48.5}$\downarrow$ (\tps{5.50}$\uparrow$)\SZ} & \multicolumn{2}{c}{\nfe{95.7}$\downarrow$ (\tps{2.83}$\uparrow$)\SZ} \\
\midrule
\rowcolor{oursrow}\textbf{\sysSets{}-$\gamma$ (ours)} & \nfe{\textbf{43.4}}\,(\tps{\textbf{5.04}})\SZ & \nfe{\textbf{42.6}}\,(\tps{\textbf{5.08}})\SZ & \nfe{\textbf{22.5}}\,(\tps{\textbf{9.90}})\SZ & \nfe{\textbf{22.2}}\,(\tps{\textbf{9.91}})\SZ & \nfe{\textbf{26.7}}\,(\tps{\textbf{8.36}})\SZ & \nfe{\textbf{26.2}}\,(\tps{\textbf{8.47}})\SZ & \nfe{\textbf{51.6}}\,(\tps{\textbf{4.23}})\SZ & \nfe{\textbf{51.3}}\,(\tps{\textbf{4.26}})\SZ \\
\rowcolor{oursrow}\sysSets{}-$\gamma$, exact (ours) & \nfe{\underline{46.2}}\,(\tps{\underline{4.74}})\SZ & \nfe{\underline{45.5}}\,(\tps{\underline{4.73}})\SZ & \nfe{\underline{23.3}}\,(\tps{\underline{9.63}})\SZ & \nfe{\underline{22.9}}\,(\tps{\underline{9.65}})\SZ & \nfe{\underline{27.9}}\,(\tps{\underline{8.02}})\SZ & \nfe{\underline{27.5}}\,(\tps{\underline{8.09}})\SZ & \nfe{\underline{55.2}}\,(\tps{\underline{4.05}})\SZ & \nfe{\underline{54.5}}\,(\tps{\underline{4.06}})\SZ \\
Spiffy graph & \nfe{53.8}\,(\tps{4.36})\SZ & \nfe{52.7}\,(\tps{4.37})\SZ & \nfe{26.3}\,(\tps{8.98})\SZ & \nfe{25.7}\,(\tps{9.02})\SZ & \nfe{33.7}\,(\tps{7.10})\SZ & \nfe{33.0}\,(\tps{7.05})\SZ & \nfe{62.8}\,(\tps{3.73})\SZ & \nfe{61.5}\,(\tps{3.75})\SZ \\
SSD free chain & \nfe{53.1}\,(\tps{4.39})\SZ & \nfe{51.9}\,(\tps{4.22})\SZ & \nfe{25.7}\,(\tps{9.18})\SZ & \nfe{25.2}\,(\tps{8.92})\SZ & \nfe{33.3}\,(\tps{7.17})\SZ & \nfe{32.6}\,(\tps{6.98})\SZ & \nfe{62.4}\,(\tps{3.73})\SZ & \nfe{60.6}\,(\tps{3.64})\SZ \\
BlockSpec tree & \nfe{58.3}\,(\tps{3.92})\SZ & \nfe{54.3}\,(\tps{3.73})\SZ & \nfe{28.3}\,(\tps{8.10})\SZ & \nfe{26.5}\,(\tps{7.88})\SZ & \nfe{35.7}\,(\tps{6.51})\SZ & \nfe{33.4}\,(\tps{6.26})\SZ & \nfe{67.6}\,(\tps{3.38})\SZ & \nfe{63.1}\,(\tps{3.23})\SZ \\
\bottomrule
\end{tabular}
}
\end{table}

\begin{table}[htbp]
\centering\small
\caption{SDAR-1.7B b32 at $T{=}1$ (coupled-noise match; free chain also under
rejection sampling). Lowest NFE in every comparison and the higher throughput ratio
in all but one of $32$ (Spiffy takes MATH500 at $D20$); \sys{}-$\gamma$ wins every comparison on both columns. Seed $0$. Lower block: the threshold decoder ($\tau{=}0.9$), second machine, \THRN{main_sdar17_t1}, \sysSets{} rows and ratios as in Table~\ref{tab:main-sdar}.}
\label{tab:main-17-t1}
\setlength{\tabcolsep}{3pt}
\resizebox{\linewidth}{!}{\begin{tabular}{lcccccccc}
\toprule
\nfe{NFE}$\downarrow$ (\tps{tok/s}$\uparrow$) & \multicolumn{2}{c}{GSM8K} & \multicolumn{2}{c}{MBPP} & \multicolumn{2}{c}{HumanEval} & \multicolumn{2}{c}{MATH500} \\
 & \multicolumn{2}{c}{\footnotesize acc.\ 0.69--0.74} & \multicolumn{2}{c}{\footnotesize acc.\ 0.21--0.29} & \multicolumn{2}{c}{\footnotesize acc.\ 0.37--0.45} & \multicolumn{2}{c}{\footnotesize acc.\ 0.09--0.12} \\
\cmidrule(lr){2-3}\cmidrule(lr){4-5}\cmidrule(lr){6-7}\cmidrule(lr){8-9}
method & $D8$ & $D20$ & $D8$ & $D20$ & $D8$ & $D20$ & $D8$ & $D20$ \\
\midrule
no draft ($D0$) & \multicolumn{2}{c}{\nfe{270.8}$\downarrow$ (\tps{1.00}$\uparrow$)\SZ} & \multicolumn{2}{c}{\nfe{269.2}$\downarrow$ (\tps{1.00}$\uparrow$)\SZ} & \multicolumn{2}{c}{\nfe{271.4}$\downarrow$ (\tps{1.00}$\uparrow$)\SZ} & \multicolumn{2}{c}{\nfe{271.8}$\downarrow$ (\tps{1.00}$\uparrow$)\SZ} \\
\midrule
\rowcolor{oursrow}\textbf{\sys{} (ours)} & \nfe{\textbf{79.0}}\,(\tps{\textbf{2.67}})\SZ & \nfe{\textbf{62.3}}\,(\tps{\underline{2.86}})\SZ & \nfe{\textbf{58.0}}\,(\tps{\underline{3.52}})\SZ & \nfe{\textbf{40.9}}\,(\tps{\underline{3.99}})\SZ & \nfe{\textbf{66.7}}\,(\tps{\underline{3.14}})\SZ & \nfe{\textbf{48.5}}\,(\tps{\underline{3.49}})\SZ & \nfe{\textbf{88.9}}\,(\tps{\underline{2.39}})\SZ & \nfe{\textbf{71.7}}\,(\tps{2.48})\SZ \\
\rowcolor{oursrow}\sys{}-$\gamma$ (ours) & \nfe{\underline{79.9}}\,(\tps{\underline{2.67}})\SZ & \nfe{\underline{66.6}}\,(\tps{\textbf{3.04}})\SZ & \nfe{\underline{58.7}}\,(\tps{\textbf{3.54}})\SZ & \nfe{\underline{43.7}}\,(\tps{\textbf{4.54}})\SZ & \nfe{\underline{67.5}}\,(\tps{\textbf{3.15}})\SZ & \nfe{\underline{51.5}}\,(\tps{\textbf{3.85}})\SZ & \nfe{\underline{89.8}}\,(\tps{\textbf{2.40}})\SZ & \nfe{\underline{76.3}}\,(\tps{\textbf{2.78}})\SZ \\
Spiffy graph & \nfe{99.5}\,(\tps{2.37})\SZ & \nfe{82.7}\,(\tps{2.73})\SZ & \nfe{87.7}\,(\tps{2.68})\SZ & \nfe{67.3}\,(\tps{3.33})\SZ & \nfe{93.0}\,(\tps{2.54})\SZ & \nfe{72.4}\,(\tps{3.15})\SZ & \nfe{105.5}\,(\tps{2.28})\SZ & \nfe{90.1}\,(\tps{\underline{2.57}})\SZ \\
free chain, match & \nfe{98.9}\,(\tps{2.36})\SZ & \nfe{92.8}\,(\tps{2.44})\SZ & \nfe{93.6}\,(\tps{2.47})\SZ & \nfe{78.7}\,(\tps{2.79})\SZ & \nfe{92.9}\,(\tps{2.50})\SZ & \nfe{84.6}\,(\tps{2.62})\SZ & \nfe{104.8}\,(\tps{2.31})\SZ & \nfe{99.0}\,(\tps{2.29})\SZ \\
free chain, RS & \nfe{99.4}\,(\tps{1.62})\SZ & \nfe{97.0}\,(\tps{1.47})\SZ & \nfe{71.5}\,(\tps{2.29})\SZ & \nfe{67.4}\,(\tps{2.13})\SZ & \nfe{87.6}\,(\tps{1.87})\SZ & \nfe{81.7}\,(\tps{1.74})\SZ & \nfe{115.7}\,(\tps{1.43})\SZ & \nfe{115.5}\,(\tps{1.26})\SZ \\
BlockSpec tree & \nfe{141.5}\,(\tps{1.60})\SZ & \nfe{121.3}\,(\tps{1.67})\SZ & \nfe{137.4}\,(\tps{1.62})\SZ & \nfe{114.2}\,(\tps{1.72})\SZ & \nfe{138.5}\,(\tps{1.66})\SZ & \nfe{116.3}\,(\tps{1.74})\SZ & \nfe{141.7}\,(\tps{1.63})\SZ & \nfe{122.3}\,(\tps{1.70})\SZ \\
\specialrule{0.6pt}{2.5pt}{0pt}
\multicolumn{9}{c}{\emph{threshold decoding, $\tau{=}0.9$ (second machine, $n{=}350$, HumanEval $164$)}} \\
\rowcolor[gray]{0.90}$T{=}1$ & \multicolumn{2}{c}{acc.\ 0.69--0.72} & \multicolumn{2}{c}{acc.\ 0.21--0.23} & \multicolumn{2}{c}{acc.\ 0.35--0.37} & \multicolumn{2}{c}{acc.\ 0.10--0.12} \\
no draft & \multicolumn{2}{c}{\nfe{88.4}$\downarrow$ (\tps{3.01}$\uparrow$)\SZ} & \multicolumn{2}{c}{\nfe{42.7}$\downarrow$ (\tps{6.24}$\uparrow$)\SZ} & \multicolumn{2}{c}{\nfe{52.5}$\downarrow$ (\tps{5.11}$\uparrow$)\SZ} & \multicolumn{2}{c}{\nfe{97.4}$\downarrow$ (\tps{2.74}$\uparrow$)\SZ} \\
\midrule
\rowcolor{oursrow}\textbf{\sysSets{}-$\gamma$ (ours)} & \nfe{\textbf{48.3}}\,(\tps{\textbf{4.32}})\SZ & \nfe{\textbf{47.7}}\,(\tps{\textbf{4.36}})\SZ & \nfe{\textbf{26.8}}\,(\tps{\underline{8.09}})\SZ & \nfe{\textbf{26.4}}\,(\tps{\underline{8.00}})\SZ & \nfe{\textbf{30.1}}\,(\tps{\textbf{7.20}})\SZ & \nfe{\textbf{30.0}}\,(\tps{\textbf{7.16}})\SZ & \nfe{\textbf{56.8}}\,(\tps{\textbf{3.74}})\SZ & \nfe{\textbf{55.9}}\,(\tps{\textbf{3.77}})\SZ \\
\rowcolor{oursrow}\sysSets{}-$\gamma$, exact (ours) & \nfe{\underline{51.0}}\,(\tps{\underline{4.24}})\SZ & \nfe{\underline{50.2}}\,(\tps{\underline{4.25}})\SZ & \nfe{\underline{27.4}}\,(\tps{\textbf{8.10}})\SZ & \nfe{\underline{27.2}}\,(\tps{\textbf{8.10}})\SZ & \nfe{\underline{31.8}}\,(\tps{\underline{6.97}})\SZ & \nfe{\underline{31.8}}\,(\tps{\underline{6.92}})\SZ & \nfe{\underline{59.8}}\,(\tps{\underline{3.68}})\SZ & \nfe{\underline{58.8}}\,(\tps{\underline{3.74}})\SZ \\
Spiffy graph & \nfe{59.5}\,(\tps{3.86})\SZ & \nfe{58.9}\,(\tps{3.86})\SZ & \nfe{29.6}\,(\tps{8.00})\SZ & \nfe{29.1}\,(\tps{7.95})\SZ & \nfe{38.3}\,(\tps{6.24})\SZ & \nfe{38.0}\,(\tps{6.17})\SZ & \nfe{70.0}\,(\tps{3.34})\SZ & \nfe{68.8}\,(\tps{3.31})\SZ \\
SSD free chain & \nfe{58.7}\,(\tps{3.91})\SZ & \nfe{58.1}\,(\tps{3.72})\SZ & \nfe{29.1}\,(\tps{8.06})\SZ & \nfe{28.7}\,(\tps{7.81})\SZ & \nfe{37.9}\,(\tps{6.27})\SZ & \nfe{37.5}\,(\tps{6.06})\SZ & \nfe{69.6}\,(\tps{3.33})\SZ & \nfe{68.7}\,(\tps{3.14})\SZ \\
BlockSpec tree & \nfe{63.0}\,(\tps{3.53})\SZ & \nfe{59.9}\,(\tps{3.29})\SZ & \nfe{31.4}\,(\tps{7.29})\SZ & \nfe{29.7}\,(\tps{6.91})\SZ & \nfe{39.9}\,(\tps{5.77})\SZ & \nfe{38.2}\,(\tps{5.51})\SZ & \nfe{72.7}\,(\tps{3.09})\SZ & \nfe{69.8}\,(\tps{2.84})\SZ \\
\bottomrule
\end{tabular}
}
\end{table}

\paragraph{The smallest target is where verification costs most.} On SDAR-1.7B Spiffy takes one
throughput column from \sys{} at each temperature: MATH500 at $D20$, the setting with the lowest acceptance,
by $0.08$ at $T{=}0$ and $0.09$ at $T{=}1$. The NFE ordering is untouched -- \sys{} takes $18\%$ and $20\%$
fewer forwards than Spiffy there -- so the gap is verification cost, not drafting: a $1.7$B forward is cheap
enough that the drafted states it verifies, up to $(1{+}D)L$ positions, are a large share of it.
\sys{}-$\gamma$, whose floor drops low-mass drafts, verifies fewer and takes the column back at both
temperatures ($2.85\times$ and $2.78\times$ against Spiffy's $2.75\times$ and $2.57\times$).

\begin{table}[htbp]
\centering\small
\caption{Main results, LLaDA2.0-mini b32 (MoE, exact prefix-KV cache), $T{=}0$,
$n{=}100$, seed $0$. \sys{} has the lowest NFE in all $24$
comparisons and the higher throughput ratio in $23$; the exception is MBPP at $D8$, where Spiffy is ahead by $0.5\%$.
\sys{}-$\gamma$ is \sys{} here, since its calibrated floor is $0$ (Appendix~\ref{app:gamma-calib}). Mini's per-round acceptance is low for every drafter
($1.05$--$3.66$ accepted tokens per drafted round across the four tasks and both budgets), so the
wall-clock margins are thin. Lower block: the threshold decoder ($\tau{=}0.9$), second machine, \THRN{main_mini}, \sysSets{} rows and ratios as in Table~\ref{tab:main-sdar}. On this MoE target the threshold decoder's output depends on the batch shape (expert routing), so the ``exact'' rows keep its tokens only up to that noise; their accuracies stay within the spread of the other rows.}
\label{tab:main-mini}
\setlength{\tabcolsep}{3pt}
\resizebox{\linewidth}{!}{\begin{tabular}{lcccccccc}
\toprule
\nfe{NFE}$\downarrow$ (\tps{tok/s}$\uparrow$) & \multicolumn{2}{c}{GSM8K} & \multicolumn{2}{c}{MBPP} & \multicolumn{2}{c}{HumanEval} & \multicolumn{2}{c}{MATH500} \\
 & \multicolumn{2}{c}{\footnotesize acc.\ 0.55--0.64} & \multicolumn{2}{c}{\footnotesize acc.\ 0.23--0.31} & \multicolumn{2}{c}{\footnotesize acc.\ 0.66--0.68} & \multicolumn{2}{c}{\footnotesize acc.\ 0.10--0.12} \\
\cmidrule(lr){2-3}\cmidrule(lr){4-5}\cmidrule(lr){6-7}\cmidrule(lr){8-9}
method & $D8$ & $D20$ & $D8$ & $D20$ & $D8$ & $D20$ & $D8$ & $D20$ \\
\midrule
no draft ($D0$) & \multicolumn{2}{c}{\nfe{271.6}$\downarrow$ (\tps{1.00}$\uparrow$)\SZ} & \multicolumn{2}{c}{\nfe{270.4}$\downarrow$ (\tps{1.00}$\uparrow$)\SZ} & \multicolumn{2}{c}{\nfe{269.3}$\downarrow$ (\tps{1.00}$\uparrow$)\SZ} & \multicolumn{2}{c}{\nfe{270.2}$\downarrow$ (\tps{1.00}$\uparrow$)\SZ} \\
\midrule
\rowcolor{oursrow}\textbf{\sys{} (ours)} & \nfe{\textbf{83.6}}\,(\tps{\textbf{2.32}})\SZ & \nfe{\textbf{65.3}}\,(\tps{\textbf{2.55}})\SZ & \nfe{\textbf{93.5}}\,(\tps{\underline{2.11}})\SZ & \nfe{\textbf{74.2}}\,(\tps{\textbf{2.31}})\SZ & \nfe{\textbf{88.3}}\,(\tps{\textbf{2.23}})\SZ & \nfe{\textbf{65.9}}\,(\tps{\textbf{2.55}})\SZ & \nfe{\textbf{84.8}}\,(\tps{\textbf{2.38}})\SZ & \nfe{\textbf{66.3}}\,(\tps{\textbf{2.73}})\SZ \\
Spiffy graph & \nfe{97.1}\,(\tps{2.13})\SZ & \nfe{\underline{79.5}}\,(\tps{\underline{2.33}})\SZ & \nfe{102.0}\,(\tps{\textbf{2.12}})\SZ & \nfe{\underline{85.3}}\,(\tps{\underline{2.19}})\SZ & \nfe{97.3}\,(\tps{2.13})\SZ & \nfe{\underline{78.5}}\,(\tps{\underline{2.48}})\SZ & \nfe{98.7}\,(\tps{2.09})\SZ & \nfe{\underline{80.8}}\,(\tps{\underline{2.38}})\SZ \\
SSD free chain & \nfe{\underline{96.1}}\,(\tps{\underline{2.15}})\SZ & \nfe{90.6}\,(\tps{2.07})\SZ & \nfe{\underline{101.2}}\,(\tps{2.00})\SZ & \nfe{94.5}\,(\tps{2.09})\SZ & \nfe{\underline{96.9}}\,(\tps{\underline{2.16}})\SZ & \nfe{89.2}\,(\tps{2.15})\SZ & \nfe{\underline{98.3}}\,(\tps{\underline{2.16}})\SZ & \nfe{91.3}\,(\tps{2.12})\SZ \\
BlockSpec tree & \nfe{136.9}\,(\tps{1.45})\SZ & \nfe{116.9}\,(\tps{1.46})\SZ & \nfe{136.4}\,(\tps{1.45})\SZ & \nfe{116.9}\,(\tps{1.49})\SZ & \nfe{135.0}\,(\tps{1.48})\SZ & \nfe{115.5}\,(\tps{1.51})\SZ & \nfe{135.8}\,(\tps{1.47})\SZ & \nfe{116.1}\,(\tps{1.52})\SZ \\
\specialrule{0.6pt}{2.5pt}{0pt}
\multicolumn{9}{c}{\emph{threshold decoding, $\tau{=}0.9$ (second machine, $n{=}100$)}} \\
\rowcolor[gray]{0.90}$T{=}0$ & \multicolumn{2}{c}{acc.\ 0.53--0.63} & \multicolumn{2}{c}{acc.\ 0.23--0.29} & \multicolumn{2}{c}{acc.\ 0.60--0.62} & \multicolumn{2}{c}{acc.\ 0.09--0.12} \\
no draft & \multicolumn{2}{c}{\nfe{94.0}$\downarrow$ (\tps{2.69}$\uparrow$)\SZ} & \multicolumn{2}{c}{\nfe{75.2}$\downarrow$ (\tps{3.18}$\uparrow$)\SZ} & \multicolumn{2}{c}{\nfe{29.9}$\downarrow$ (\tps{7.40}$\uparrow$)\SZ} & \multicolumn{2}{c}{\nfe{93.0}$\downarrow$ (\tps{2.78}$\uparrow$)\SZ} \\
\midrule
\rowcolor{oursrow}\textbf{\sysSets{}-$\gamma$ (ours)} & \nfe{\textbf{44.0}}\,(\tps{\textbf{3.96}})\SZ & \nfe{\textbf{39.0}}\,(\tps{\textbf{3.94}})\SZ & \nfe{\textbf{34.7}}\,(\tps{\textbf{4.68}})\SZ & \nfe{\textbf{31.7}}\,(\tps{\textbf{4.48}})\SZ & \nfe{\textbf{19.0}}\,(\tps{\underline{8.41}})\SZ & \nfe{\textbf{18.3}}\,(\tps{\underline{8.15}})\SZ & \nfe{\textbf{43.7}}\,(\tps{\textbf{4.19}})\SZ & \nfe{\textbf{38.9}}\,(\tps{\textbf{4.08}})\SZ \\
\rowcolor{oursrow}\sysSets{}-$\gamma$, exact (ours) & \nfe{\underline{48.2}}\,(\tps{\underline{3.83}})\SZ & \nfe{\underline{47.2}}\,(\tps{\underline{3.78}})\SZ & \nfe{\underline{41.8}}\,(\tps{\underline{4.07}})\SZ & \nfe{\underline{39.5}}\,(\tps{\underline{4.26}})\SZ & \nfe{\underline{20.4}}\,(\tps{\textbf{8.53}})\SZ & \nfe{\underline{19.9}}\,(\tps{\textbf{8.68}})\SZ & \nfe{\underline{48.8}}\,(\tps{\underline{3.92}})\SZ & \nfe{\underline{47.7}}\,(\tps{\underline{3.99}})\SZ \\
Spiffy graph & \nfe{57.7}\,(\tps{3.27})\SZ & \nfe{56.6}\,(\tps{3.11})\SZ & \nfe{47.9}\,(\tps{3.80})\SZ & \nfe{45.7}\,(\tps{3.65})\SZ & \nfe{24.6}\,(\tps{7.45})\SZ & \nfe{24.2}\,(\tps{7.15})\SZ & \nfe{56.2}\,(\tps{3.55})\SZ & \nfe{55.1}\,(\tps{3.32})\SZ \\
SSD free chain & \nfe{56.9}\,(\tps{3.46})\SZ & \nfe{55.6}\,(\tps{3.17})\SZ & \nfe{46.7}\,(\tps{3.88})\SZ & \nfe{47.4}\,(\tps{3.52})\SZ & \nfe{24.8}\,(\tps{7.38})\SZ & \nfe{24.5}\,(\tps{7.16})\SZ & \nfe{56.8}\,(\tps{3.44})\SZ & \nfe{55.5}\,(\tps{3.31})\SZ \\
BlockSpec tree & \nfe{63.3}\,(\tps{2.96})\SZ & \nfe{58.5}\,(\tps{2.90})\SZ & \nfe{52.4}\,(\tps{3.43})\SZ & \nfe{47.5}\,(\tps{3.43})\SZ & \nfe{25.1}\,(\tps{7.32})\SZ & \nfe{24.7}\,(\tps{6.72})\SZ & \nfe{61.8}\,(\tps{3.16})\SZ & \nfe{58.2}\,(\tps{2.94})\SZ \\
\bottomrule
\end{tabular}
}
\end{table}

\begin{table}[htbp]
\centering\small
\caption{Main results, LLaDA2.0-mini b32 at $T{=}1$, $n{=}100$, seed $0$; every drafter uses coupled-noise match, and the
free chain also runs under RS. Of the $32$ comparisons, \sys{} has the lowest NFE in all $32$ and the higher throughput ratio in $30$;
the exceptions are on MBPP, where the free chain under match is ahead at $D8$ ($1.94$ against $1.90\times$) and Spiffy at
$D20$ ($2.11$ against $2.03\times$). Lower block: the threshold decoder at $T{=}1$, as in
Table~\ref{tab:main-mini}, \THRN{main_mini_t1}.}
\label{tab:main-mini-t1}
\setlength{\tabcolsep}{3pt}
\resizebox{\linewidth}{!}{\begin{tabular}{lcccccccc}
\toprule
\nfe{NFE}$\downarrow$ (\tps{tok/s}$\uparrow$) & \multicolumn{2}{c}{GSM8K} & \multicolumn{2}{c}{MBPP} & \multicolumn{2}{c}{HumanEval} & \multicolumn{2}{c}{MATH500} \\
 & \multicolumn{2}{c}{\footnotesize acc.\ 0.53--0.63} & \multicolumn{2}{c}{\footnotesize acc.\ 0.30--0.42} & \multicolumn{2}{c}{\footnotesize acc.\ 0.65--0.70} & \multicolumn{2}{c}{\footnotesize acc.\ 0.08--0.10} \\
\cmidrule(lr){2-3}\cmidrule(lr){4-5}\cmidrule(lr){6-7}\cmidrule(lr){8-9}
method & $D8$ & $D20$ & $D8$ & $D20$ & $D8$ & $D20$ & $D8$ & $D20$ \\
\midrule
no draft ($D0$) & \multicolumn{2}{c}{\nfe{271.6}$\downarrow$ (\tps{1.00}$\uparrow$)\SZ} & \multicolumn{2}{c}{\nfe{270.4}$\downarrow$ (\tps{1.00}$\uparrow$)\SZ} & \multicolumn{2}{c}{\nfe{269.3}$\downarrow$ (\tps{1.00}$\uparrow$)\SZ} & \multicolumn{2}{c}{\nfe{270.2}$\downarrow$ (\tps{1.00}$\uparrow$)\SZ} \\
\midrule
\rowcolor{oursrow}\textbf{\sys{} (ours)} & \nfe{\textbf{86.6}}\,(\tps{\textbf{2.17}})\SZ & \nfe{\textbf{67.8}}\,(\tps{\textbf{2.56}})\SZ & \nfe{\textbf{100.8}}\,(\tps{\underline{1.90}})\SZ & \nfe{\textbf{80.5}}\,(\tps{\underline{2.03}})\SZ & \nfe{\textbf{89.0}}\,(\tps{\textbf{2.29}})\SZ & \nfe{\textbf{67.4}}\,(\tps{\textbf{2.55}})\SZ & \nfe{\textbf{87.4}}\,(\tps{\textbf{2.28}})\SZ & \nfe{\textbf{69.3}}\,(\tps{\textbf{2.53}})\SZ \\
Spiffy graph & \nfe{102.5}\,(\tps{2.02})\SZ & \nfe{\underline{86.0}}\,(\tps{\underline{2.22}})\SZ & \nfe{\underline{106.6}}\,(\tps{1.88})\SZ & \nfe{\underline{90.7}}\,(\tps{\textbf{2.11}})\SZ & \nfe{98.5}\,(\tps{\underline{2.20}})\SZ & \nfe{\underline{79.5}}\,(\tps{\underline{2.41}})\SZ & \nfe{102.7}\,(\tps{2.00})\SZ & \nfe{\underline{86.1}}\,(\tps{\underline{2.21}})\SZ \\
free chain, match & \nfe{\underline{101.1}}\,(\tps{\underline{2.07}})\SZ & \nfe{95.3}\,(\tps{2.01})\SZ & \nfe{106.9}\,(\tps{\textbf{1.94}})\SZ & \nfe{99.3}\,(\tps{1.89})\SZ & \nfe{\underline{97.6}}\,(\tps{2.17})\SZ & \nfe{89.7}\,(\tps{2.15})\SZ & \nfe{\underline{101.8}}\,(\tps{\underline{2.06}})\SZ & \nfe{95.9}\,(\tps{2.00})\SZ \\
free chain, RS & \nfe{111.6}\,(\tps{1.55})\SZ & \nfe{109.9}\,(\tps{1.37})\SZ & \nfe{125.4}\,(\tps{1.38})\SZ & \nfe{122.3}\,(\tps{1.14})\SZ & \nfe{114.6}\,(\tps{1.59})\SZ & \nfe{115.2}\,(\tps{1.34})\SZ & \nfe{113.0}\,(\tps{1.62})\SZ & \nfe{113.1}\,(\tps{1.40})\SZ \\
BlockSpec tree & \nfe{142.1}\,(\tps{1.45})\SZ & \nfe{122.5}\,(\tps{1.44})\SZ & \nfe{140.5}\,(\tps{1.40})\SZ & \nfe{121.0}\,(\tps{1.42})\SZ & \nfe{135.7}\,(\tps{1.47})\SZ & \nfe{116.4}\,(\tps{1.45})\SZ & \nfe{140.6}\,(\tps{1.40})\SZ & \nfe{121.4}\,(\tps{1.49})\SZ \\
\specialrule{0.6pt}{2.5pt}{0pt}
\multicolumn{9}{c}{\emph{threshold decoding, $\tau{=}0.9$ (second machine, $n{=}100$)}} \\
\rowcolor[gray]{0.90}$T{=}1$ & \multicolumn{2}{c}{acc.\ 0.54--0.60} & \multicolumn{2}{c}{acc.\ 0.31--0.40} & \multicolumn{2}{c}{acc.\ 0.62--0.68} & \multicolumn{2}{c}{acc.\ 0.07--0.10} \\
no draft & \multicolumn{2}{c}{\nfe{94.6}$\downarrow$ (\tps{2.68}$\uparrow$)\SZ} & \multicolumn{2}{c}{\nfe{76.0}$\downarrow$ (\tps{3.03}$\uparrow$)\SZ} & \multicolumn{2}{c}{\nfe{30.4}$\downarrow$ (\tps{7.58}$\uparrow$)\SZ} & \multicolumn{2}{c}{\nfe{90.5}$\downarrow$ (\tps{2.89}$\uparrow$)\SZ} \\
\midrule
\rowcolor{oursrow}\textbf{\sysSets{}-$\gamma$ (ours)} & \nfe{\textbf{48.7}}\,(\tps{\textbf{3.70}})\SZ & \nfe{\textbf{43.4}}\,(\tps{\textbf{3.55}})\SZ & \nfe{\textbf{41.7}}\,(\tps{\underline{3.66}})\SZ & \nfe{\textbf{36.5}}\,(\tps{\underline{3.74}})\SZ & \nfe{\textbf{20.2}}\,(\tps{\underline{8.43}})\SZ & \nfe{\textbf{19.4}}\,(\tps{\underline{7.91}})\SZ & \nfe{\textbf{47.5}}\,(\tps{\underline{3.69}})\SZ & \nfe{\textbf{42.4}}\,(\tps{\textbf{3.78}})\SZ \\
\rowcolor{oursrow}\sysSets{}-$\gamma$, exact (ours) & \nfe{\underline{53.4}}\,(\tps{\underline{3.54}})\SZ & \nfe{\underline{52.4}}\,(\tps{\underline{3.49}})\SZ & \nfe{\underline{45.5}}\,(\tps{\textbf{3.70}})\SZ & \nfe{\underline{43.8}}\,(\tps{\textbf{3.91}})\SZ & \nfe{\underline{21.4}}\,(\tps{\textbf{8.47}})\SZ & \nfe{\underline{21.2}}\,(\tps{\textbf{8.19}})\SZ & \nfe{\underline{52.2}}\,(\tps{\textbf{3.71}})\SZ & \nfe{\underline{51.4}}\,(\tps{\underline{3.76}})\SZ \\
Spiffy graph & \nfe{63.8}\,(\tps{3.01})\SZ & \nfe{63.3}\,(\tps{2.75})\SZ & \nfe{53.7}\,(\tps{3.19})\SZ & \nfe{52.7}\,(\tps{3.03})\SZ & \nfe{26.1}\,(\tps{7.42})\SZ & \nfe{26.0}\,(\tps{6.95})\SZ & \nfe{61.7}\,(\tps{3.17})\SZ & \nfe{61.7}\,(\tps{2.88})\SZ \\
SSD free chain & \nfe{65.1}\,(\tps{2.99})\SZ & \nfe{62.7}\,(\tps{2.82})\SZ & \nfe{53.8}\,(\tps{3.24})\SZ & \nfe{50.1}\,(\tps{3.25})\SZ & \nfe{26.2}\,(\tps{7.36})\SZ & \nfe{25.9}\,(\tps{7.05})\SZ & \nfe{61.3}\,(\tps{3.31})\SZ & \nfe{61.0}\,(\tps{3.01})\SZ \\
BlockSpec tree & \nfe{68.3}\,(\tps{2.71})\SZ & \nfe{64.2}\,(\tps{2.54})\SZ & \nfe{57.7}\,(\tps{2.90})\SZ & \nfe{53.6}\,(\tps{2.75})\SZ & \nfe{26.5}\,(\tps{6.91})\SZ & \nfe{26.0}\,(\tps{6.57})\SZ & \nfe{64.9}\,(\tps{2.87})\SZ & \nfe{60.9}\,(\tps{2.74})\SZ \\
\bottomrule
\end{tabular}
}
\end{table}

\begin{table}[htbp]
\centering\small
\caption{LLaDA2.0-mini b32 at $T{=}1$ (MBPP, $n{=}100$, seed 0): \sys{} against a drafter tuned for this setting
alone; the full $T{=}1$ comparison is Table~\ref{tab:main-mini-t1}. \sys{} takes $4.4$--$8.1$ fewer forwards at
$1.05$--$1.08\times$ the tuned drafter's throughput. First machine, unpadded.}
\label{tab:mini-t1}
\begin{tabular}{lccc}
\toprule
 & \multicolumn{3}{c}{\nfe{NFE}$\downarrow$ (\tps{tok/s}$\uparrow$)} \\
\cmidrule(lr){2-4}
drafter & $D8$ & $D12$ & $D20$ \\
\midrule
\sys{} & \nfe{\textbf{100.8}}\,(\tps{\textbf{1.90}})\SZ & \nfe{\textbf{92.9}}\,(\tps{\textbf{2.04}})\SZ & \nfe{\textbf{80.5}}\,(\tps{\textbf{2.03}})\SZ \\
setting-specific drafter & \nfe{\underline{108.9}}\,(\tps{\underline{1.82}})\SZ & \nfe{\underline{97.7}}\,(\tps{\underline{1.88}})\SZ & \nfe{\underline{84.9}}\,(\tps{\underline{1.90}})\SZ \\
\bottomrule
\end{tabular}

\end{table}

\paragraph{The full transfer matrix.} Table~\ref{tab:transfer-full} points the SDAR-8B and LLaDA2.0-mini
checkpoints at all six targets, at $D8$. On SDAR-8B and LLaDA2.0-mini, the two checkpoints' own targets, the native drafter has the lowest NFE on
all four tasks. The transferred checkpoint costs $3.5$--$7.7\%$ more NFE on SDAR-8B and $3.0$--$11.3\%$ on LLaDA2.0-mini,
whose tokenizer differs from SDAR's; there the SDAR-8B drafter still takes the higher GSM8K throughput ($2.38$ against
$2.36\times$). On SDAR-4B and SDAR-1.7B, which neither checkpoint was trained on, the two land within $2.4\%$ of each other
in NFE on every task, the LLaDA2.0-mini one ahead on three of four. On LLaDA-8B and Dream-7B they land within $5\%$, the
SDAR-8B one ahead in NFE on all four tasks, at $1.57$--$2.56\times$ the throughput of draft-free decoding. What the drafter learns is largely a property of block-diffusion
decoding rather than of one target.

\begin{table}[htbp]
\centering\footnotesize
\caption{The SDAR-8B and LLaDA2.0-mini checkpoints on all six targets, $T{=}0$, b32, $D8$,
unpadded, seed $0$, $n{=}350$ per task (HumanEval $164$). $^{*}$marks the target's native drafter, the one trained on it; all other
rows are transferred with no retraining or recalibration. \tps{Ratio} is against that
target's own draft-free control, best per column within a target in bold. $^{\dagger}$no exact prefix cache, as in
Table~\ref{tab:transfer}.}
\label{tab:transfer-full}
\setlength{\tabcolsep}{4pt}
\resizebox{\linewidth}{!}{\begin{tabular}{llcccc}
\toprule
 & & \multicolumn{4}{c}{\nfe{NFE}$\downarrow$ (\tps{tok/s}$\uparrow$) at $D8$} \\
\cmidrule(lr){3-6}
target & drafter & GSM8K & MBPP & HumanEval & MATH500 \\
\midrule
\multirow{2}{*}{SDAR-8B} & SDAR-8B$^{*}$ & \nfe{\textbf{70.1}}\,(\tps{\textbf{3.00}})\SZ & \nfe{\textbf{56.7}}\,(\tps{\textbf{3.65}})\SZ & \nfe{\textbf{62.5}}\,(\tps{\textbf{3.44}})\SZ & \nfe{\textbf{84.6}}\,(\tps{\textbf{2.47}})\SZ \\
 & mini & \nfe{\underline{73.0}}\,(\tps{\underline{2.88}})\SZ & \nfe{\underline{58.7}}\,(\tps{\underline{3.54}})\SZ & \nfe{\underline{67.3}}\,(\tps{\underline{3.16}})\SZ & \nfe{\underline{88.1}}\,(\tps{\underline{2.41}})\SZ \\
\midrule
\multirow{2}{*}{SDAR-4B} & SDAR-8B & \nfe{\underline{69.2}}\,(\tps{\textbf{3.21}})\SZ & \nfe{\textbf{54.5}}\,(\tps{\textbf{3.82}})\SZ & \nfe{\underline{63.1}}\,(\tps{\underline{3.44}})\SZ & \nfe{\underline{83.8}}\,(\tps{\underline{2.52}})\SZ \\
 & mini & \nfe{\textbf{69.1}}\,(\tps{\underline{3.18}})\SZ & \nfe{\underline{55.6}}\,(\tps{\underline{3.82}})\SZ & \nfe{\textbf{61.6}}\,(\tps{\textbf{3.51}})\SZ & \nfe{\textbf{82.8}}\,(\tps{\textbf{2.58}})\SZ \\
\midrule
\multirow{2}{*}{SDAR-1.7B} & SDAR-8B & \nfe{\underline{76.1}}\,(\tps{\underline{2.76}})\SZ & \nfe{\textbf{55.3}}\,(\tps{\underline{3.72}})\SZ & \nfe{\underline{66.1}}\,(\tps{\underline{3.13}})\SZ & \nfe{\underline{86.3}}\,(\tps{\underline{2.44}})\SZ \\
 & mini & \nfe{\textbf{75.8}}\,(\tps{\textbf{2.77}})\SZ & \nfe{\underline{56.5}}\,(\tps{\textbf{3.75}})\SZ & \nfe{\textbf{64.9}}\,(\tps{\textbf{3.22}})\SZ & \nfe{\textbf{84.5}}\,(\tps{\textbf{2.53}})\SZ \\
\midrule
\multirow{2}{*}{LLaDA2.0-mini} & SDAR-8B & \nfe{\underline{85.9}}\,(\tps{\textbf{2.38}})\SZ & \nfe{\underline{101.7}}\,(\tps{\underline{1.99}})\SZ & \nfe{\underline{97.6}}\,(\tps{\underline{2.14}})\SZ & \nfe{\underline{87.1}}\,(\tps{\underline{2.23}})\SZ \\
 & mini$^{*}$ & \nfe{\textbf{82.6}}\,(\tps{\underline{2.36}})\SZ & \nfe{\textbf{94.1}}\,(\tps{\textbf{2.13}})\SZ & \nfe{\textbf{87.7}}\,(\tps{\textbf{2.22}})\SZ & \nfe{\textbf{84.6}}\,(\tps{\textbf{2.38}})\SZ \\
\midrule
\multirow{2}{*}{LLaDA-8B$^{\dagger}$} & SDAR-8B & \nfe{\textbf{79.9}}\,(\tps{\textbf{1.87}})\SZ & \nfe{\textbf{54.8}}\,(\tps{\textbf{2.39}})\SZ & \nfe{\textbf{83.4}}\,(\tps{\textbf{1.70}})\SZ & \nfe{\textbf{83.8}}\,(\tps{\underline{1.69}})\SZ \\
 & mini & \nfe{\underline{83.0}}\,(\tps{\underline{1.80}})\SZ & \nfe{\underline{57.3}}\,(\tps{\underline{2.17}})\SZ & \nfe{\underline{87.5}}\,(\tps{\underline{1.57}})\SZ & \nfe{\underline{86.7}}\,(\tps{\textbf{1.70}})\SZ \\
\midrule
\multirow{2}{*}{Dream-7B$^{\dagger}$} & SDAR-8B & \nfe{\textbf{66.3}}\,(\tps{\textbf{2.05}})\SZ & \nfe{\textbf{51.2}}\,(\tps{\underline{2.53}})\SZ & \nfe{\textbf{56.6}}\,(\tps{\textbf{2.37}})\SZ & \nfe{\textbf{77.1}}\,(\tps{\textbf{1.80}})\SZ \\
 & mini & \nfe{\underline{68.0}}\,(\tps{\underline{1.99}})\SZ & \nfe{\underline{52.5}}\,(\tps{\textbf{2.56}})\SZ & \nfe{\underline{58.3}}\,(\tps{\underline{2.30}})\SZ & \nfe{\underline{79.5}}\,(\tps{\underline{1.74}})\SZ \\
\bottomrule
\end{tabular}
}
\end{table}

\subsection{More block sizes}
\label{app:ext-blocks}
Table~\ref{tab:fourbed} gives the recipe transfer on MBPP across four target and block-width settings (SDAR-8B and
LLaDA2.0-mini at b32 and b16): with the recipe frozen
(same architecture, training procedure and calibration), the frozen-recipe \sys{} drafter
takes fewer forwards than a drafter tuned separately for each setting, on every setting and budget, by
$0.5$--$0.8$ NFE on SDAR b32, $1.1$--$1.2$ on SDAR b16, $0.8$--$2.4$ on mini b32 and $5.9$--$7.0$ on mini b16. Its
throughput is higher in $10$ of the $12$ cells; the tuned drafter is ahead on SDAR b32 at $D8$ ($0.07$) and on mini b32 at
$D12$ ($0.12$).

\begin{table}[htbp]
\centering\small
\caption{Recipe transfer across widths and targets (MBPP, $T{=}0$, $n{=}100$): the
single \sys{} recipe vs.\ a drafter tuned separately for each setting. Seed $0$, unpadded, first machine; each
ratio over the draft-free run of the same setting.}
\label{tab:fourbed}
\begin{tabular}{lccc}
\toprule
 & \multicolumn{3}{c}{\nfe{NFE}$\downarrow$ (\tps{tok/s}$\uparrow$)} \\
\cmidrule(lr){2-4}
setting & $D8$ & $D12$ & $D20$ \\
\midrule
SDAR-8B b32, \sys{} & \nfe{\textbf{55.6}}\,(\tps{\underline{3.68}})\SZ & \nfe{\textbf{47.6}}\,(\tps{\textbf{3.77}})\SZ & \nfe{\textbf{38.5}}\,(\tps{\textbf{3.24}})\SZ \\
SDAR-8B b32, tuned & \nfe{\underline{56.1}}\,(\tps{\textbf{3.74}})\SZ & \nfe{\underline{48.1}}\,(\tps{\underline{3.72}})\SZ & \nfe{\underline{39.3}}\,(\tps{\underline{3.20}})\SZ \\
\midrule
SDAR-8B b16, \sys{} & \nfe{\textbf{60.9}}\,(\tps{\textbf{3.43}})\SZ & \nfe{\textbf{58.9}}\,(\tps{\textbf{3.53}})\SZ & \nfe{\textbf{44.2}}\,(\tps{\textbf{4.43}})\SZ \\
SDAR-8B b16, tuned & \nfe{\underline{62.1}}\,(\tps{\underline{3.38}})\SZ & \nfe{\underline{60.0}}\,(\tps{\underline{3.42}})\SZ & \nfe{\underline{45.3}}\,(\tps{\underline{4.27}})\SZ \\
\midrule
mini b32, \sys{} & \nfe{\textbf{93.5}}\,(\tps{\textbf{2.11}})\SZ & \nfe{\textbf{85.5}}\,(\tps{\underline{2.16}})\SZ & \nfe{\textbf{74.2}}\,(\tps{\textbf{2.31}})\SZ \\
mini b32, tuned & \nfe{\underline{95.9}}\,(\tps{\underline{2.09}})\SZ & \nfe{\underline{86.5}}\,(\tps{\textbf{2.28}})\SZ & \nfe{\underline{75.0}}\,(\tps{\underline{2.25}})\SZ \\
\midrule
mini b16, \sys{} & \nfe{\textbf{91.6}}\,(\tps{\textbf{1.96}})\SZ & \nfe{\textbf{81.8}}\,(\tps{\textbf{2.05}})\SZ & \nfe{\textbf{72.2}}\,(\tps{\textbf{2.27}})\SZ \\
mini b16, tuned & \nfe{\underline{98.6}}\,(\tps{\underline{1.79}})\SZ & \nfe{\underline{87.8}}\,(\tps{\underline{1.93}})\SZ & \nfe{\underline{78.1}}\,(\tps{\underline{2.08}})\SZ \\
\bottomrule
\end{tabular}

\end{table}

\subsection{Generation length}
\label{app:ext-lengths}
Table~\ref{tab:genlen} repeats the $T{=}1$ MBPP comparison at generation lengths $64$
and $512$ beside the $256$ of the main text, each against its own draft-free control.
At length $256$, \sys{}'s throughput ratio over draft-free decoding at $D8$ is $3.48\times$ and
\sys{}-$\gamma$'s best ratio $3.90\times$ (at $D20$); against Spiffy at $D8$, \sys{} takes $30\%$ fewer
forwards at $1.27\times$ its throughput, and its NFE reduction over draft-free decoding is $4.5\times$.
At length $512$ and $D8$, \sys{} reaches $4.03\times$ and \sys{}-$\gamma$ $4.18\times$, and \sys{}'s NFE reduction grows to
$5.2\times$; accuracy stays within $0.03$ of the draft-free control at both lengths.
The gain at length $512$ comes from the end-of-text positions after the answer
(Appendix~\ref{app:stop-answer}): stopped at the answer's end, \sys{} reaches $1.96\times$ at $D8$ at both lengths.

At the short end, length $64$ is two blocks. \sys{} still has the lowest NFE in every column, but at $D20$
Spiffy is slightly faster ($1.71$ against $1.68\times$): with two blocks, the large $D20$ forwards cost more than their
extra commits save. \sys{}-$\gamma$, whose floor computes fewer verification positions, is flat in the budget ($2.24\times$
at $D8$, $2.21\times$ at $D20$) and the fastest row at both: at short generations the floored recipe is the one to
deploy. At length
$512$ no such caveat applies, and \sys{} has both the lowest NFE and the highest
throughput ratio against every baseline at both budgets ($3.49\times$ at $D20$, against $3.35\times$ for the free
chain with RS; \sys{}-$\gamma$ $4.68\times$).

\begin{table}[htbp]
\centering\footnotesize
\caption{Generation length, SDAR-8B b32 MBPP at $T{=}1$: \nfe{NFE per sample}
(\tps{throughput ratio} over that length's own draft-free control), best in bold.
The $256$ columns are the first machine's seed-$0$ runs of Table~\ref{tab:main-sdar}'s $T{=}1$ MBPP setting, so their ratios differ from that table's.}
\label{tab:genlen}
\setlength{\tabcolsep}{4pt}
\begin{tabular}{lcccccc}
\toprule
 & \multicolumn{2}{c}{length 64} & \multicolumn{2}{c}{length 256} & \multicolumn{2}{c}{length 512} \\
\cmidrule(lr){2-3}\cmidrule(lr){4-5}\cmidrule(lr){6-7}
method & $D8$ & $D20$ & $D8$ & $D20$ & $D8$ & $D20$ \\
\midrule
no draft ($D0$) & \multicolumn{2}{c}{\nfe{77.2}$\downarrow$ (\tps{1.00}$\uparrow$)\SZ} & \multicolumn{2}{c}{\nfe{269.2}$\downarrow$ (\tps{1.00}$\uparrow$)\SZ} & \multicolumn{2}{c}{\nfe{525.2}$\downarrow$ (\tps{1.00}$\uparrow$)\SZ} \\
\midrule
\rowcolor{oursrow}\sys{} (ours) & \nfe{\textbf{27.7}}\,(\tps{\underline{2.21}})\SZ & \nfe{\textbf{22.3}}\,(\tps{1.68})\SZ & \nfe{\textbf{60.3}}\,(\tps{\underline{3.48}})\SZ & \nfe{\textbf{42.7}}\,(\tps{\underline{2.96}})\SZ & \nfe{\textbf{100.6}}\,(\tps{\underline{4.03}})\SZ & \nfe{\textbf{67.0}}\,(\tps{\underline{3.49}})\SZ \\
\rowcolor{oursrow}\sys{}-$\gamma$ (ours) & \nfe{\underline{28.6}}\,(\tps{\textbf{2.24}})\SZ & \nfe{\underline{25.4}}\,(\tps{\textbf{2.21}})\SZ & \nfe{\underline{61.4}}\,(\tps{\textbf{3.59}})\SZ & \nfe{\underline{46.2}}\,(\tps{\textbf{3.90}})\SZ & \nfe{\underline{101.7}}\,(\tps{\textbf{4.18}})\SZ & \nfe{\underline{70.4}}\,(\tps{\textbf{4.68}})\SZ \\
Spiffy graph & \nfe{31.0}\,(\tps{2.20})\SZ & \nfe{27.0}\,(\tps{\underline{1.71}})\SZ & \nfe{86.5}\,(\tps{2.73})\SZ & \nfe{65.0}\,(\tps{2.50})\SZ & \nfe{158.6}\,(\tps{2.91})\SZ & \nfe{113.1}\,(\tps{2.76})\SZ \\
free chain, match & \nfe{30.9}\,(\tps{2.14})\SZ & \nfe{29.5}\,(\tps{1.38})\SZ & \nfe{87.4}\,(\tps{2.67})\SZ & \nfe{79.0}\,(\tps{1.86})\SZ & \nfe{159.5}\,(\tps{2.86})\SZ & \nfe{143.2}\,(\tps{1.88})\SZ \\
free chain, RS & \nfe{31.4}\,(\tps{1.62})\SZ & \nfe{30.2}\,(\tps{1.27})\SZ & \nfe{64.8}\,(\tps{2.79})\SZ & \nfe{53.0}\,(\tps{2.56})\SZ & \nfe{105.0}\,(\tps{3.33})\SZ & \nfe{77.1}\,(\tps{3.35})\SZ \\
BlockSpec tree & \nfe{41.9}\,(\tps{1.54})\SZ & \nfe{36.4}\,(\tps{1.07})\SZ & \nfe{138.2}\,(\tps{1.67})\SZ & \nfe{115.0}\,(\tps{1.17})\SZ & \nfe{266.2}\,(\tps{1.65})\SZ & \nfe{219.1}\,(\tps{1.13})\SZ \\
\bottomrule
\end{tabular}

\end{table}

\subsection{Stopping at the end of the answer}
\label{app:stop-answer}
Every table decodes the full generation length. After its answer the target fills the remaining positions with
end-of-text tokens, which every drafter predicts easily, and the fixed-length ratios count them. On SDAR-8B the answer
takes $162$ of the $256$ tokens on GSM8K, $55$ on MBPP, $112$ on HumanEval and $251$ on MATH500. Table~\ref{tab:stop-answer}
stops the one-commit runs of Table~\ref{tab:main-sdar} at the block that holds the first end-of-text token, from their
logged rounds; blocks decode left to right, so no committed token changes. The ratios fall most where the answers are
shortest, but \sys{} and \sys{}-$\gamma$ keep the lowest NFE and \sys{}-$\gamma$ the highest throughput in all $16$
settings, at $1.04$--$1.38\times$ Spiffy's.

\begin{table}[htbp]
\centering\footnotesize
\caption{The one-commit runs of Table~\ref{tab:main-sdar} (SDAR-8B b32, mean of seeds $0$--$2$, second machine,
$n{=}350$ per task, HumanEval $164$), stopped at the first end-of-text token: \nfe{NFE per sample} (\tps{throughput ratio}
over draft-free decoding stopped the same way), best in bold, second best underlined. ``answer'': the draft-free run's
mean answer length in tokens, of $256$.}
\label{tab:stop-answer}
\setlength{\tabcolsep}{3pt}
\resizebox{\linewidth}{!}{{\begin{tabular}{lcccccccc}
\toprule
\nfe{NFE}$\downarrow$ (\tps{tok/s}$\uparrow$) & \multicolumn{2}{c}{GSM8K} & \multicolumn{2}{c}{MBPP} & \multicolumn{2}{c}{HumanEval} & \multicolumn{2}{c}{MATH500} \\
\cmidrule(lr){2-3}\cmidrule(lr){4-5}\cmidrule(lr){6-7}\cmidrule(lr){8-9}
method & $D8$ & $D20$ & $D8$ & $D20$ & $D8$ & $D20$ & $D8$ & $D20$ \\
\midrule
\rowcolor[gray]{0.90}$T{=}0$ & \multicolumn{2}{c}{\footnotesize answer 162 tokens} & \multicolumn{2}{c}{\footnotesize answer 55 tokens} & \multicolumn{2}{c}{\footnotesize answer 112 tokens} & \multicolumn{2}{c}{\footnotesize answer 251 tokens} \\
no draft ($D0$) & \multicolumn{2}{c}{\nfe{178.2}$\downarrow$ (\tps{1.00}$\uparrow$)\SZ} & \multicolumn{2}{c}{\nfe{72.4}$\downarrow$ (\tps{1.00}$\uparrow$)\SZ} & \multicolumn{2}{c}{\nfe{128.0}$\downarrow$ (\tps{1.00}$\uparrow$)\SZ} & \multicolumn{2}{c}{\nfe{266.3}$\downarrow$ (\tps{1.00}$\uparrow$)\SZ} \\
\midrule
\rowcolor{oursrow}\textbf{\sys{} (ours)} & \nfe{\underline{55.6}}\,(\tps{\underline{2.48}})\SZ & \nfe{\textbf{43.6}}\,(\tps{\underline{1.90}})\SZ & \nfe{\textbf{25.9}}\,(\tps{\underline{2.02}})\SZ & \nfe{\textbf{21.1}}\,(\tps{1.30})\SZ & \nfe{\underline{40.0}}\,(\tps{\underline{2.52}})\SZ & \nfe{\textbf{31.0}}\,(\tps{\underline{1.93}})\SZ & \nfe{\textbf{83.7}}\,(\tps{\underline{2.48}})\SZ & \nfe{\textbf{65.5}}\,(\tps{\underline{1.84}})\SZ \\
\rowcolor{oursrow}\sys{}-$\gamma$ (ours) & \nfe{\textbf{55.6}}\,(\tps{\textbf{2.53}})\SZ & \nfe{\underline{45.8}}\,(\tps{\textbf{2.26}})\SZ & \nfe{\underline{26.2}}\,(\tps{\textbf{2.05}})\SZ & \nfe{\underline{23.4}}\,(\tps{\textbf{1.73}})\SZ & \nfe{\textbf{39.8}}\,(\tps{\textbf{2.54}})\SZ & \nfe{\underline{33.0}}\,(\tps{\textbf{2.26}})\SZ & \nfe{\underline{83.7}}\,(\tps{\textbf{2.50}})\SZ & \nfe{\underline{68.5}}\,(\tps{\textbf{2.23}})\SZ \\
Spiffy graph & \nfe{66.6}\,(\tps{2.29})\SZ & \nfe{54.0}\,(\tps{1.83})\SZ & \nfe{28.0}\,(\tps{1.95})\SZ & \nfe{24.5}\,(\tps{\underline{1.33}})\SZ & \nfe{47.3}\,(\tps{2.35})\SZ & \nfe{38.2}\,(\tps{1.81})\SZ & \nfe{99.0}\,(\tps{2.28})\SZ & \nfe{81.3}\,(\tps{1.78})\SZ \\
SSD free chain & \nfe{66.2}\,(\tps{2.26})\SZ & \nfe{61.5}\,(\tps{1.45})\SZ & \nfe{28.1}\,(\tps{1.95})\SZ & \nfe{26.8}\,(\tps{1.10})\SZ & \nfe{46.5}\,(\tps{2.34})\SZ & \nfe{43.8}\,(\tps{1.45})\SZ & \nfe{99.0}\,(\tps{2.27})\SZ & \nfe{92.6}\,(\tps{1.41})\SZ \\
BlockSpec tree & \nfe{90.6}\,(\tps{1.63})\SZ & \nfe{77.1}\,(\tps{1.09})\SZ & \nfe{36.7}\,(\tps{1.48})\SZ & \nfe{31.7}\,(\tps{0.88})\SZ & \nfe{64.5}\,(\tps{1.66})\SZ & \nfe{54.6}\,(\tps{1.11})\SZ & \nfe{134.0}\,(\tps{1.64})\SZ & \nfe{113.9}\,(\tps{1.09})\SZ \\
\specialrule{0.6pt}{2.5pt}{0pt}
\rowcolor[gray]{0.90}$T{=}1$ & \multicolumn{2}{c}{\footnotesize answer 164 tokens} & \multicolumn{2}{c}{\footnotesize answer 56 tokens} & \multicolumn{2}{c}{\footnotesize answer 117 tokens} & \multicolumn{2}{c}{\footnotesize answer 251 tokens} \\
no draft ($D0$) & \multicolumn{2}{c}{\nfe{181.1}$\downarrow$ (\tps{1.00}$\uparrow$)\SZ} & \multicolumn{2}{c}{\nfe{72.5}$\downarrow$ (\tps{1.00}$\uparrow$)\SZ} & \multicolumn{2}{c}{\nfe{132.4}$\downarrow$ (\tps{1.00}$\uparrow$)\SZ} & \multicolumn{2}{c}{\nfe{266.5}$\downarrow$ (\tps{1.00}$\uparrow$)\SZ} \\
\midrule
\rowcolor{oursrow}\textbf{\sys{} (ours)} & \nfe{\textbf{58.4}}\,(\tps{\underline{2.40}})\SZ & \nfe{\textbf{46.3}}\,(\tps{\underline{1.81}})\SZ & \nfe{\textbf{29.4}}\,(\tps{\underline{1.74}})\SZ & \nfe{\textbf{24.0}}\,(\tps{1.11})\SZ & \nfe{\textbf{43.3}}\,(\tps{\underline{2.37}})\SZ & \nfe{\textbf{34.2}}\,(\tps{\underline{1.79}})\SZ & \nfe{\textbf{86.7}}\,(\tps{\underline{2.36}})\SZ & \nfe{\textbf{68.3}}\,(\tps{\underline{1.81}})\SZ \\
\rowcolor{oursrow}\sys{}-$\gamma$ (ours) & \nfe{\underline{58.8}}\,(\tps{\textbf{2.44}})\SZ & \nfe{\underline{49.0}}\,(\tps{\textbf{2.18}})\SZ & \nfe{\underline{30.3}}\,(\tps{\textbf{1.81}})\SZ & \nfe{\underline{27.3}}\,(\tps{\textbf{1.59}})\SZ & \nfe{\underline{43.7}}\,(\tps{\textbf{2.38}})\SZ & \nfe{\underline{36.2}}\,(\tps{\textbf{2.16}})\SZ & \nfe{\underline{87.4}}\,(\tps{\textbf{2.40}})\SZ & \nfe{\underline{72.1}}\,(\tps{\textbf{2.17}})\SZ \\
Spiffy graph & \nfe{69.7}\,(\tps{2.20})\SZ & \nfe{58.4}\,(\tps{1.69})\SZ & \nfe{31.0}\,(\tps{1.73})\SZ & \nfe{27.9}\,(\tps{\underline{1.15}})\SZ & \nfe{50.5}\,(\tps{2.25})\SZ & \nfe{42.6}\,(\tps{1.75})\SZ & \nfe{103.0}\,(\tps{2.20})\SZ & \nfe{86.5}\,(\tps{1.67})\SZ \\
free chain, RS & \nfe{75.3}\,(\tps{1.45})\SZ & \nfe{75.0}\,(\tps{1.08})\SZ & \nfe{33.6}\,(\tps{1.24})\SZ & \nfe{33.7}\,(\tps{0.84})\SZ & \nfe{57.8}\,(\tps{1.47})\SZ & \nfe{57.3}\,(\tps{1.07})\SZ & \nfe{115.9}\,(\tps{1.39})\SZ & \nfe{115.2}\,(\tps{1.00})\SZ \\
BlockSpec tree & \nfe{94.2}\,(\tps{1.59})\SZ & \nfe{81.6}\,(\tps{1.06})\SZ & \nfe{39.9}\,(\tps{1.34})\SZ & \nfe{35.2}\,(\tps{0.78})\SZ & \nfe{68.4}\,(\tps{1.60})\SZ & \nfe{59.0}\,(\tps{1.04})\SZ & \nfe{137.9}\,(\tps{1.59})\SZ & \nfe{119.0}\,(\tps{1.05})\SZ \\
\bottomrule
\end{tabular}}
}
\end{table}

\subsection{More ablations and curves}
\label{app:ext-ablations}

\begin{table}[htbp]
\centering\footnotesize
\caption{Ablations on SDAR-8B MBPP b32 at $T{=}0$. Top: recipe components. Bottom: padding and the mass floor
$\gamma$, the two knobs that trade forwards for wall clock. The full $\gamma$ grid
and the calibration-temperature ablation are in the released records. Seed $0$, $n{=}350$; first machine, so the ratios differ from Table~\ref{tab:main-sdar}.}
\label{tab:ablation}
\setlength{\tabcolsep}{6pt}
\begin{tabular}{lcc}
\toprule
 & \multicolumn{2}{c}{\nfe{NFE}$\downarrow$ (\tps{tok/s}$\uparrow$)} \\
\cmidrule(lr){2-3}
configuration & $D8$ & $D20$ \\
\midrule
\textbf{final recipe} (tree, hedge $0.6$, no calibration temperature, $\gamma{=}0$, unpadded) & \nfe{56.7}\,(\tps{3.65})\SZ & \nfe{39.5}\,(\tps{3.19})\SZ \\
$-$ token hedge (gate off) & \nfe{56.9}\,(\tps{3.72})\SZ & \nfe{40.3}\,(\tps{3.19})\SZ \\
chain instead of tree & \nfe{63.7}\,(\tps{3.34})\SZ & \nfe{51.2}\,(\tps{3.09})\SZ \\
$-$ scaled training corpus & \nfe{57.3}\,(\tps{3.63})\SZ & \nfe{40.5}\,(\tps{3.18})\SZ \\
\midrule
padded, $\gamma{=}0$ & \nfe{56.7}\,(\tps{3.66})\SZ & \nfe{39.5}\,(\tps{3.10})\SZ \\
no padding, $\gamma{=}0.05$ & \nfe{58.1}\,(\tps{3.68})\SZ & \nfe{44.4}\,(\tps{4.16})\SZ \\
no padding, $\gamma{=}0.02$ (calibrated; \sys{}-$\gamma$) & \nfe{56.9}\,(\tps{3.72})\SZ & \nfe{41.8}\,(\tps{4.13})\SZ \\
padded, $\gamma{=}0.02$ & \nfe{56.9}\,(\tps{3.79})\SZ & \nfe{41.7}\,(\tps{3.32})\SZ \\
\bottomrule
\end{tabular}

\end{table}

\paragraph{Full budget grid.} Tables~\ref{tab:main-full} and~\ref{tab:main-full-b}
repeat Table~\ref{tab:main-sdar} with the $D6$ and $D12$ columns.

\begin{table}[htbp]
\centering\footnotesize
\caption{SDAR-8B b32, $T{=}0$, all four budgets: GSM8K and MBPP. Seed $0$, unpadded, $n{=}350$; first machine, so the ratios differ from Table~\ref{tab:main-sdar}'s (same NFE).}
\label{tab:main-full}
\setlength{\tabcolsep}{3pt}
\resizebox{\linewidth}{!}{\begin{tabular}{lcccccccc}
\toprule
\nfe{NFE}$\downarrow$ (\tps{tok/s}$\uparrow$) & \multicolumn{4}{c}{GSM8K} & \multicolumn{4}{c}{MBPP} \\
 & \multicolumn{4}{c}{\footnotesize acc.\ 0.89--0.90} & \multicolumn{4}{c}{\footnotesize acc.\ 0.42--0.43} \\
\cmidrule(lr){2-5}\cmidrule(lr){6-9}
method & $D6$ & $D8$ & $D12$ & $D20$ & $D6$ & $D8$ & $D12$ & $D20$ \\
\midrule
no draft ($D0$) & \multicolumn{4}{c}{\nfe{270.8}$\downarrow$ (\tps{1.00}$\uparrow$)\SZ} & \multicolumn{4}{c}{\nfe{269.2}$\downarrow$ (\tps{1.00}$\uparrow$)\SZ} \\
\midrule
\rowcolor{oursrow}\textbf{\sys{} (ours)} & \nfe{\textbf{77.8}}\,(\tps{\textbf{2.79}})\SZ & \nfe{\textbf{70.1}}\,(\tps{\underline{3.00}})\SZ & \nfe{\textbf{61.0}}\,(\tps{\underline{3.29}})\SZ & \nfe{\textbf{52.4}}\,(\tps{\underline{2.98}})\SZ & \nfe{\textbf{64.8}}\,(\tps{\underline{3.36}})\SZ & \nfe{\textbf{56.7}}\,(\tps{\underline{3.65}})\SZ & \nfe{\textbf{48.3}}\,(\tps{\underline{3.75}})\SZ & \nfe{\textbf{39.5}}\,(\tps{\underline{3.19}})\SZ \\
\rowcolor{oursrow}\sys{}-$\gamma$ (ours) & \nfe{\underline{77.9}}\,(\tps{\underline{2.79}})\SZ & \nfe{\underline{70.1}}\,(\tps{\textbf{3.09}})\SZ & \nfe{\underline{61.5}}\,(\tps{\textbf{3.36}})\SZ & \nfe{\underline{54.7}}\,(\tps{\textbf{3.46}})\SZ & \nfe{\underline{64.8}}\,(\tps{\textbf{3.38}})\SZ & \nfe{\underline{56.9}}\,(\tps{\textbf{3.72}})\SZ & \nfe{\underline{49.1}}\,(\tps{\textbf{4.00}})\SZ & \nfe{\underline{41.8}}\,(\tps{\textbf{4.13}})\SZ \\
Spiffy graph & \nfe{106.4}\,(\tps{2.21})\SZ & \nfe{92.5}\,(\tps{2.46})\SZ & \nfe{81.0}\,(\tps{2.85})\SZ & \nfe{71.6}\,(\tps{2.81})\SZ & \nfe{103.2}\,(\tps{2.30})\SZ & \nfe{83.6}\,(\tps{2.81})\SZ & \nfe{72.7}\,(\tps{2.74})\SZ & \nfe{61.2}\,(\tps{2.64})\SZ \\
SSD free chain & \nfe{104.8}\,(\tps{2.20})\SZ & \nfe{92.4}\,(\tps{2.54})\SZ & \nfe{89.8}\,(\tps{2.50})\SZ & \nfe{84.7}\,(\tps{2.08})\SZ & \nfe{102.6}\,(\tps{2.32})\SZ & \nfe{84.5}\,(\tps{2.73})\SZ & \nfe{83.5}\,(\tps{2.42})\SZ & \nfe{75.8}\,(\tps{1.89})\SZ \\
BlockSpec tree & \nfe{139.1}\,(\tps{1.67})\SZ & \nfe{136.5}\,(\tps{1.64})\SZ & \nfe{128.2}\,(\tps{1.71})\SZ & \nfe{114.6}\,(\tps{1.49})\SZ & \nfe{136.3}\,(\tps{1.70})\SZ & \nfe{135.1}\,(\tps{1.70})\SZ & \nfe{126.8}\,(\tps{1.49})\SZ & \nfe{111.5}\,(\tps{1.16})\SZ \\
\bottomrule
\end{tabular}
}
\end{table}

\begin{table}[htbp]
\centering\footnotesize
\caption{SDAR-8B b32, $T{=}0$, all four budgets: HumanEval and MATH500. Seed $0$, unpadded, $n{=}350$ (HumanEval $164$); first machine, so the ratios differ from Table~\ref{tab:main-sdar}'s (same NFE).}
\label{tab:main-full-b}
\setlength{\tabcolsep}{3pt}
\resizebox{\linewidth}{!}{\begin{tabular}{lcccccccc}
\toprule
\nfe{NFE}$\downarrow$ (\tps{tok/s}$\uparrow$) & \multicolumn{4}{c}{HumanEval} & \multicolumn{4}{c}{MATH500} \\
 & \multicolumn{4}{c}{\footnotesize acc.\ 0.71--0.74} & \multicolumn{4}{c}{\footnotesize acc.\ 0.10--0.12} \\
\cmidrule(lr){2-5}\cmidrule(lr){6-9}
method & $D6$ & $D8$ & $D12$ & $D20$ & $D6$ & $D8$ & $D12$ & $D20$ \\
\midrule
no draft ($D0$) & \multicolumn{4}{c}{\nfe{271.4}$\downarrow$ (\tps{1.00}$\uparrow$)\SZ} & \multicolumn{4}{c}{\nfe{271.8}$\downarrow$ (\tps{1.00}$\uparrow$)\SZ} \\
\midrule
\rowcolor{oursrow}\textbf{\sys{} (ours)} & \nfe{\textbf{70.9}}\,(\tps{\underline{3.13}})\SZ & \nfe{\textbf{62.5}}\,(\tps{\underline{3.44}})\SZ & \nfe{\textbf{53.6}}\,(\tps{\underline{3.79}})\SZ & \nfe{\textbf{44.8}}\,(\tps{\underline{3.54}})\SZ & \nfe{\textbf{92.6}}\,(\tps{\underline{2.35}})\SZ & \nfe{\textbf{84.6}}\,(\tps{\underline{2.47}})\SZ & \nfe{\textbf{75.3}}\,(\tps{\underline{2.70}})\SZ & \nfe{\textbf{66.0}}\,(\tps{\underline{2.46}})\SZ \\
\rowcolor{oursrow}\sys{}-$\gamma$ (ours) & \nfe{\underline{70.9}}\,(\tps{\textbf{3.14}})\SZ & \nfe{\underline{62.6}}\,(\tps{\textbf{3.46}})\SZ & \nfe{\underline{54.2}}\,(\tps{\textbf{3.94}})\SZ & \nfe{\underline{46.7}}\,(\tps{\textbf{4.02}})\SZ & \nfe{\underline{92.7}}\,(\tps{\textbf{2.37}})\SZ & \nfe{\underline{84.7}}\,(\tps{\textbf{2.56}})\SZ & \nfe{\underline{76.1}}\,(\tps{\textbf{2.79}})\SZ & \nfe{\underline{69.1}}\,(\tps{\textbf{2.74}})\SZ \\
Spiffy graph & \nfe{105.0}\,(\tps{2.32})\SZ & \nfe{87.6}\,(\tps{2.74})\SZ & \nfe{77.5}\,(\tps{2.97})\SZ & \nfe{65.5}\,(\tps{3.01})\SZ & \nfe{109.5}\,(\tps{2.18})\SZ & \nfe{100.6}\,(\tps{2.31})\SZ & \nfe{88.9}\,(\tps{2.57})\SZ & \nfe{82.5}\,(\tps{2.41})\SZ \\
SSD free chain & \nfe{103.7}\,(\tps{2.30})\SZ & \nfe{87.0}\,(\tps{2.71})\SZ & \nfe{85.4}\,(\tps{2.71})\SZ & \nfe{80.0}\,(\tps{2.23})\SZ & \nfe{107.9}\,(\tps{2.17})\SZ & \nfe{100.6}\,(\tps{2.35})\SZ & \nfe{97.1}\,(\tps{2.33})\SZ & \nfe{94.0}\,(\tps{1.93})\SZ \\
BlockSpec tree & \nfe{138.3}\,(\tps{1.72})\SZ & \nfe{136.0}\,(\tps{1.71})\SZ & \nfe{127.5}\,(\tps{1.74})\SZ & \nfe{112.7}\,(\tps{1.51})\SZ & \nfe{140.7}\,(\tps{1.65})\SZ & \nfe{136.8}\,(\tps{1.65})\SZ & \nfe{128.5}\,(\tps{1.71})\SZ & \nfe{116.3}\,(\tps{1.40})\SZ \\
\bottomrule
\end{tabular}
}
\end{table}

\paragraph{Budget curves.} Table~\ref{tab:budget} and Figure~\ref{fig:budget}
give \sys{}'s (unpadded, $\gamma{=}0$) response to the budget on the four SDAR-8B
tasks and on mini MBPP. NFE falls to $D20$ on every setting; the
throughput optimum sits at $D12$--$D16$ on every SDAR-8B task ($3.79\times$ HumanEval, $3.75\times$ MBPP
and $2.70\times$ MATH500 at $D12$, $3.31\times$ GSM8K at $D16$) and falls to
$2.46$--$3.54\times$ at $D20$, while on LLaDA2.0-mini the curve rises to $D16$ and stays there at $D20$ ($2.31\times$): the optimum's location is a
property of the target's forward economics. With the mass floor of
\sys{}-$\gamma$ the SDAR ratios rise to $D20$ on GSM8K, MBPP and HumanEval and level off on MATH500
(Tables~\ref{tab:main-full} and~\ref{tab:main-full-b}).

\begin{table}[htbp]
\centering\small
\caption{Budget response, $T{=}0$, $n{=}350$ (HumanEval 164; mini $100$), seed $0$. Bold: throughput optimum.}
\label{tab:budget}
\setlength{\tabcolsep}{4pt}
\resizebox{\linewidth}{!}{\begin{tabular}{llcccccccc}
\toprule
 & & $D0$ & $D3$ & $D5$ & $D6$ & $D8$ & $D12$ & $D16$ & $D20$ \\
\midrule
\multirow{2}{*}{\sys{}, GSM8K} & \nfe{NFE}$\downarrow$ & \nfe{270.8}\SZ & \nfe{102.4}\SZ & \nfe{84.3}\SZ & \nfe{77.8}\SZ & \nfe{70.1}\SZ & \nfe{61.0}\SZ & \nfe{54.8}\SZ & \nfe{52.4}\SZ \\
 & \tps{tok/s}$\uparrow$ & \tps{1.00}\SZ & \tps{2.20}\SZ & \tps{2.64}\SZ & \tps{2.79}\SZ & \tps{3.00}\SZ & \tps{3.29}\SZ & \tps{\textbf{3.31}}\SZ & \tps{2.98}\SZ \\
\multirow{2}{*}{\sys{}, MBPP} & \nfe{NFE}$\downarrow$ & \nfe{269.2}\SZ & \nfe{88.8}\SZ & \nfe{72.1}\SZ & \nfe{64.8}\SZ & \nfe{56.7}\SZ & \nfe{48.3}\SZ & \nfe{40.7}\SZ & \nfe{39.5}\SZ \\
 & \tps{tok/s}$\uparrow$ & \tps{1.00}\SZ & \tps{2.54}\SZ & \tps{3.04}\SZ & \tps{3.36}\SZ & \tps{3.65}\SZ & \tps{\textbf{3.75}}\SZ & \tps{3.71}\SZ & \tps{3.19}\SZ \\
\multirow{2}{*}{\sys{}, HumanEval} & \nfe{NFE}$\downarrow$ & \nfe{271.4}\SZ & \nfe{95.6}\SZ & \nfe{77.7}\SZ & \nfe{70.9}\SZ & \nfe{62.5}\SZ & \nfe{53.6}\SZ & \nfe{46.4}\SZ & \nfe{44.8}\SZ \\
 & \tps{tok/s}$\uparrow$ & \tps{1.00}\SZ & \tps{2.37}\SZ & \tps{2.90}\SZ & \tps{3.13}\SZ & \tps{3.44}\SZ & \tps{\textbf{3.79}}\SZ & \tps{3.78}\SZ & \tps{3.54}\SZ \\
\multirow{2}{*}{\sys{}, MATH500} & \nfe{NFE}$\downarrow$ & \nfe{271.8}\SZ & \nfe{117.0}\SZ & \nfe{98.4}\SZ & \nfe{92.6}\SZ & \nfe{84.6}\SZ & \nfe{75.3}\SZ & \nfe{69.2}\SZ & \nfe{66.0}\SZ \\
 & \tps{tok/s}$\uparrow$ & \tps{1.00}\SZ & \tps{1.96}\SZ & \tps{2.24}\SZ & \tps{2.35}\SZ & \tps{2.47}\SZ & \tps{\textbf{2.70}}\SZ & \tps{2.56}\SZ & \tps{2.46}\SZ \\
\midrule
\multirow{2}{*}{\sys{}, mini MBPP} & \nfe{NFE}$\downarrow$ & \nfe{270.4}\SZ & \nfe{125.1}\SZ & \nfe{106.8}\SZ & \nfe{102.2}\SZ & \nfe{93.5}\SZ & \nfe{85.5}\SZ & \nfe{77.2}\SZ & \nfe{74.2}\SZ \\
 & \tps{tok/s}$\uparrow$ & \tps{1.00}\SZ & \tps{1.79}\SZ & \tps{1.97}\SZ & \tps{2.03}\SZ & \tps{2.11}\SZ & \tps{2.16}\SZ & \tps{\textbf{2.31}}\SZ & \tps{2.31}\SZ \\
\bottomrule
\end{tabular}
}
\end{table}

\begin{figure}[htbp]
\centering
\includegraphics[width=0.95\linewidth]{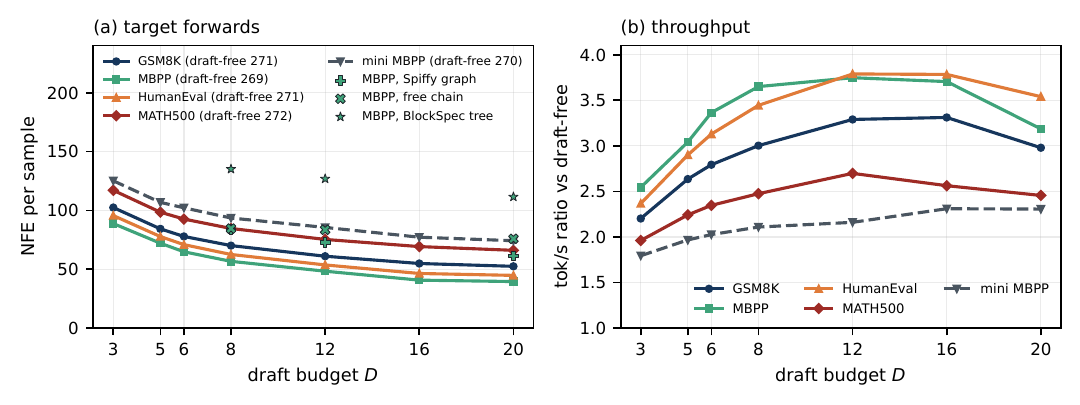}
\caption{Budget curves from Table~\ref{tab:budget} ($T{=}0$, seed $0$, unpadded; SDAR-8B solid, LLaDA2.0-mini MBPP
dashed). Left: NFE vs.\ draft budget for \sys{} and, on SDAR-8B MBPP, the fixed baselines at $D8/12/20$ (markers).
Right: throughput ratio over each setting's draft-free control for \sys{}, with its optimum at $D12$--$D16$ on SDAR-8B
and level from $D16$ to $D20$ on LLaDA2.0-mini.}
\label{fig:budget}
\end{figure}

\paragraph{Temperature sweep.} Table~\ref{tab:tsweep}, discussed in
Section~\ref{sec:obs-temp}, condenses the sweep to $D8$ and to \sys{} and Spiffy;
Table~\ref{tab:tsweep-full} is the complete grid: five temperatures, three drafters,
three verifiers, $D8$ and $D20$, with the draft-free row of each temperature
carrying the target's own accuracy, argmax fraction and mean committed-token
probability. Two rows worth quoting: at $T{=}1.5$, coupled vs uncoupled at $D20$ is $53.1$ vs $62.5$ NFE for
\sys{} and $72.6$ vs $84.2$ for Spiffy; the confidence tree, which ranks positions by
stale confidence without the noise, gains little from coupling ($121.6$ vs $126.0$
at the same budget). At
$T{=}2$, where the target's samples are near-uniform, uncoupled acceptance
all but disappears for every drafter (all three sit at $257$--$261$ NFE at $D8$ against
the $267.1$ of draft-free decoding in this grid) while
coupling still drafts, and Spiffy's noise-aware position ranking (stale confidence of
the \emph{drawn} token) overtakes the $T{=}0$-trained order model under coupling
($114.8$ vs $187.2$ NFE at $D8$), an ordering that holds nowhere the target
produces usable text.

\begin{table}[htbp]
\centering\small
\caption{Complete temperature sweep, SDAR-8B MBPP b32, $n{=}100$. \SEEDN{tsweep}. NFE
(tok/s ratio over the $T$-matched draft-free control) for every
drafter under the three equality verifiers at $D8$ and $D20$. $^{\dagger}$greedy match is not
distribution-preserving at $T>0$ (Proposition~\ref{prop:match} does not apply); its
NFE is not comparable to the exact regimes' and its emitted text drifts from the
target's distribution (Table~\ref{tab:tsweep}).}
\label{tab:tsweep-full}
\setlength{\tabcolsep}{4pt}
\resizebox{\linewidth}{!}{\begin{tabular}{llcccccc}
\toprule
 & & \multicolumn{2}{c}{coupled-noise match} & \multicolumn{2}{c}{uncoupled match} & \multicolumn{2}{c}{greedy match$^{\dagger}$} \\
\cmidrule(lr){3-4}\cmidrule(lr){5-6}\cmidrule(lr){7-8}
$T$ & drafter & $D8$ & $D20$ & $D8$ & $D20$ & $D8$ & $D20$ \\
\midrule
\multirow{4}{*}{0.5} & no draft ($D0$) & \multicolumn{6}{c}{267.1 NFE, accuracy 0.36, argmax fraction 0.998, mean committed $p$ 0.994} \\
 & \sys{} & 56.4\,(3.70) & 39.2\,(3.25) & 56.4\,(3.72) & 39.2\,(3.25) & 56.0\,(3.67) & 39.0\,(3.19) \\
 & Spiffy graph & 83.2\,(2.83) & 60.9\,(2.70) & 83.0\,(2.83) & 60.5\,(2.73) & 83.1\,(2.79) & 60.8\,(2.67) \\
 & BlockSpec-style tree & 135.4\,(1.69) & 112.2\,(1.18) & 134.9\,(1.71) & 111.4\,(1.17) & 135.4\,(1.69) & 112.1\,(1.15) \\
\midrule
\multirow{4}{*}{1.0} & no draft ($D0$) & \multicolumn{6}{c}{267.1 NFE, accuracy 0.42, argmax fraction 0.990, mean committed $p$ 0.968} \\
 & \sys{} & 58.5\,(3.56) & 41.1\,(3.09) & 58.6\,(3.56) & 41.0\,(3.11) & 56.1\,(3.68) & 39.3\,(3.15) \\
 & Spiffy graph & 85.3\,(2.75) & 63.5\,(2.56) & 85.5\,(2.72) & 63.1\,(2.58) & 85.0\,(2.74) & 63.3\,(2.54) \\
 & BlockSpec-style tree & 137.1\,(1.68) & 113.9\,(1.16) & 135.4\,(1.70) & 111.9\,(1.15) & 136.9\,(1.67) & 113.9\,(1.15) \\
\midrule
\multirow{4}{*}{1.5} & no draft ($D0$) & \multicolumn{6}{c}{267.1 NFE, accuracy 0.11, argmax fraction 0.893, mean committed $p$ 0.841} \\
 & \sys{} & 71.2\,(2.98) & 53.1\,(2.45) & 79.1\,(2.69) & 62.5\,(2.10) & 65.0\,(3.24) & 49.0\,(2.65) \\
 & Spiffy graph & 92.8\,(2.51) & 72.6\,(2.21) & 106.1\,(2.20) & 84.2\,(1.88) & 92.5\,(2.50) & 73.1\,(2.19) \\
 & BlockSpec-style tree & 145.3\,(1.57) & 121.6\,(1.06) & 148.3\,(1.54) & 126.0\,(1.03) & 145.1\,(1.56) & 121.6\,(1.07) \\
\midrule
\multirow{4}{*}{2.0} & no draft ($D0$) & \multicolumn{6}{c}{267.1 NFE, accuracy 0.00, argmax fraction 0.062, mean committed $p$ 0.054} \\
 & \sys{} & 187.2\,(1.21) & 146.1\,(0.89) & 257.5\,(0.88) & 252.7\,(0.51) & 183.7\,(1.23) & 144.2\,(0.90) \\
 & Spiffy graph & 114.8\,(2.03) & 103.1\,(1.47) & 259.4\,(0.91) & 256.6\,(0.59) & 114.8\,(2.04) & 103.0\,(1.44) \\
 & BlockSpec-style tree & 186.8\,(1.22) & 148.2\,(0.86) & 261.3\,(0.88) & 257.2\,(0.50) & 186.7\,(1.21) & 148.2\,(0.85) \\
\midrule
\multirow{4}{*}{2.5} & no draft ($D0$) & \multicolumn{6}{c}{267.1 NFE, accuracy 0.00, argmax fraction 0.002, mean committed $p$ 0.000} \\
 & \sys{} & 198.9\,(1.14) & 155.4\,(0.84) & 267.0\,(0.85) & 267.0\,(0.48) & 198.9\,(1.14) & 155.4\,(0.83) \\
 & Spiffy graph & 109.9\,(2.12) & 98.1\,(1.56) & 267.0\,(0.88) & 267.0\,(0.56) & 109.9\,(2.13) & 98.1\,(1.54) \\
 & BlockSpec-style tree & 184.7\,(1.24) & 145.1\,(0.89) & 267.0\,(0.86) & 267.0\,(0.49) & 184.7\,(1.23) & 145.1\,(0.87) \\
\bottomrule
\end{tabular}
}
\end{table}

\subsection{Additional verifier material}
\label{app:verifier-extra}
Table~\ref{tab:verifier} isolates verifier and structure on \sys{}'s own proposals
at $T{=}1$: the tree is worth $11$--$17$ NFE and $0.25$--$0.67$ in throughput ratio over the same drafter's chain under the
same verifier. On the chain, RS takes $1.9$--$3.0$ fewer NFE than match but is $0.23$--$0.25$ behind in throughput
ratio at every budget: the per-link acceptance test costs more than the forwards it saves.

\begin{table}[htbp]
\centering\footnotesize
\caption{Verifier and structure at $T{=}1$, SDAR-8B MBPP: \sys{}'s proposals under
each verifier (match: coupled-noise match; RS: rejection sampling, chains only).
Seed $0$.}
\label{tab:verifier}
\setlength{\tabcolsep}{3pt}
\begin{tabular}{llccc}
\toprule
 & & \multicolumn{3}{c}{\nfe{NFE}$\downarrow$ (\tps{tok/s}$\uparrow$)} \\
\cmidrule(lr){3-5}
drafter & verifier & $D8$ & $D12$ & $D20$ \\
\midrule
\sys{} & tree + match & \nfe{60.3}\,(\tps{3.48})\SZ & \nfe{51.6}\,(\tps{3.64})\SZ & \nfe{42.7}\,(\tps{2.96})\SZ \\
\sys{} & chain + match & \nfe{71.7}\,(\tps{3.01})\SZ & \nfe{65.3}\,(\tps{2.97})\SZ & \nfe{59.8}\,(\tps{2.71})\SZ \\
\sys{} & chain + RS & \nfe{68.9}\,(\tps{2.76})\SZ & \nfe{63.4}\,(\tps{2.74})\SZ & \nfe{56.8}\,(\tps{2.48})\SZ \\
\bottomrule
\end{tabular}

\end{table}

On chains, where both exact verifiers are defined, earlier runs of the same drafter, not part of the
benchmark tables, established a \emph{regime law}: per link, match transfers the drafter's argmax
skill through the shared noise, while RS degrades with $D_{TV}(\pi, q)$ rather
than with argmax stability, so RS wins where the stale conditional is flat (stale
confidence $\lesssim 0.6$) and match where it is sharp; temperature moves the mix
of regimes (the low-confidence share of rounds grows from ${\sim}6\%$ at
$T{=}0.5$ to ${\sim}46\%$ at $T{=}1.5$) without changing the mechanism, and a
sharper drafter raises match specifically. In the final benchmark RS takes
up to $3.0$ fewer forwards on \sys{}'s chain, but match stays ahead in throughput ratio at every budget, because RS
pays an acceptance test per link. A confidence-gated hybrid (match above $\sim 0.6$, RS
below) is the natural chain-only refinement and remains unimplemented, since
trees under match dominate every chain configuration.

\subsection{Mechanism: what happens in deep blocks}
\label{app:mechanism}
Averaged over the runs of Table~\ref{tab:main-sdar} at $T{=}0$, the chain and the tree accept $1.1$--$2.9$ tokens per
drafted round and \sys{} $2.5$--$7.5$; deep blocks accept more. Two measurements explain the gap structure of Table~\ref{tab:main-sdar} (SDAR-8B
MBPP b32, $D20$, seed 0, from an earlier run of this setting that logged per-block counts). First, stale-order knowledge decays across blocks: the
probability that the previous round's top-confidence position still commits falls
$0.74 / 0.68 / 0.34 / 0.16 / 0.11 / 0.08 / 0.07 / 0.04 / 0.03$ across blocks
$0$--$8$, while the stale argmax token stays right ($0.92 / 0.94 / 0.99 /
1.00 \dots$), deep blocks are near-deterministic in \emph{what} and unpredictable
in \emph{which position first}, from stale confidence alone. Second, accepted tokens
per drafted round by block:

\begin{center}\small
\begin{tabular}{lccccccccc}
\toprule
block $\to$ & 0 & 1 & 2 & 3 & 4 & 5 & 6 & 7 & 8 \\
\midrule
\textbf{\sys{}} & 2.29 & 2.67 & 5.83 & 9.54 & 12.30 & 13.76 & 14.50 & 14.50 & 14.50 \\
Spiffy graph & 1.88 & 2.23 & 3.45 & 4.48 & 4.74 & 5.05 & 5.18 & 5.14 & 5.15 \\
SSD-style free chain & 1.69 & 1.89 & 2.57 & 3.11 & 3.22 & 3.39 & 3.41 & 3.41 & 3.40 \\
BlockSpec-style tree & 1.55 & 1.39 & 1.50 & 1.55 & 1.57 & 1.58 & 1.58 & 1.58 & 1.58 \\
\bottomrule
\end{tabular}
\end{center}

Every drafter accepts more as the blocks become deterministic, but the amount is
set by how deep a \emph{correctly ordered} path each can propose: the learned,
prefix-conditional order model rides the transition law to $14.5$ accepted tokens
per round, the fixed 18-node graph is capped by its depth-6 chains at ${\sim}5$,
the free chain by its depth at ${\sim}3.4$, and the confidence tree spends its
budget on mutually exclusive first-level siblings (${\sim}15$ of 20 nodes) and
reaches depth $2$ or less in $90\%$ of drafted rounds. Figure~\ref{fig:mechanism} shows the same pair
graphically.

\begin{figure}[htbp]
\centering
\includegraphics[width=0.95\linewidth]{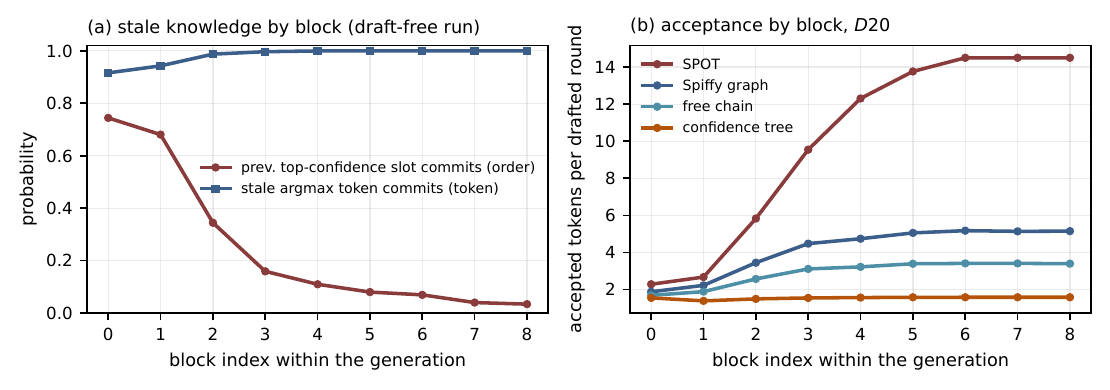}
\caption{Left: stale-order knowledge decays across blocks (previous round's
top-confidence position still commits) while the stale argmax token stays right.
Right: accepted tokens per drafted round by block, $D20$, SDAR-8B MBPP, from the same earlier run.}
\label{fig:mechanism}
\end{figure}

\subsection{Error analysis of the drafter}
\label{app:error}
Every rejection is attributable: at the first rejected link of a drafted round,
the fallback commit records the rank of the \emph{true} position under the order
model's proposal ordering and of the true token under the stale conditional. On
earlier full-set runs of the same drafter, not part of the benchmark tables, position misses account for
$62$--$81\%$ of first rejections and token misses for $3$--$6\%$; a growing share
of rejections at $D20$ ($32\%$ on MBPP, $90\%$ in the last block) occur with the
truth ranked \emph{first} by the order model, the tree ran out of nodes, which
is why NFE keeps improving with budget and why the mass floor $\gamma$ trades NFE
for wall clock rather than being free. The drafter's prefix-mass score is nearly
calibrated against realized acceptance ($96\%$ acceptance in the top mass bin,
$3\%$ in the bottom; Figure~\ref{fig:error}). These proportions are reported as a
diagnostic of the drafter and no claim in the main text rests on them.

\begin{figure}[htbp]
\centering
\includegraphics[width=0.95\linewidth]{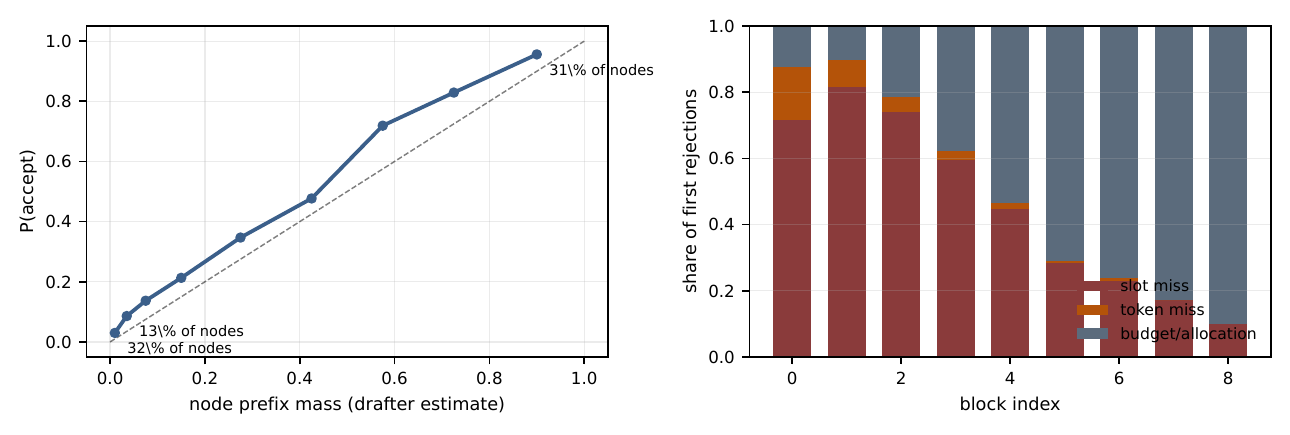}
\caption{Left: acceptance calibration of the drafter's own node score (realized
$P(\text{accept})$ per prefix-mass bin against the diagonal). Right:
first-rejection attribution by block ($D20$): position misses dominate early, deep
blocks are budget-bound. Earlier runs of the same drafter.}
\label{fig:error}
\end{figure}

\subsection{Drafter architecture}
\label{app:ext-arch}
An earlier offline capacity probe found the order model
feature-limited rather than capacity-limited: at fixed inputs, $8\times$ the
parameters changed held-out next-commit accuracy by $+0.004$, whereas broadening
the training data moved decoding. The released drafter is three layers,
$d_{\text{model}}{=}128$, $8$ heads. The probe is offline: we did not sweep depth or
width at decode time on the final benchmark, so the released size is a working
choice rather than a tuned optimum.

\section{Reproducibility}
\label{app:repro}

Code and checkpoints are available at \url{https://github.com/Eliezer318/SPOT-dLLM}.

\paragraph{Seeds and wall clock.} Table~\ref{tab:main-sdar} and Table~\ref{tab:tsweep} report the mean of
seeds $0$--$2$, each ratio over the draft-free run of the same seed; the other tables are seed $0$ unless their caption says otherwise. At $T{=}0$ the
decoder is deterministic: every sample has the same tokens and forwards under every seed, and only wall clock
moves. Over Table~\ref{tab:main-sdar}'s $40$ one-commit cells at $T{=}0$, the max--min spread of the throughput
ratio over seeds has median $0.05$ and maximum $0.15$; over its $40$ threshold cells, whose ratios are two to
three times larger, median $0.10$ and maximum $0.24$. At $T{=}1$ the target's own sampling moves the forwards as
well: the NFE standard deviation over seeds has median $0.33$ (one-commit) and $0.27$ (threshold) and maximum
$1.1$ and $2.1$, and the largest ratio spread, $1.22$, is on \sysSets{}'s HumanEval cells, the shortest task at
$164$ samples. The mean accuracy over seeds stays within $0.04$ of the control's in every one-commit cell. In
Table~\ref{tab:tsweep-full} the standard deviation of a ratio over seeds is at most $0.14$.

\paragraph{Oracle pairing.} Oracle runs are unpadded. Their tokens per second use the net wall clock,
with the oracle's own lookahead forwards subtracted, and every oracle row of a setting is divided by that
setting's draft-free run, measured under the same one-run-per-card protocol (Appendix~\ref{app:ext-oracle}).

\paragraph{Software.} PyTorch 2.7.1, CUDA 12.8, Transformers 4.55.4; one run per card, never co-located (machines in Section~\ref{sec:exp-setup}): the first
machine runs the one-commit \sys{} experiments, the second Table~\ref{tab:main-sdar}, the SDAR-4B $T{=}1$ table
(Table~\ref{tab:main-4b-t1}) and every threshold-decoding (\sysSets{}) run.
\section{\sysSets{} details}
\label{app:sets}
The main tables use \emph{drafter D}, trained on general chat prompts; \emph{drafter A}, the same drafter trained on task
prompts, appears only in this appendix (Appendix~\ref{app:sets-data}).

\begin{figure}[ht]
\centering
\newcommand{\blk}[3]{\foreach \c [count=\i from 0] in {#3}{\draw[fill=\c] (#1+\i*0.27,#2) rectangle ++(0.27,0.27);}}
\colorlet{cm}{black!22}\colorlet{cc}{white}\colorlet{ca}{blue!35}\colorlet{cb}{orange!55}
\resizebox{\linewidth}{!}{%
\begin{tikzpicture}[font=\scriptsize, arr/.style={-{Stealth[length=4pt]}, thick},
  box/.style={draw, rounded corners=2pt, align=center, inner sep=3pt}]
\node at (1.08,2.25) {stale pass of $x_t$};
\blk{0}{1.75}{cc,cc,cm,cm,cm,cm,cm,cm}
\node[box, anchor=north west, align=left] at (0,1.55)
  {stale top-$32$ probs\\ $16$ candidate tokens\\ \quad(token embeddings)\\ hidden state (PCA)\\ path features};
\draw[arr] (2.6,0.9) -- (3.07,0.9);
\begin{scope}[xshift=0.35cm]
\node[box, text width=2.0cm] at (3.85,0.9) {\textbf{set drafter}\\ $6$ layers, $5.1$M\\[3pt] per position:\\ $q_s$ = P(clears $\tau$\\ at the next step),\\ token among\\ the candidates};
\draw[arr] (4.95,0.9) -- (5.4,0.9);
\node at (6.58,1.45) {root $x_t$};
\blk{5.5}{0.77}{cc,cc,cm,cm,cm,cm,cm,cm}
\blk{8.1}{1.7}{cc,cc,ca,ca,cm,ca,cm,cm}
\blk{8.1}{0.77}{cc,cc,ca,ca,cm,cm,cm,cm}
\blk{8.1}{-0.16}{cc,cc,ca,cm,cm,cm,cm,cm}
\blk{10.7}{1.7}{cc,cc,cc,cc,cb,cc,cb,cb}
\blk{10.7}{0.77}{cc,cc,cc,cc,cb,cb,cm,cm}
\draw[densely dashed] (10.7,2.3) rectangle ++(2.16,0.27);
\node[anchor=south] at (11.78,2.57) {next block, all masked};
\draw[line width=1.6pt, green!45!black] (7.66,0.9) -- (8.1,1.83);
\draw (7.66,0.9) -- (8.1,0.9); \draw (7.66,0.9) -- (8.1,-0.03);
\draw[line width=1.6pt, green!45!black] (10.26,1.83) -- (10.7,1.83);
\draw (10.26,0.9) -- (10.7,0.9);
\node[anchor=west] at (8.1,2.12) {step 1: set $\{3,4,6\}$ + tokens};
\node[anchor=west] at (8.1,-0.4) {top-$K$ sets per node};
\node[anchor=north, green!45!black] at (11.78,1.65) {accepted};
\node[box, anchor=north west, text width=7.15cm, align=left] at (5.5,-0.75)
  {one target forward on the root and every node; under exact acceptance a node $z$ with parent $y$ is
   accepted iff $z=\Phi_\tau(y)$, so the output is the threshold decoder's own. The draft that
   completes the block also carries the next block (dashed), and the next block's refresh forward
   verifies new drafts.};
\end{scope}
\foreach \c/\t [count=\j from 0] in {cc/committed, cm/masked, ca/step 1, cb/step 2}{
  \draw[fill=\c] (0,-0.9-\j*0.38) rectangle ++(0.27,0.27); \node[anchor=west] at (0.32,-0.765-\j*0.38) {\t};}
\end{tikzpicture}}
\caption{One \sysSets{} round. The set drafter reads the stale pass and scores, for each masked
position, whether it clears the bar at the next target step and which candidate token it takes. A tree
node is a whole target step: the children of a node are its most probable commit sets, filled with the
predicted tokens. One target forward verifies every node by state equality.}
\label{fig:sets-round}
\end{figure}

\subsection{Drafter, training and tree}
\label{app:sets-drafter}
\paragraph{Threshold transition.} With $c_s(x)=\max_v p_{\theta,s}(v\mid x,C)$, the threshold decoder
commits $S_\tau(x)=\{s\in\mathcal M(x):c_s(x)\ge\tau\}$, or $\{s^\star(x)\}$ if that set is empty, each
with its argmax token at $T{=}0$. At $T{>}0$ each position draws $v_G(s)$ with the shared noise
and the bar tests $p_{\theta,s}(v_G(s)\mid x,C)$. Proposition~\ref{prop:greedy} and Steps~2 and~3 of the proof
of Proposition~\ref{prop:match} use only that the next state is a deterministic function of the current state (given
$G$), so they hold for $\Phi_\tau$ unchanged: for every realization of $G$, the speculative decoder reproduces the
shared-noise threshold decoder, as for the one-commit rule (Appendix~\ref{app:proof-match}).

\paragraph{Drafter and training.} The encoder is a six-layer transformer over the masked positions of a hypothetical
state ($d_{\mathrm{model}}{=}256$, $8$ heads), with PCA bases for the target's token embeddings and hidden states. A
\emph{commit} head gives $q_s$, a \emph{token} head scores the $16$ candidates, and a \emph{fallback} head scores the
position that commits when none clears the bar. The drafter trains on about $1$M prefix-conditional examples from
threshold-decoded trajectories ($\tau{=}0.9$, $T{=}0$), labelled with the set committed next (binary cross-entropy), the
fallback position and each committed token's index among the candidates (cross-entropy, weights $0.3$ and $0.5$); Adam,
learning rate $5{\times}10^{-4}$, $12$ epochs, about $30$ minutes on one card, and a temperature of $1.12$ calibrates $q$.
On held-out trajectories (drafter A, Appendix~\ref{app:sets-data}) the commit head reaches AUROC $0.95$, and the token head
picks the committed token for $87/73/62\%$ of the positions one, two and three steps ahead, against $83/60/44\%$ for the
stale argmax. The other targets' drafters use the same recipe.

\paragraph{Tree, block ends and floor.} Under the commit head the probability of committing exactly $S$ is
$\prod_{s\in S}q_s\prod_{s\notin S}(1-q_s)$, with the empty set replaced by the fallback position. The children of a
node are its top-$8$ sets, enumerated exactly, plus one sibling of the best set that swaps its least certain token for
the runner-up. A draft that completes the block also holds the next block, all masked, so the verifying forward returns
the next block's first step, and the next block's cache-refresh forward verifies drafts made from it. Each rule and
drafter has its own floor, set by the rule of Appendix~\ref{app:gamma-calib} on held-out MBPP samples, which picks
$\gamma{=}0.02$, or $\gamma{=}0$ for some rules on SDAR-1.7B and LLaDA2.0-mini. The drafter takes $5.6$\,ms per drafted round, $5.8\%$ of
the wall time on SDAR-8B. Table~\ref{tab:sets-ladder} gives the numbers behind Figure~\ref{fig:sets-ladder}.

\begin{table}[h]
\centering\small
\caption{What each part of \sysSets{} adds (Figure~\ref{fig:sets-ladder}): SDAR-8B, drafter D, $\tau{=}0.9$, $T{=}0$,
cap $8$, seed $0$, \THRN{final_sets_ladder}. Throughput ratio over standard decoding (NFE).}
\label{tab:sets-ladder}\label{tab:sets-ablation}
\setlength{\tabcolsep}{4pt}
\resizebox{\linewidth}{!}{\begin{tabular}{lccccc}
\toprule
tok/s ratio$\uparrow$ (NFE$\downarrow$) & GSM8K & MBPP & HumanEval & MATH500 & mean \\
\midrule
set drafter + set tree & 6.13 (35.6)\SZ & 8.15 (24.8)\SZ & 8.19 (26.8)\SZ & 4.43 (48.6)\SZ & 6.72\SZ \\
+ adaptive floor (calibrated $\gamma$) & 6.22 (36.5)\SZ & 8.37 (25.5)\SZ & 8.21 (27.6)\SZ & 4.61 (49.3)\SZ & 6.85\SZ \\
+ cross-block drafting & 6.63 (34.0)\SZ & 8.69 (24.4)\SZ & 8.94 (25.2)\SZ & 4.85 (46.7)\SZ & 7.28\SZ \\
+ draft refresh (\sysSets{}-$\gamma$, exact) & 6.94 (32.5)\SZ & 8.95 (23.8)\SZ & 9.28 (24.0)\SZ & 4.98 (45.1)\SZ & 7.54\SZ \\
+ max-commit, product (\sysSets{}-$\gamma$) & 7.33 (30.6)\SZ & 9.17 (22.8)\SZ & 9.80 (22.7)\SZ & 5.29 (42.3)\SZ & 7.90\SZ \\
\bottomrule
\end{tabular}
}
\end{table}

\subsection{Max-commit rules}
\label{app:sets-maxcommit}
State equality accepts a drafted step only if it equals the target's set. A relaxed rule also accepts a step whose
tokens are the target's top-$1$ tokens and whose positions pass a looser test, and max-commit takes the accepted path
with the most commits. From strict to loose, \emph{cut} accepts the set some higher bar $\tau'\ge\tau$ would commit,
\emph{subset} any part of the target's set, and \emph{product} any step whose confidences have a geometric mean of at
least $\tau$, so each rule accepts every step that a stricter one accepts. The less strict the rule, the more speed-up
(Table~\ref{tab:sets-maxcommit}). Product gains $4$--$6\%$ over exact on every target, and the one exception to the order
is cut on LLaDA2.0-mini ($3\%$ below exact). The cost is fidelity to the threshold decoder, as the share of identical
outputs falls from exact to product (exact itself gives identical outputs on $27$--$62\%$ of the prompts, the rest
differing through floating-point batch effects), while the
accuracy stays within $0.04$ of exact on every task. Drafter A gives the same order on every target.

\begin{table}[h]
\centering\small
\caption{Max-commit rules on \sysSets{}-$\gamma$ (drafter D, cap $8$, each rule with its own floor), $\tau{=}0.9$,
$T{=}0$, seed $0$, mean over the four tasks ($n{=}350$ per task, HumanEval $164$; LLaDA2.0-mini $100$). Cell: throughput
ratio over standard decoding / accuracy / outputs identical to the threshold decoder's.}
\label{tab:sets-maxcommit}
\setlength{\tabcolsep}{4pt}
\begin{tabular}{lcccc}
\toprule
rule & SDAR-8B & SDAR-4B & SDAR-1.7B & LLaDA2.0-mini \\
\midrule
exact & 7.54 / 0.52 / $59\%$\SZ & 7.81 / 0.46 / $62\%$\SZ & 6.61 / 0.37 / $51\%$\SZ & 5.09 / 0.38 / $27\%$\SZ \\
cut & 7.60 / 0.52 / $57\%$\SZ & 7.85 / 0.46 / $60\%$\SZ & 6.64 / 0.37 / $48\%$\SZ & 4.94 / 0.39 / $26\%$\SZ \\
subset & 7.73 / 0.53 / $55\%$\SZ & 7.94 / 0.46 / $58\%$\SZ & 6.73 / 0.38 / $43\%$\SZ & 5.29 / 0.39 / $22\%$\SZ \\
product & 7.90 / 0.52 / $51\%$\SZ & 8.26 / 0.46 / $54\%$\SZ & 6.88 / 0.37 / $41\%$\SZ & 5.31 / 0.39 / $20\%$\SZ \\
\bottomrule
\end{tabular}

\end{table}

\subsection{Training data}
\label{app:sets-data}
Our main drafter (D) trains on general chat prompts only, the first user turn of ShareGPT and UltraChat-200K
conversations, half from each, after a $13$-gram check against every test set. Drafter A instead trains on the GSM8K,
MATH and MBPP train splits and the first $4{,}000$ CodeAlpaca prompts. Everything else is the same (target, chat
template, architecture, hyperparameters, seed, number of examples and steps), all responses come from the target, and
each drafter fits its hidden-state basis on its own prompts. As expected, A is faster on the tasks it was trained for,
and the two are almost equal on HumanEval, which neither saw (Table~\ref{tab:sets-data}). D keeps $92$--$97\%$ of A's
speed-up over the threshold decoder on GSM8K, MATH500 and MBPP of the SDAR targets, $87$--$96\%$ on LLaDA2.0-mini, and
$97$--$99\%$ on HumanEval. On SDAR-8B, A reaches $8.2\times$ the throughput of standard decoding on average at $T{=}0$ and
$D8$, against $7.9\times$ for drafter D (Table~\ref{tab:sets-a}).

\begin{table}[h]
\centering\small
\caption{Drafter D (general chat) / drafter A (task prompts): speed-up over the threshold decoder without drafts,
\sysSets{}-$\gamma$, product rule, cap $8$, $\tau{=}0.9$, $T{=}0$, seed $0$, $n{=}350$ per task (HumanEval $164$;
LLaDA2.0-mini $100$). Neither drafter saw HumanEval.}
\label{tab:sets-data}
\setlength{\tabcolsep}{4pt}
{\begin{tabular}{lcccc}
\toprule
target & GSM8K & MATH500 & MBPP & HumanEval (unseen) \\
\midrule
SDAR-8B & 1.61 / 1.72\SZ & 1.60 / 1.65\SZ & 1.56 / 1.65\SZ & 1.60 / 1.61\SZ \\
SDAR-4B & 1.61 / 1.74\SZ & 1.60 / 1.69\SZ & 1.59 / 1.73\SZ & 1.51 / 1.55\SZ \\
SDAR-1.7B & 1.61 / 1.72\SZ & 1.49 / 1.58\SZ & 1.53 / 1.65\SZ & 1.52 / 1.56\SZ \\
LLaDA2.0-mini & 1.47 / 1.53\SZ & 1.51 / 1.57\SZ & 1.47 / 1.69\SZ & 1.14 / 1.18\SZ \\
\bottomrule
\end{tabular}}

\end{table}

\begin{table}[h]
\centering\scriptsize
\caption{Drafter D (general chat, as in Table~\ref{tab:main-sdar}) against drafter A (task prompts) on SDAR-8B b32
under the threshold decoder ($\tau{=}0.9$): \nfe{NFE per sample} (\tps{throughput ratio} over standard decoding), best in
bold, second best underlined. \SEEDN{sets_a_sdar8b}, \THRN{sets_a_sdar8b}, as Table~\ref{tab:main-sdar}.}
\label{tab:sets-a}
\setlength{\tabcolsep}{4pt}
\resizebox{\linewidth}{!}{\begin{tabular}{lcccccccc}
\toprule
\nfe{NFE}$\downarrow$ (\tps{tok/s}$\uparrow$) & \multicolumn{2}{c}{GSM8K} & \multicolumn{2}{c}{MBPP} & \multicolumn{2}{c}{HumanEval} & \multicolumn{2}{c}{MATH500} \\
\cmidrule(lr){2-3}\cmidrule(lr){4-5}\cmidrule(lr){6-7}\cmidrule(lr){8-9}
method & $D8$ & $D20$ & $D8$ & $D20$ & $D8$ & $D20$ & $D8$ & $D20$ \\
\midrule
\rowcolor[gray]{0.90}$T{=}0$ & \multicolumn{2}{c}{acc.\ 0.87--0.89} & \multicolumn{2}{c}{acc.\ 0.39--0.41} & \multicolumn{2}{c}{acc.\ 0.67--0.71} & \multicolumn{2}{c}{acc.\ 0.10--0.11} \\
no draft & \multicolumn{2}{c}{\nfe{58.8}$\downarrow$ (\tps{4.51}$\uparrow$)\SZ} & \multicolumn{2}{c}{\nfe{44.0}$\downarrow$ (\tps{5.89}$\uparrow$)\SZ} & \multicolumn{2}{c}{\nfe{43.4}$\downarrow$ (\tps{6.13}$\uparrow$)\SZ} & \multicolumn{2}{c}{\nfe{80.1}$\downarrow$ (\tps{3.34}$\uparrow$)\SZ} \\
\midrule
\rowcolor{oursrow}\sysSets{}-$\gamma$, general chat & \nfe{30.6}\,(\tps{7.30})\SZ & \nfe{30.2}\,(\tps{7.26})\SZ & \nfe{22.8}\,(\tps{9.28})\SZ & \nfe{22.4}\,(\tps{\underline{9.15}})\SZ & \nfe{\underline{22.7}}\,(\tps{\underline{9.85}})\SZ & \nfe{\underline{22.6}}\,(\tps{\underline{9.62}})\SZ & \nfe{\underline{42.3}}\,(\tps{\underline{5.24}})\SZ & \nfe{\underline{41.8}}\,(\tps{\underline{5.22}})\SZ \\
\rowcolor{oursrow}\sysSets{}-$\gamma$, exact, general chat & \nfe{32.5}\,(\tps{6.93})\SZ & \nfe{32.2}\,(\tps{6.86})\SZ & \nfe{23.8}\,(\tps{8.97})\SZ & \nfe{23.4}\,(\tps{8.72})\SZ & \nfe{24.0}\,(\tps{9.35})\SZ & \nfe{23.7}\,(\tps{9.22})\SZ & \nfe{45.1}\,(\tps{4.96})\SZ & \nfe{44.5}\,(\tps{4.92})\SZ \\
\rowcolor{oursrow}\sysSets{}-$\gamma$, task prompts & \nfe{\textbf{28.3}}\,(\tps{\textbf{7.83}})\SZ & \nfe{\textbf{27.9}}\,(\tps{\textbf{7.76}})\SZ & \nfe{\textbf{20.9}}\,(\tps{\textbf{9.79}})\SZ & \nfe{\textbf{20.2}}\,(\tps{\textbf{9.40}})\SZ & \nfe{\textbf{22.6}}\,(\tps{\textbf{9.90}})\SZ & \nfe{\textbf{22.2}}\,(\tps{\textbf{9.76}})\SZ & \nfe{\textbf{41.0}}\,(\tps{\textbf{5.44}})\SZ & \nfe{\textbf{40.3}}\,(\tps{\textbf{5.42}})\SZ \\
\rowcolor{oursrow}\sysSets{}-$\gamma$, exact, task prompts & \nfe{\underline{30.4}}\,(\tps{\underline{7.31}})\SZ & \nfe{\underline{29.8}}\,(\tps{\underline{7.31}})\SZ & \nfe{\underline{21.7}}\,(\tps{\underline{9.58}})\SZ & \nfe{\underline{21.0}}\,(\tps{9.05})\SZ & \nfe{23.6}\,(\tps{9.53})\SZ & \nfe{23.3}\,(\tps{9.42})\SZ & \nfe{43.9}\,(\tps{5.06})\SZ & \nfe{43.0}\,(\tps{5.13})\SZ \\
\specialrule{0.6pt}{2.5pt}{0pt}
\rowcolor[gray]{0.90}$T{=}1$ & \multicolumn{2}{c}{acc.\ 0.87--0.89} & \multicolumn{2}{c}{acc.\ 0.34--0.36} & \multicolumn{2}{c}{acc.\ 0.65--0.67} & \multicolumn{2}{c}{acc.\ 0.09--0.10} \\
no draft & \multicolumn{2}{c}{\nfe{61.5}$\downarrow$ (\tps{4.33}$\uparrow$)\SZ} & \multicolumn{2}{c}{\nfe{46.9}$\downarrow$ (\tps{5.54}$\uparrow$)\SZ} & \multicolumn{2}{c}{\nfe{44.4}$\downarrow$ (\tps{6.07}$\uparrow$)\SZ} & \multicolumn{2}{c}{\nfe{81.1}$\downarrow$ (\tps{3.31}$\uparrow$)\SZ} \\
\midrule
\rowcolor{oursrow}\sysSets{}-$\gamma$, general chat & \nfe{34.0}\,(\tps{6.44})\SZ & \nfe{33.5}\,(\tps{6.41})\SZ & \nfe{27.5}\,(\tps{7.53})\SZ & \nfe{27.2}\,(\tps{\textbf{7.25}})\SZ & \nfe{\underline{26.2}}\,(\tps{8.37})\SZ & \nfe{\underline{25.9}}\,(\tps{8.29})\SZ & \nfe{\underline{45.8}}\,(\tps{4.74})\SZ & \nfe{\underline{45.0}}\,(\tps{4.74})\SZ \\
\rowcolor{oursrow}\sysSets{}-$\gamma$, exact, general chat & \nfe{36.0}\,(\tps{6.21})\SZ & \nfe{35.4}\,(\tps{6.21})\SZ & \nfe{28.5}\,(\tps{7.40})\SZ & \nfe{28.1}\,(\tps{7.14})\SZ & \nfe{27.2}\,(\tps{8.30})\SZ & \nfe{26.8}\,(\tps{8.17})\SZ & \nfe{48.6}\,(\tps{4.58})\SZ & \nfe{47.7}\,(\tps{4.57})\SZ \\
\rowcolor{oursrow}\sysSets{}-$\gamma$, task prompts & \nfe{\textbf{32.0}}\,(\tps{\textbf{6.78}})\SZ & \nfe{\textbf{31.3}}\,(\tps{\textbf{6.78}})\SZ & \nfe{\textbf{26.0}}\,(\tps{\textbf{7.74}})\SZ & \nfe{\textbf{25.4}}\,(\tps{\underline{7.23}})\SZ & \nfe{\textbf{25.7}}\,(\tps{\textbf{8.61}})\SZ & \nfe{\textbf{25.4}}\,(\tps{\textbf{8.47}})\SZ & \nfe{\textbf{44.2}}\,(\tps{\textbf{4.93}})\SZ & \nfe{\textbf{43.8}}\,(\tps{\textbf{4.87}})\SZ \\
\rowcolor{oursrow}\sysSets{}-$\gamma$, exact, task prompts & \nfe{\underline{33.9}}\,(\tps{\underline{6.56}})\SZ & \nfe{\underline{33.1}}\,(\tps{\underline{6.60}})\SZ & \nfe{\underline{26.9}}\,(\tps{\underline{7.63}})\SZ & \nfe{\underline{26.1}}\,(\tps{7.18})\SZ & \nfe{26.9}\,(\tps{\underline{8.40}})\SZ & \nfe{26.4}\,(\tps{\underline{8.32}})\SZ & \nfe{46.8}\,(\tps{\underline{4.75}})\SZ & \nfe{46.3}\,(\tps{\underline{4.75}})\SZ \\
\bottomrule
\end{tabular}
}
\end{table}

\subsection{Transfer protocol}
\label{app:sets-transfer}
For each target we decode $99$ training prompts with the threshold rule (the first $25$ of the GSM8K, MATH, MBPP and
CodeAlpaca train splits, minus one on a decontamination list). They give the target's hidden-state basis, its token-embedding table when the tokenizer differs,
and the calibration trajectories. Each source drafter then refits only its input layers ($0.27$M of its $5.1$M
parameters) for $20$ epochs on these trajectories, with no validation-based selection, and keeps its source's floor; the
encoder and the heads stay as trained. LLaDA-8B and Dream-7B attend over the whole sequence, so, as for \sys{}'s rows,
drafted runs and the control use the approximate prefix cache of \citet{wu_fast-dllm_2026}. A transferred drafter keeps
$92$--$105\%$ of the native drafter's throughput (Table~\ref{tab:sets-transfer-full-d}). Its NFE is within $6\%$ of the
native drafter's on the SDAR targets, on either side, and $4$--$16\%$ higher on LLaDA2.0-mini, whose tokenizer differs.
On LLaDA-8B and Dream-7B, on which no drafter was trained, the four drafters land within $7\%$ of each other in NFE and reach
$2.5$--$7.3\times$ the throughput of standard decoding. Drafter A transfers less well, as a transferred A takes
$2$--$17\%$ more NFE than the native A on the SDAR targets and $4$--$33\%$ more on LLaDA2.0-mini.

\begin{table}[h]
\centering\scriptsize
\caption{Transfer matrix under threshold decoding, drafter D of each source ($\tau{=}0.9$, $T{=}0$, seed $0$;
$n{=}350$ per task, HumanEval $164$, LLaDA2.0-mini as target $100$). Cell: \sysSets{}-$\gamma$ (product rule, cap $8$) NFE
(throughput ratio over standard decoding). $^{*}$native drafter; $^{\dagger}$approximate prefix cache.}
\label{tab:sets-transfer-full-d}
\setlength{\tabcolsep}{3pt}
\resizebox{\linewidth}{!}{{\begin{tabular}{llcccc}
\toprule
target & drafter & GSM8K & MBPP & HumanEval & MATH500 \\
\midrule
\multirow{4}{*}{SDAR-8B} & SDAR-8B D$^{*}$ & 30.6\,(7.33)\SZ & 22.8\,(9.17)\SZ & 22.7\,(9.80)\SZ & 42.3\,(5.29)\SZ \\
 & SDAR-4B D & 31.3\,(7.14)\SZ & 22.2\,(9.23)\SZ & 23.7\,(9.39)\SZ & 43.9\,(5.10)\SZ \\
 & SDAR-1.7B D & 32.2\,(6.97)\SZ & 22.3\,(9.11)\SZ & 23.8\,(9.40)\SZ & 44.9\,(4.99)\SZ \\
 & mini D & 30.1\,(7.03)\SZ & 22.3\,(8.80)\SZ & 23.3\,(9.19)\SZ & 41.7\,(5.07)\SZ \\
\midrule
\multirow{4}{*}{SDAR-4B} & SDAR-8B D & 30.8\,(7.30)\SZ & 21.1\,(10.45)\SZ & 22.2\,(10.24)\SZ & 45.1\,(4.96)\SZ \\
 & SDAR-4B D$^{*}$ & 30.9\,(7.34)\SZ & 22.0\,(10.22)\SZ & 21.8\,(10.42)\SZ & 45.0\,(5.04)\SZ \\
 & SDAR-1.7B D & 31.7\,(7.08)\SZ & 21.2\,(10.40)\SZ & 22.1\,(10.12)\SZ & 46.4\,(4.87)\SZ \\
 & mini D & 30.6\,(7.03)\SZ & 21.3\,(10.13)\SZ & 22.0\,(9.95)\SZ & 43.7\,(4.91)\SZ \\
\midrule
\multirow{4}{*}{SDAR-1.7B} & SDAR-8B D & 43.9\,(5.00)\SZ & 21.8\,(9.97)\SZ & 27.3\,(8.16)\SZ & 52.7\,(4.17)\SZ \\
 & SDAR-4B D & 44.0\,(4.93)\SZ & 21.6\,(10.19)\SZ & 27.5\,(8.15)\SZ & 52.7\,(4.22)\SZ \\
 & SDAR-1.7B D$^{*}$ & 43.4\,(5.04)\SZ & 22.5\,(9.90)\SZ & 26.7\,(8.36)\SZ & 51.6\,(4.23)\SZ \\
 & mini D & 43.0\,(4.81)\SZ & 21.6\,(9.89)\SZ & 26.3\,(8.02)\SZ & 50.7\,(4.05)\SZ \\
\midrule
\multirow{4}{*}{LLaDA2.0-mini} & SDAR-8B D & 48.7\,(3.68)\SZ & 37.7\,(4.53)\SZ & 19.8\,(8.61)\SZ & 47.7\,(4.04)\SZ \\
 & SDAR-4B D & 48.3\,(3.73)\SZ & 39.0\,(4.34)\SZ & 19.7\,(8.84)\SZ & 48.0\,(4.09)\SZ \\
 & SDAR-1.7B D & 48.6\,(3.81)\SZ & 40.1\,(4.30)\SZ & 19.9\,(8.70)\SZ & 48.5\,(3.97)\SZ \\
 & mini D$^{*}$ & 44.0\,(3.96)\SZ & 34.7\,(4.68)\SZ & 19.0\,(8.41)\SZ & 43.7\,(4.19)\SZ \\
\midrule
\multirow{4}{*}{LLaDA-8B$^{\dagger}$} & SDAR-8B D & 40.6\,(3.44)\SZ & 22.4\,(5.97)\SZ & 36.6\,(3.92)\SZ & 48.7\,(2.64)\SZ \\
 & SDAR-4B D & 40.1\,(3.45)\SZ & 21.5\,(6.06)\SZ & 36.2\,(3.92)\SZ & 48.4\,(2.62)\SZ \\
 & SDAR-1.7B D & 41.2\,(3.41)\SZ & 21.7\,(5.95)\SZ & 36.5\,(3.88)\SZ & 48.9\,(2.71)\SZ \\
 & mini D & 38.7\,(3.07)\SZ & 21.6\,(5.89)\SZ & 34.8\,(3.66)\SZ & 46.7\,(2.52)\SZ \\
\midrule
\multirow{4}{*}{Dream-7B$^{\dagger}$} & SDAR-8B D & 30.2\,(4.82)\SZ & 19.2\,(7.29)\SZ & 23.6\,(6.40)\SZ & 48.5\,(2.94)\SZ \\
 & SDAR-4B D & 30.4\,(4.70)\SZ & 19.4\,(7.32)\SZ & 23.3\,(6.43)\SZ & 48.8\,(2.93)\SZ \\
 & SDAR-1.7B D & 30.4\,(4.92)\SZ & 18.9\,(7.30)\SZ & 23.4\,(6.40)\SZ & 49.2\,(2.79)\SZ \\
 & mini D & 30.0\,(4.46)\SZ & 19.0\,(7.29)\SZ & 23.1\,(6.14)\SZ & 49.0\,(2.69)\SZ \\
\bottomrule
\end{tabular}}
}
\end{table}

\section{Terms, common questions and released resources}
\label{app:readers}

This appendix gathers the paper's terms in one place, answers the questions a reader is likely to
bring to it, each with a pointer to the full account, and lists what is released.

\paragraph{Keywords.} Speculative decoding, diffusion language models (DLMs), block diffusion,
commit order, drafter, draft tree, coupled-noise verification, lossless acceleration, SDAR, LLaDA,
Dream, GSM8K, MBPP, HumanEval, MATH500, dynamic threshold decoding.

\subsection{Terms}

\begin{description}
\item[Stale conditionals.] The most recently available target conditionals at the masked positions
of a block. They are stale because hypothetical future commitments change the true target
distribution, but no additional target forward is performed while drafting
(Section~\ref{sec:method-drafter}).
\item[Commit order.] The sequence of positions the target commits next, one per forward under exact
decoding. The drafter predicts it as $q_\phi(s\mid y)$, a distribution over the target's next
committed position given a hypothetical state $y$ (Section~\ref{sec:method-drafter}).
\item[Draft tree.] Candidate block states grown by recursively querying the prefix-conditional
predictor, high-probability branches first, up to a budget of $D$ drafted states that the target
verifies in one batched forward (Section~\ref{sec:method-drafter}).
\item[Coupled-noise match.] Verification at $T>0$ in which the drafter applies the same Gumbel noise
as the target, so that, conditioned on the noise, the target transition is deterministic and
verification reduces to state equality (Section~\ref{sec:method-verification}).
\item[SPOT (Speculating Positions Order Trajectory).] The method: a $0.62$M-parameter order model,
a three-layer transformer encoder over the target's stale distributions at the masked positions,
predicts the target's next committed position; its predictions grow a draft tree that the target
verifies in one batched forward (Section~\ref{sec:method}).
\item[SPOT-$\gamma$.] SPOT with a prefix-mass floor $\gamma$ that discards candidates
below it, so that the realized verification budget adapts to drafter confidence; calibrated per target, $\gamma=0.02$ on the SDAR targets and $0$ on LLaDA2.0-mini
(Section~\ref{sec:method-drafter}, Appendix~\ref{app:gamma-calib}).
\item[NFE and throughput ratio.] NFE is the number of target forward passes per sample; the
throughput ratio is measured against the draft-free run of the same task, model and temperature on the same
machine; within $5\%$ we say that two methods match (Section~\ref{sec:exp-setup}, Appendix~\ref{app:accounting}).
\item[Dynamic threshold decoding.] The decoding rule of Fast-dLLM~\citep{wu_fast-dllm_2026}: each forward commits every
masked position whose confidence is at least $\tau$, or the most confident one if none is
(Section~\ref{sec:method-sets}).
\item[Commit set.] The positions one threshold step commits; a \sysSets{} draft node is a whole step.
\item[\sysSets{}-$\gamma$.] \sysSets{} unpadded, with a calibrated mass floor $\gamma$ and a cap of $D$ drafts,
drafting across blocks, draft refresh, and max-commit with the product rule; ``\sysSets{}-$\gamma$, exact'' keeps
state-equality acceptance. Its drafter is trained on general chat prompts.
\item[Max-commit.] A relaxed acceptance rule for threshold decoding that may accept a drafted step that
is not exactly the target's set; it changes the output.
\end{description}

\subsection{Common questions}

\paragraph{What problem does SPOT solve?}
SPOT accelerates diffusion language models without changing their output distribution. Exact
decoding fills in only one masked position per model pass, while faster multi-token schemes are
lossy. Speculative decoding is the standard lossless acceleration for autoregressive models
\citep{leviathan_fast_2023,chen_accelerating_2023}, but a DLM must choose both what to generate and
where. SPOT drafts several future commitments with a small learned model, and the target verifies
all of them in a single pass (Section~\ref{sec:intro}).

\paragraph{Why is commit order, not the token, the bottleneck?}
The tokens are largely known in advance; the order is not. The stale argmax matches the true token
in $92\%$ of first-block commits and $\ge 90\%$ thereafter, while the probability that the previous
round's top-confidence position commits next drops to $3\%$ by the ninth block. Live oracles show that
true positions are worth more than true tokens on every task (Section~\ref{sec:obs-oracle},
Figure~\ref{fig:oracle}).

\paragraph{How does SPOT draft and verify?}
SPOT reads the target's stale top-$32$ probabilities at each masked position, predicts the next
committed position, and repeats the prediction after each hypothetical commitment to grow a tree of
block states. The target evaluates the current and drafted states in one batched forward and
accepts the longest prefix consistent with its own trajectory. The drafter is trained entirely from
draft-free target trajectories and uses no target hidden states, token embeddings or text
(Section~\ref{sec:method}).

\paragraph{Is the output exact when the target samples?}
Yes, for targets with an exact prefix cache. At $T=0$, state-equality verification reproduces the draft-free target trajectory for any
drafter (Proposition~\ref{prop:greedy}). At $T>0$, the drafter and the target share per-position
Gumbel noise, following the coupling of \citet{daliri_coupling_2025}; conditioned on the noise, the
target transition is deterministic, and the drafted output equals the draft-free output for every draw of the
noise (Proposition~\ref{prop:match}). Draft quality therefore affects speed, never correctness
(Section~\ref{sec:method-verification}).

\paragraph{How does SPOT differ from Spiffy, SSD and BlockSpec?}
Spiffy calibrates a static draft graph offline \citep{agrawal_structuring_2026}, SSD orders
positions by confidence into a self-drafted chain \citep{gao_self_2025}, and a BlockSpec-style tree
ranks positions by the target's stale confidences \citep{pan_blockspec_2025}; SPOT learns the commit
order instead. Under one decoder and verifier, SPOT uses $16$--$36\%$ fewer target forwards than
Spiffy's calibrated graph at $1.04$--$1.41\times$ its throughput on SDAR-8B, in every task and budget
(Section~\ref{sec:exp-main}).

\paragraph{Which models and benchmarks does the paper use?}
The headline target is SDAR-8B \citep{cheng_sdar_2026}, a dense block-causal DLM at block width
$32$. Drafters are also trained for SDAR-4B, SDAR-1.7B and the mixture-of-experts LLaDA2.0-mini
\citep{bie_llada20_2025}, and transfer is measured on LLaDA-8B \citep{nie_large_2025} and Dream-7B
\citep{ye_dream_2025}. The benchmarks are GSM8K \citep{cobbe_training_2021}, MBPP
\citep{austin_program_2021}, HumanEval \citep{chen_evaluating_2021} and MATH500
\citep{hendrycks_measuring_2021}, with $n=350$ samples (all $164$ for HumanEval) at generation length $256$,
except where a caption says otherwise; Table~\ref{tab:main-sdar} and Table~\ref{tab:tsweep} are the mean of seeds $0$--$2$, the
other tables seed $0$ unless their caption says otherwise (Section~\ref{sec:exp-setup}).

\paragraph{What are the main results?}
On SDAR-8B, SPOT needs $3.2$--$4.7\times$ fewer target forwards than draft-free decoding at budget
$8$ and $4.1$--$6.8\times$ at budget $20$, at $2.5$--$3.4\times$ the throughput at budget $8$. At
$T=1$ it keeps $16$--$35\%$ fewer forwards than Spiffy at $1.08$--$1.37\times$ its throughput. On top of
dynamic threshold decoding, \sysSets{} reaches $5.2$--$9.9\times$ the throughput of standard decoding at $T=0$ and budget $8$
($7.9\times$ on average), against $3.3$--$6.1\times$ ($5.0\times$) for the threshold decoder alone. SPOT beats or
matches every baseline in every setting at both temperatures
(Sections~\ref{sec:exp-main}, \ref{sec:exp-temp} and~\ref{sec:exp-transfer}).

\paragraph{Does SPOT work beyond SDAR-8B?}
Yes, across six DLMs. The same recipe gives the lowest NFE and highest throughput in every
comparison on SDAR-4B; on SDAR-1.7B, and on LLaDA2.0-mini, which has a different tokenizer, it gives the lowest NFE in
every comparison and beats or matches every baseline in throughput. The LLaDA2.0-mini checkpoint, pointed at SDAR-8B
without retraining, costs $4$--$8\%$ more forwards and still beats every baseline calibrated
there. Checkpoints that never saw LLaDA-8B or Dream-7B cut their $256$ forwards to
$55$--$88$ and $51$--$80$, measured with an approximate prefix cache, so these rows measure
reach rather than losslessness (Section~\ref{sec:exp-transfer}).

\paragraph{What does \sysSets{} add, and is it exact?}
Faster DLM decoders commit every position whose confidence clears a bar $\tau$ (dynamic threshold decoding). There a
drafted step is a whole set of positions with a token for each, so \sysSets{} drafts commit sets and their tokens from
the target's stale distributions, its hidden state and the candidate tokens' embeddings, and grows a tree whose nodes
are whole steps. With state-equality acceptance (``exact'') the output is the threshold decoder's own, up to
floating-point batch effects. Max-commit with
the product rule also accepts a drafted step whose tokens are the target's top choices and whose confidences have a
geometric mean of at least $\tau$; it commits more per forward and changes the output, so its accuracy is reported
next to the exact rows (Section~\ref{sec:method-sets}, Appendix~\ref{app:sets}).
On SDAR-8B at $T=0$ and budget $8$, \sysSets{}-$\gamma$ takes $47$--$48\%$ fewer forwards than the threshold decoder at
$1.57$--$1.62\times$ its throughput; its exact variant still reaches $1.48$--$1.54\times$ (Table~\ref{tab:main-sdar}). Its drafter is
trained on general chat prompts. A drafter trained on the task prompts is faster on those tasks ($8.2\times$ on average on
SDAR-8B); the general chat drafter keeps $87$--$97\%$ of that drafter's speed-up on GSM8K, MATH500 and MBPP, and $97$--$99\%$
on HumanEval, which neither drafter saw (Tables~\ref{tab:sets-data} and~\ref{tab:sets-a}).

\paragraph{What are the limitations?}
The gains diminish when verification is costly or acceptance is low. Verification becomes the
bottleneck at wide blocks, and exact tree packing remains unresolved under bidirectional attention.
Gains also narrow on low-acceptance tasks, and at $T\ge 2$ the order model trained at $T=0$ is
outperformed by noise-aware ranking. \sysSets{} needs a short calibration on each new target, its larger drafter
($5.1$M parameters) matters more when the target forward is cheap, and its main rows use max-commit, which changes the
output (Section~\ref{sec:conclusion}).

\paragraph{Where are the code and the drafters?}
The code, the trained drafters and the instructions to reproduce the experiments are at
\url{https://github.com/Eliezer318/SPOT-dLLM} under the MIT license, and the drafters are mirrored
on Hugging Face at \url{https://huggingface.co/EliezerS/SPOT-dLLM} (Appendix~\ref{app:repro}).

\subsection{Where this work sits}

SPOT brings speculative decoding \citep{leviathan_fast_2023,chen_accelerating_2023} to diffusion
language models \citep{nie_large_2025,ye_dream_2025}, in particular to block-diffusion models that
denoise within a block and generate causally across blocks \citep{arriola_block_2025,cheng_sdar_2026}.
It is compared against speculative decoding for DLMs with self-drafting, draft graphs and confidence
trees \citep{gao_self_2025,agrawal_structuring_2026,pan_blockspec_2025}, and verifies with Gumbel
coupling \citep{daliri_coupling_2025}.

\end{document}